\documentclass{article}

\PassOptionsToPackage{numbers, sort, compress}{natbib}

\usepackage[preprint]{neurips_2026}

\usepackage[dvipsnames,table,xcdraw]{xcolor} 
\usepackage[most]{tcolorbox}
\usepackage[utf8]{inputenc} 
\usepackage[T1]{fontenc}    
\usepackage{hyperref}       
\usepackage{url}            
\usepackage{booktabs}       
\usepackage{amsfonts}       
\usepackage{nicefrac}       
\usepackage{microtype}      
\usepackage{amsmath}
\usepackage{amssymb}
\usepackage{mathtools}
\usepackage{amsthm}
\usepackage{thmtools,thm-restate}
\usepackage{multirow}
\usepackage[capitalize,noabbrev]{cleveref}
\usepackage{subcaption}
\usepackage{stackengine,scalerel}

\theoremstyle{plain}
\newtheorem{theorem}{Theorem}[section]

\newtheorem{lemma}[theorem]{Lemma}

\theoremstyle{definition}
\newtheorem{definition}[theorem]{Definition}

\theoremstyle{remark}
\newtheorem{remark}[theorem]{Remark}

\newcommand\DDDag{%
  \sbox0{\ddag}\stretchrel*{%
  \stackengine{-.6\ht0}{\ddag}{\ddag}{O}{c}{F}{F}{S}}{\ddag}%
}

\usepackage[textsize=tiny]{todonotes}

\usepackage{amsmath,amsfonts,bm}

\newcommand{\apropto}{\mathrel{
  \vcenter{
    \offinterlineskip
    \halign{\hfil$##$\hfil\cr
      \propto\cr
      \noalign{\kern1pt}
      \sim\cr
    }
  }
}}

\newcommand{\sg}{\operatorname{sg}}
\newcommand{\vmu}{\boldsymbol{\mu}}

\def\eqref#1{equation~\ref{#1}}

\def\1{\bm{1}}

\def\veps{{\pmb{\epsilon}}}

\def\rmI{{\mathbf{I}}}

\def\vmu{{\bm{\mu}}}

\def\va{{\bm{a}}}

\def\vg{{\bm{g}}}

\def\vs{{\bm{s}}}

\def\vu{{\bm{u}}}
\def\vv{{\bm{v}}}
\def\vw{{\bm{w}}}
\def\vx{{\bm{x}}}
\def\vy{{\bm{y}}}
\def\vz{{\bm{z}}}

\DeclareMathAlphabet{\mathsfit}{\encodingdefault}{\sfdefault}{m}{sl}
\SetMathAlphabet{\mathsfit}{bold}{\encodingdefault}{\sfdefault}{bx}{n}

\def\gD{{\mathcal{D}}}

\def\gN{{\mathcal{N}}}

\def\sP{{\mathbb{P}}}

\def\sR{{\mathbb{R}}}

\newcommand{\R}{\mathbb{R}}
\newcommand{\Z}{\mathbb{Z}}

\def\eqref#1{(\ref{#1})}

\title{Reward Score Matching: Unifying Reward-based Fine-tuning for Flow and Diffusion Models}

\author{
  Jeongjae Lee$^{*}$ \quad
  Jinho Chang$^{*}$ \quad
  Jeongsol Kim$^{\dagger}$ \quad
  Jong Chul Ye$^{\dagger}$ \\
  Graduate School of AI\\ KAIST, Korea\\
  \texttt{\{jaylee2000, jinhojsk515, jeongsol, jong.ye\}@kaist.ac.kr} \\
  $^{*}$First authors \qquad $^{\dagger}$Corresponding authors
}

\begin{document}

\maketitle

\vspace{-1em}

\begin{abstract}
\vspace{-0.5em}
    Reward-based fine-tuning steers a pretrained diffusion or flow-based generative model toward higher-reward samples while remaining close to the pretrained model.
    Although existing methods are derived from different perspectives, we show that many can be written under a common framework, which we call \emph{reward score matching} (RSM).
    Under this view, alignment becomes score matching against a value-guided target, and the main differences across methods reduce to the construction of the {\em value-guidance estimator} and the {\em effective optimization strength} across timesteps.
    This unification clarifies the bias--variance--compute tradeoffs of existing designs, and distinguishes core optimization components from auxiliary mechanisms that add complexity without clear benefit.
    Guided by this perspective, we develop simpler, more efficient redesigns across representative differentiable and black-box reward alignment tasks. Overall, RSM turns a seemingly fragmented collection of reward-based fine-tuning methods into a smaller, more interpretable, and more actionable design space.
    \vspace{-1em}
\end{abstract}
\begin{center}
\textbf{Code:} \url{https://github.com/jaylee2000/rsm} \\
\end{center}

\begin{table*}[!t]
\vspace{-1em}
\centering
\small
\caption{
\textbf{Various methods admit special cases of RSM.}
The common RSM loss $\mathcal{L}(\theta)$ is given by
$\mathbb{E}
\left[ C_1(t_i) \Big(
\|\vs^\theta_{t_i} -
(\vs^\text{ref}_{t_i}
+\gamma(t_i)\textcolor{ForestGreen}{\hat{\mathbf{\Psi}}_{t_i}})\|^2
+C_2(t_i)\|\vs^\theta_{t_i}-\vs^{\theta^\dagger}_{t_i}\|^2
\Big)\right]$.
The value gradient estimator $\textcolor{ForestGreen}{\hat{\mathbf{\Psi}}_{t_i}}$
is specified by estimator type, lookahead depth $j$, and branching width $K_i$.
The \emph{Normalized Influence Metric} $h(t_i)$ summarizes guidance strength.
}
\label{tab:unified-table}

\resizebox{0.98\linewidth}{!}{
\vspace{-1.5em}
\begin{tabular}{cccccc}
\toprule
\textbf{Method}
& \(\textcolor{ForestGreen}{\hat{\mathbf{\Psi}}_{t_i}(j, K_i)}\)
& \(\gamma(t_i)\)
& \(C_1(t_i)\)
& \(C_2(t_i)\)
& \(h(t_i)^\dagger\) \\
\midrule

\multicolumn{6}{c}{\textit{First-order estimators}} \\
\midrule

VGG-Flow$^{\ddagger}$
& $\hat{\mathbf{\Psi}}^{\mathrm{CS},1}_{t_i}[\mathbf J_{\mathrm{Tw}}]$
& \({(1-t_i)^2}\delta(t_i)\)
& \(\frac{1}{d\delta(t_i)^2}\)
& \(0\)
& \(\frac{1}{\alpha d}(1-t_i)^2\) \\

\addlinespace

ReFL
& $\hat{\mathbf{\Psi}}^{\mathrm{CS},1}_{t_i}[\mathbf I]$
& \(\frac{1}{2\sqrt{\bar\alpha_{t_i}}}\)
& \(\frac{\alpha}{\delta(t_i)^2}\)
& \(0\)
& \(\frac{1}{2\sqrt{\bar\alpha_{t_i}}\delta(t_i)}\) \\

\addlinespace

SQDF
& $\hat{\mathbf{\Psi}}^{\mathrm{LA},1}_{t_i}(i-1,1)$
& \(\tilde\gamma_{\mathrm{SQDF}}^{Nt_i}\)
& \(\frac{\alpha}{2}\frac{\Omega(t_i)^2}{\sigma_{t_i}^2}\)
& \(0\)
& \(\tilde\gamma_{\mathrm{SQDF}}^{Nt_i}
   \frac{\sigma_{t_i}w(t_i)}{2}\) \\

\addlinespace

Residual \(\nabla\)-DB$^{\ddagger}$
& $\hat{\mathbf{\Psi}}^{\mathrm{LA},1}_{t_i}(i-1,1)$
& \(\bar\alpha_{t_{i-1}}\)
& \(\frac{1}{d}\frac{\Omega(t_i)^2}{\sigma_{t_i}^4}\)
& \(\frac{(1-\bar\alpha_{t_i})\sigma_{t_i}^4}
        {\Omega(t_i)^2}\frac{w_R}{w_F}\)
& \(\frac{1}{\alpha d}\bar\alpha_{t_{i-1}}
   \frac{w(t_i)}{\sigma_{t_i}}\) \\

\addlinespace

DRaFT
& $\hat{\mathbf{\Psi}}^{\mathrm{LA},1}_{t_i}(0,1)$
& \(\frac{\delta(t_i)^2\Omega(t_i)^2}{2\sigma_{t_i}^2}\)
& \(\frac{\alpha}{\delta(t_i)^2}\)
& \(0\)
& \(\frac{\delta(t_i)\Omega(t_i)}{2}\) \\

\addlinespace

DRaFT-$K$
& $\mathbf{1}[i\le K]\,
   \hat{\mathbf{\Psi}}^{\mathrm{LA},1}_{t_i}(0,1)$
& \(\frac{\delta(t_i)^2\Omega(t_i)^2}{2\sigma_{t_i}^2}\)
& \(\frac{\alpha}{\delta(t_i)^2}\)
& \(0\)
& \(\mathbf{1}[i\le K]\frac{\delta(t_i)\Omega(t_i)}{2}\) \\

\addlinespace

Adjoint Matching
& $\hat{\mathbf{\Psi}}^{\mathrm{Adj}}_{t_i}
   \triangleq -\frac{1}{\alpha}\mathbf a_{t_i}$
& \(1\)
& \(\frac{2}{\delta(t_i)}\)
& \(0\)
& --- \\

\midrule
\multicolumn{6}{c}{\textit{Zeroth-order estimators}} \\
\midrule

REINFORCE + KL reg.
& $\hat{\mathbf{\Psi}}^{\mathrm{LA},0}_{t_i}(0,1)$
& \(1\)
& \(\frac{\alpha}{2}\frac{\Omega(t_i)^2}{\sigma_{t_i}^2}\)
& \(0\)
& \(\frac{\sigma_{t_i}w(t_i)}{2}\) \\

\addlinespace

PPO / GRPO / PCPO-base $^*$
& $\hat{\mathbf{\Psi}}^{\mathrm{LA},0}_{t_i}(0,1)$
& \(1\)
& \(\frac{\alpha}{2}\frac{\Omega(t_i)^2}{\sigma_{t_i}^2}\)
& \(\frac{r(\vx_0)}{\alpha}\)
& \(\frac{\sigma_{t_i}w(t_i)}{2}\) \\

\addlinespace

PCPO-reweight (diffusion)
& $\hat{\mathbf{\Psi}}^{\mathrm{LA},0}_{t_i}(0,1)$
& \(1\)
& \(\frac{\alpha}{2}\frac{\Omega(t_i)^2}{\sigma_{t_i}^2}\)
& \(\frac{r(\vx_0)}{\alpha}\)
& \(\frac{\sigma'_{t_i}w'(t_i)}{2}\) \\

\addlinespace

PCPO-reweight (flow)
& $\hat{\mathbf{\Psi}}^{\mathrm{LA},0}_{t_i}(0,1)$
& \(\frac{w'(t_i)}{w(t_i)}\)
& \(\frac{\alpha}{2}\frac{\Omega(t_i)^2}{\sigma_{t_i}^2}\)
& \(\gamma(t_i)\frac{r(\vx_0)}{\alpha}\)
& \(\frac{\sigma_{t_i}w'(t_i)}{2}\) \\

\addlinespace

Branch-GRPO$^{\DDDag}$
& $\hat{\mathbf{\Psi}}^{\mathrm{LA},0}_{t_i}(0,b_i)$
& \(1\)
& \(\frac{\alpha}{2}\frac{\Omega(t_i)^2}{\sigma_{t_i}^2}\)
& \(\frac{r(\vx_0)}{\alpha}\)
& \(\frac{\sigma_{t_i}w(t_i)}{2}\) \\

\addlinespace

TempFlow-GRPO
& $\hat{\mathbf{\Psi}}^{\mathrm{LA},0}_{t_i}(0,6)$
& \(\frac94\sigma_{t_i}\)
& \(\frac{\alpha}{2}\frac{\Omega(t_i)^2}{\sigma_{t_i}^2}\)
& \(\gamma(t_i)\frac{r(\vx_0)}{\alpha}\)
& \(\frac{9\sigma_{t_i}^2w(t_i)}{8}\) \\

\addlinespace

GRPO-Guard
& $\hat{\mathbf{\Psi}}^{\mathrm{LA},0}_{t_i}(0,1)$
& \(\frac{\sigma_{t_i}\Omega(t_i)}{\Delta t_i}\)
& \(\frac{\alpha}{2}\frac{\Omega(t_i)^2}{\sigma_{t_i}^2}\)
& \(0\)
& \(\frac{\sigma_{t_i}^3w(t_i)^2}
        {2(\Delta t_i)\delta(t_i)}\) \\

\bottomrule

\multicolumn{6}{l}{\footnotesize
\(\dagger\) $w(t_i)=\Omega(t_i)\delta(t_i)/\sigma_{t_i}$
(see Appendix~\ref{appendix:weights}).} \\

\multicolumn{6}{l}{\footnotesize
\(\ddagger\) The reduction becomes explicit after pruning redundant auxiliary components
(see Appendix~\ref{subsec:experiment-invalidate-auxiliary}).} \\

\multicolumn{6}{l}{\footnotesize
\(\DDDag\) For Branch-GRPO, $b_i=2$ if $i\in\mathcal B$ and $b_i=1$ otherwise.} \\

\multicolumn{6}{l}{\footnotesize
* DDPO~\cite{black2023training}, DPOK~\cite{fan2023dpok},
Flow-GRPO~\cite{liu2025flowgrpo}, DanceGRPO~\cite{xue2025dancegrpo},
and CPS~\cite{wang2025coefficientspreservingsamplingreinforcementlearning} are instances of PPO / GRPO / PCPO-base.} \\

\end{tabular}
}
\vspace{-2em}
\end{table*}

\vspace{-1em}
\section{Introduction}
\label{sec:intro}

Reward-based fine-tuning steers a pretrained diffusion or flow-based generative model toward higher-reward samples without deviating too far from the pretrained model.
Recent methods approach this problem from seemingly different perspectives, including entropy-regularized reinforcement learning (Soft RL)~\cite{black2023training,fan2023dpok,xue2025dancegrpo,liu2025flowgrpo,lee2025pcpo,he2025tempflowgrpotimingmattersgrpo,li2025mixgrpounlockingflowbasedgrpo}, optimal control~\cite{liu2025vggflow,domingo-enrich2025adjoint}, direct reward backpropagation~\cite{clark2024draft,ImageReward}, and GFlowNets~\cite{zhang2025dagdb,liu2025resnabladb,liu2025nablar2d3}.
As a result, the literature appears fragmented: across different objectives, derivations, and stabilization heuristics, it is difficult to distinguish fundamental differences from less consequential ones.

We show that these apparently different methods admit a common $L_2$ score-matching objective, which we term \emph{Reward Score Matching} (RSM; see Table~\ref{tab:unified-table}).
RSM casts reward fine-tuning as regression toward a value-guided target score: the reference score plus optimal value guidance.
Under this view, existing methods differ primarily along three design axes: how they estimate value guidance, how strongly they apply it across timesteps, and how they realize the trust region around the old policy.

The central challenge is that optimal value guidance is intractable in high dimensions.
RSM reveals that gradient-based methods construct first-order estimates of optimal value guidance, whereas gradient-free methods construct zeroth-order estimates.
These estimators exhibit different bias--variance--compute tradeoffs.
First-order estimators exhibit low variance but are severely biased at low-SNR timesteps, whereas zeroth-order estimators are unbiased yet increasingly noisy at high-SNR timesteps.
Furthermore, the temporal weighting used by existing methods can be understood as compensating for the timestep-dependent failure modes of their estimators.

This perspective reframes the design of various methods as a common optimization problem: at which timesteps should guidance be applied, and how should computation be allocated to make that guidance reliable?
It also reveals that some seemingly essential mechanisms reduce to coefficient choices or removable auxiliary components.
Guided by this analysis, we redesign representative differentiable and black-box reward fine-tuning methods.
For zeroth-order methods, we allocate estimator budget according to timestep-dependent estimator quality and make trust region realizations comparable across timesteps.
For first-order methods, we replace biased local guidance with reward gradients evaluated on clean samples, and reallocate optimization strength toward previously suppressed low-SNR timesteps.

Across four representative baselines, our simple redesigns improve reward optimization efficiency while maintaining competitive tradeoffs between reward and distribution shift.
In particular, our zeroth-order redesign reaches a GenEval~\cite{ghosh2023geneval} score of $0.97$ with an estimated \(5\times\) reduction in training cost relative to the reported TempFlow-GRPO baseline~\cite{he2025tempflowgrpotimingmattersgrpo}.
Together, these results establish RSM not only as a common language for existing methods, but also as a practical framework for designing simpler and more efficient reward fine-tuning algorithms.
\vspace{-1em}

\section{Preliminaries}
\label{sec:preliminaries}

\subsection{Flow-based Models}
\label{subsec:flow-based-models}
Let $p_0(\vx_0)$ and $p_1(\vx_1)$ denote the target and source distributions.
Flow-based models transport $\vx_1 \sim p_1$ to $\vx_0 \sim p_0$ along a probability path $p_t(\vx_t)$ induced by a flow map $\psi_t:\sR^d \to \sR^d$ on $t\in[0,1]$, with $\psi_1(\vx_1)=\vx_1$ and $\psi_0(\vx_1)=\vx_0$.
The corresponding trajectory $\vx_t=\psi_t(\vx_1)$ is governed by the flow ODE
\begin{equation}
    d\vx_t = \vv_t(\vx_t)dt,
    \label{eq:flow-ode}
\end{equation}
where $\vv_t(\vx)=\dot\psi_t(\psi_t^{-1}(\vx))$ is the velocity field.
Prior work~\cite{song2021scorebased,albergo2025stochasticinterpolants,domingo-enrich2025adjoint,liu2025flowgrpo} identified a reverse-time flow SDE that preserves the marginal distribution as:
\begin{equation}
    d\vx_t = \Big(\vv_t(\vx_t) - \frac{\tilde\sigma_t^2}{2} \nabla \log p_t(\vx_t) \Big)dt + \tilde\sigma_t d\vw.
\end{equation}
We denote the diffusion coefficient in the SDE by $\tilde \sigma_t$, and its discretized counterpart by $\sigma_t=\tilde\sigma_t \sqrt{\Delta t}$.
In conditional flow matching~\cite{lipman2023flow}, one defines a conditional flow $\psi_t(\cdot \mid \vx_0):\sR^d \to \sR^d$ with conditional velocity field $\vv_t(\vx \mid \vx_0)$, and marginal velocity
$\vv_t(\vx) = \int \vv_t(\vx \mid \vx_0)\, p(\vx_0 \mid \vx_t)\, d\vx_0$.

\subsection{Affine Conditional Flow}
\label{subsec:affine-cond-flow}
A widely used family of conditional flows is the affine conditional flow, defined as
\begin{equation}
    \psi_t(\vx_1|\vx_0) = a_t\vx_0 + b_t\vx_1 \triangleq \vx_t
    \label{eq:affine_flow}
\end{equation}
where $a_t\in\sR$ and $b_t\in\sR$ denote time-dependent variables.
From the definition of marginal velocity field and $p_1 = \gN(0,\rmI)$, the flow-ODE constructed with the affine flow is \footnote{For brevity, we write $\nabla_{\vx_t} f(\vx_t)$ as shorthand notation of $\nabla_{\vx} f(\vx) |_{\vx = \vx_t}$ for the remainder of this paper.}
\begin{equation}
\label{eq:affine-flow-expand}
    d\vx_t = \frac{\dot a_t}{a_t} \vx_t dt + \left( \frac{\dot a_t b^2_t-a_t\dot b_t b_t}{a_t} \right) \nabla_{\vx_t}\log p_t(\vx_t) dt.
\end{equation}
We establish notation generally using the affine conditional flow, since
it subsumes major generative paradigms including diffusion models (see Appendix~\ref{appendix:general-flow-de}).

For brevity, we denote the score function as $\vs_t\equiv \vs(\vx_t, t)\triangleq\nabla_{\vx_t}\log p_t(\vx_t)$ for the remainder of the paper.
Then, the transition density of the discrete-time affine flow SDE takes the general form:
\begin{align}
    p(\vx_{t_{i-1}}| \vx_{t_i}) = \gN(\kappa(t_i)\vx_{t_i} + \Omega(t_i) \vs_{t_i}, \sigma(t_i)^2 \rmI),
    \label{eq:reverse_transition}
\end{align}
where $\kappa(t_i)$ is fully determined by the definition of affine flow, and $\Omega(t_i), \sigma(t_i)$ are also affected by the stochastic noise schedule.
Proof is in Appendix~\ref{appendix:discretize}
\footnote{The network may instead parameterize a time-dependent affine transform of the score, absorbed into $\delta(t_i)$; see Appendix~\ref{appendix:delta-derivations}.
See also Appendix~\ref{appendix:omega-derivations} for derivations of $\Omega(t_i)$ for common samplers.}.

\subsection{Training-based Reward Alignment}
\label{subsec:reward-alignment-setup}

Soft RL formulates alignment as an entropy-regularized Markov decision process over discretized reverse-time transitions~\cite{Bellman1957markovian,black2023training,fan2023dpok,uehara2024understandingreinforcementlearningbasedfinetuning}.
With timesteps $0=t_0<t_1<\dots<t_N=1$, the state is $\vx_t\in\mathbb{R}^d$, the time-indexed policy is a learnable reverse transition $p^\theta_{t_i}(\vx_{t_{i-1}}\mid \vx_{t_i})$, and the reward $r$ is given only at the terminal state.
This yields the objective
\begin{equation}
    \max_\theta \mathbb{E}_{\tau\sim p^\theta} [r(\vx_0) - \alpha \sum_i \mathcal{D}_\text{KL}(p^\theta_{t_i}(\cdot|\vx_{t_i}) \| p^\text{ref}_{t_i}(\cdot|\vx_{t_i}))].
    \label{eq:soft_rl}
\end{equation}
The optimal marginal probability density is expressed with the \emph{value function} \(V_t(\vx_t)\) related to the expected return,
\begin{equation}
\label{eq:optimal-marginal-distribution}
  p_{t}^\star(\vx_{t})
  = \frac{ p_{t}^{\text{ref}}(\vx_{t})}{Z}
  \exp\Big(\frac{V_{t}(\vx_{t})}{\alpha}\Big),
\end{equation}
where $Z$ is a normalizing constant independent of $t$~\cite{uehara2024understandingreinforcementlearningbasedfinetuning}.

Soft RL subsumes Residual $\nabla$-DB~\cite{liu2025resnabladb}, SQDF~\cite{kang2026sqdf}, policy gradients, etc., and provides the canonical reward-tilted target underlying RSM's formulation.
Other reward-based fine-tuning methods arise from seemingly different derivations: VGG-Flow~\cite{liu2025vggflow} directly regresses the model velocity field toward a reward-guided target, DRaFT~\cite{clark2024draft} and ReFL~\cite{ImageReward} perform direct backpropagation through the reward, and Adjoint Matching~\cite{domingo-enrich2025adjoint} propagates reward gradients via the adjoint method.
Several recent methods further introduce timestep-dependent weighting~\cite{he2025tempflowgrpotimingmattersgrpo,lee2025pcpo,wang2025grpoguard,choi2026rethinking}, analogous to diffusion pretraining~\cite{ho2020denoising}.
Despite these differences in derivation and heuristic temporal reweighting, we show that they admit the same RSM form, extending the score-matching view induced by Soft RL beyond methods originally derived from the Soft RL framework
\footnote{DiffusionNFT~\cite{zheng2025diffusionnft} is closely related through reward-weighted regression, but does not reduce exactly to the same formulation; see Appendix~\ref{appendix:temp-diffusionnft}.}.
\section{Reward Score Matching: A Unified Framework}
\label{sec:theory}

\subsection{Optimal Guidance and Common Objective}
\label{subsec:rsm}
Eq.~\eqref{eq:optimal-marginal-distribution} implies that the optimal intermediate marginal $p_t^\star$ differs from the reference marginal $p_t^{\mathrm{ref}}$ by an exponential tilt of the value function $V_t$.
Accordingly, the optimal score takes the form
\begin{equation}
\label{eq:optimal-decomposition}
\vs_t^\star(\vx_t)=\vs_t^{\mathrm{ref}}(\vx_t)+\frac{1}{\alpha} \nabla_{\vx_t} V_t,
\end{equation}
which leads to the following first order
solver with the optimal value guidance:
\begin{restatable}[Optimal value guidance]{proposition}{gensoftval}
\label{cor:gen_soft_value}
Consider a first-order solver with transitional kernel:
\begin{equation*}
p^{\textcolor{blue}{\Psi}}_{t_i}(\vx_{t_{i-1}}\mid \vx_{t_i})
=\mathcal{N}\!\Big(\kappa(t_i)\vx_{t_i}+{\Omega(t_i)}\big(\vs_{t_i}^\text{ref}(\vx_{t_i})+
{\mathbf{\Psi}_{t_i}}\big),\;\sigma_{t_i}^2\mathbf{I}\Big).
\end{equation*}
The \textcolor{red}{\emph{optimal value guidance}} $\mathbf{\Psi}_{t_i}$ that matches the optimal transitional kernel $p^\star_{t_i}(\cdot\mid \vx_{t_i})$ is
\begin{equation}
\label{eq:alternate-optimal-psi}
\textcolor{red}{
\mathbf{\Psi}^\star_{t_i}}(\vx_{t_i})
\triangleq
\frac{1}{\alpha}\nabla_{\vx_{t_i}}V_{t_i}(\vx_{t_i})
=
\frac{\sigma_{t_i}^2}{\alpha \Omega(t_i)}\;
\mathbb{E}_{\vx_{t_{i-1}}\sim p^\star_{t_i}(\cdot\mid \vx_{t_i})}
\!\left[\nabla_{\vx_{t_{i-1}}}V_{t_{i-1}}(\vx_{t_{i-1}})\right].
\end{equation}
\end{restatable}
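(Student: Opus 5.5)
The plan is to derive the optimal transition kernel of the discretized soft-RL problem \eqref{eq:soft_rl} explicitly, and then read off its mean. Two facts do the work: the soft Bellman recursion and the Gaussian form \eqref{eq:reverse_transition} of the reference kernel.

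\textbf{Step 1: the optimal kernel is an exponential tilt.} Because the reward is terminal and the KL regularizer decomposes over steps, the standard soft Bellman argument (as in \cite{uehara2024understandingreinforcementlearningbasedfinetuning}) gives
\[
p^\star_{t_i}(\vx_{t_{i-1}}\mid\vx_{t_i}) = p^{\mathrm{ref}}_{t_i}(\vx_{t_{i-1}}\mid\vx_{t_i})\exp\!\Big(\tfrac{V_{t_{i-1}}(\vx_{t_{i-1}})-V_{t_i}(\vx_{t_i})}{\alpha}\Big),
\]
where $V_{t_i}$ is determined by normalization: $\exp(V_{t_i}(\vx_{t_i})/\alpha)=\int p^{\mathrm{ref}}_{t_i}(\vx'\mid \vx_{t_i})\exp(V_{t_{i-1}}(\vx')/\alpha)\,d\vx'$. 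This recursion is consistent with the marginal formula \eqref{eq:optimal-marginal-distribution}.

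\textbf{Step 2: differentiate the normalization identity.} Write the reference kernel as $\mathcal N(\vmu(\vx_{t_i}),\sigma_{t_i}^2\mathbf I)$ with $\vmu=\kappa(t_i)\vx_{t_i}+\Omega(t_i)\vs^{\mathrm{ref}}_{t_i}$. Differentiating the identity in $\vx_{t_i}$ and passing the gradient under the integral gives
\[
\frac{1}{\alpha}\nabla V_{t_i}=\mathbb E_{p^\star}\!\Big[(\partial\vmu/\partial\vx_{t_i})^\top \frac{\vx'-\vmu}{\sigma_{t_i}^2}\Big].
\]
This relates $\mathbb E_{p^\star}[\vx']-\vmu$ to $\nabla V_{t_i}$, but only through the Jacobian of $\vmu$, which is awkward.

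\textbf{Step 2$'$: Gaussian integration by parts (the cleaner route).} Instead, apply Stein's identity to the tilted Gaussian. Since $p^\star\propto \mathcal N(\vmu,\sigma^2)\,e^{V_{t_{i-1}}/\alpha}$, integrating $\nabla_{\vx'}$ of the density over $\vx'$ gives zero, so
\[
\mathbb E_{p^\star}\!\Big[-\frac{\vx'-\vmu}{\sigma_{t_i}^2}+\frac{1}{\alpha}\nabla V_{t_{i-1}}(\vx')\Big]=0 .
\]
Hence the mean of $p^\star$ is $\vmu+\frac{\sigma_{t_i}^2}{\alpha}\mathbb E_{p^\star}[\nabla V_{t_{i-1}}]$.

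\textbf{Step 3: match the kernels.} Equating this mean with the mean of $p^{\Psi}_{t_i}$, namely $\vmu+\Omega(t_i)\mathbf\Psi_{t_i}$, yields
\[
\mathbf\Psi_{t_i}=\frac{\sigma_{t_i}^2}{\alpha\Omega(t_i)}\,\mathbb E_{p^\star}[\nabla V_{t_{i-1}}],
\]
which is the second expression in \eqref{eq:alternate-optimal-psi}. Matching the mean is the right criterion here because both kernels share the covariance $\sigma^2_{t_i}\mathbf I$. Under this Gaussian (first-order) approximation, the KL-optimal Gaussian matches the first moment, which is the sense in which the statement's ``matches'' should be read.

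\textbf{Step 4: the first equality.} This identity, $\mathbf\Psi^\star_{t_i}=\frac1\alpha\nabla V_{t_i}$, is the step I expect to be the main obstacle, since it is not exact at finite step size. I would argue it via \eqref{eq:optimal-decomposition}: the optimal score is $\vs^{\mathrm{ref}}+\frac1\alpha\nabla V_{t_i}$, and substituting it into the first-order solver \eqref{eq:reverse_transition} gives $\mathbf\Psi=\frac1\alpha\nabla V_{t_i}$. To reconcile this with Step 3, I would note that, to first order in $\Delta t$, $\mathbb E_{p^\star}[\nabla V_{t_{i-1}}]\approx\nabla V_{t_i}\cdot\Omega(t_i)/\sigma_{t_i}^2$. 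This follows from Step 2's Jacobian relation, using $\partial\vmu/\partial\vx\approx \mathbf I$ and the SDE scaling $\sigma^2\propto\Omega$ for the marginal-preserving sampler. I would therefore state this equality as holding for the first-order solver, consistent with the ``first-order'' qualifier in the proposition.
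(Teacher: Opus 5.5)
Your Steps 1, 2$'$, 3 and the first half of Step 4 are exactly the paper's proof. The paper writes $p^\star_{t_i}(\cdot\mid\vx_{t_i})$ as the Bellman tilt of the Gaussian reference kernel and differentiates its log-density in $\vx_{t_{i-1}}$. It then uses a zero-mean-score lemma (your integration by parts) to get $\mathbb{E}_{p^\star}[\vx_{t_{i-1}}\mid\vx_{t_i}]=\vmu^{\mathrm{ref}}_{t_i}+\frac{\sigma_{t_i}^2}{\alpha}\mathbb{E}_{p^\star}[\nabla V_{t_{i-1}}]$ and matches means. It obtains the first equality only by differentiating the log of \eqref{eq:optimal-marginal-distribution} and identifying $\mathbf{\Psi}^\star_{t_i}$ with $\vs^\star_{t_i}-\vs^{\mathrm{ref}}_{t_i}$.

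The step that fails is your reconciliation at the end of Step 4. Combining your own Step 2 with Step 2$'$ gives the exact identity $\nabla V_{t_i}=\mathbf{J}_{\vmu}^\top\,\mathbb{E}_{p^\star}[\nabla V_{t_{i-1}}]$, where $\mathbf{J}_{\vmu}=\partial\vmu/\partial\vx_{t_i}$. So $\mathbf{J}_{\vmu}\approx\mathbf I$ yields $\mathbb{E}_{p^\star}[\nabla V_{t_{i-1}}]\approx\nabla V_{t_i}$, not $\nabla V_{t_i}\,\Omega(t_i)/\sigma_{t_i}^2$. The two expressions for $\mathbf{\Psi}_{t_i}$ therefore differ by the factor $\sigma_{t_i}^2/\Omega(t_i)$. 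Saying ``$\sigma^2\propto\Omega$'' does not close this, because the constant would have to be exactly $1$. For the Euler sampler in \eqref{eq:Omega-EulerDiscrete}, $\Omega(t_i)/\sigma_{t_i}^2=\frac{t_i}{(1-t_i)\tilde\sigma_{t_i}^2}+\frac12$. This equals $1$ only for $\tilde\sigma_{t_i}^2=2/\delta(t_i)$, the memoryless schedule the paper singles out in its Adjoint Matching appendix. For DDIM with $\sigma_{t_i}=0$ the ratio is not even finite. So no first-order-in-$\Delta t$ argument makes the score-correction reading and the kernel-matching reading agree for a general sampler.

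The paper does not attempt this step. It presents the first equality as the score-decomposition identity and the second as the kernel-matching guidance, without deriving one from the other. You should do one of two things. Either drop the reconciliation and state the first equality as the identification $\mathbf{\Psi}^\star_{t_i}:=\vs^\star_{t_i}-\vs^{\mathrm{ref}}_{t_i}$, as the paper does. Or state explicitly that the two characterizations coincide, to leading order, only when $\Omega(t_i)=\sigma_{t_i}^2(1+o(1))$, i.e.\ under memoryless noise.
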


The {{optimal value guidance}} is generally intractable due to the integration with respect to the posterior.
Therefore, practical methods use an \textcolor{ForestGreen}
{\emph{optimal value guidance estimate}} $\textcolor{ForestGreen}{\hat{\mathbf{\Psi}}_{t_i}(\vx_{t_i})} \approx \textcolor{red}{\mathbf{\Psi}_{t_i}^\star(\vx_{t_i})}$ (see Appendix~\ref{appendix:hard-soft-value} for details) and often apply a heuristic temporal weight $\gamma(t_i)$, yielding the \textcolor{blue}{\emph{effective guidance}} $\textcolor{blue}{\mathbf{\Psi}_{t_i}(\vx_{t_i})} \triangleq \gamma(t_i)\textcolor{ForestGreen}{\hat{\mathbf{\Psi}}_{t_i}(\vx_{t_i})}$.

Since score networks are trained successfully by regression to target scores, the score function is parameterized by a score network and trained to satisfy $\vs^\theta_t \approx \vs_t$.
In particular, Theorem~\ref{theorem:rsm} shows that existing methods can be unified by the regression problem of $\vs_t^\theta$ toward the optimal value guidance estimate with $\textcolor{blue}{\mathbf{\Psi}_{t_i}}$, yielding a common RSM objective.
See Appendix~\ref{appendix:deriving-psi} for proofs.

\begin{restatable}[Reward Score Matching]{theorem}{rsm}
\label{theorem:rsm}
    The methods summarized in Table~\ref{tab:unified-table}, including Soft RL methods,
    VGG-Flow, DRaFT, DRaFT-$K$, ReFL, and Adjoint Matching, admit the common loss
    \begin{equation}
    \begin{aligned}
        \mathcal{L}(\theta) = \mathbb{E}_{t_i, \vx_{t_i}, \veps}
        &\bigg[ C_1(t_i) \Big( \|\vs^\theta_{t_i} - (\vs^\text{ref}_{t_i} + \textcolor{blue}{\mathbf{\Psi}_{t_i}})\|^2
        +C_2(t_i) \|\vs^\theta_{t_i} - \vs^{\theta^\dagger}_{t_i}  \|^2 \Big) \bigg]
    \end{aligned}
    \label{eq:master_equation}
    \end{equation}
    where $C_1, C_2$ are weights, and $\vs^\theta, \vs^{\theta^\dagger}, \vs^{\text{ref}}$ are the current, previous, and reference score functions
    \footnote{When $C_2(t)=0$, Eq.~\eqref{eq:master_equation} reduces to entropy-regularized optimal control (Appendix~\ref{appendix:optimal_control}).}.
\end{restatable}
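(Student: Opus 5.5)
The plan is to verify the theorem method by method. For each method in Table~\ref{tab:unified-table}, I will show that its per-timestep gradient, or its loss up to $\theta$-independent constants, matches the gradient of Eq.~\eqref{eq:master_equation} with the stated $\hat{\mathbf{\Psi}}_{t_i}$, $\gamma$, $C_1$ and $C_2$. Two algebraic identities do most of the work.

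\textbf{Identity 1: KL between Gaussian kernels.} By Eq.~\eqref{eq:reverse_transition}, any two kernels sharing $\kappa(t_i)$ and $\sigma_{t_i}$ satisfy
\[
\mathcal{D}_{\mathrm{KL}}\big(p^\theta_{t_i}\,\|\,p^{\mathrm{ref}}_{t_i}\big)
=\frac{\Omega(t_i)^2}{2\sigma_{t_i}^2}\,\|\vs^\theta_{t_i}-\vs^{\mathrm{ref}}_{t_i}\|^2 .
\]
This is where the factor $\frac{\alpha}{2}\frac{\Omega(t_i)^2}{\sigma_{t_i}^2}$ in $C_1$ comes from for the Soft RL rows.

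\textbf{Identity 2: turning a linear reward term into a square.} For any stop-gradient vector $\mathbf g$,
\[
\nabla_\theta\Big[\langle \vs^\theta,\mathbf g\rangle-\tfrac{c}{2}\|\vs^\theta-\vs^{\mathrm{ref}}\|^2\Big]
=-\tfrac{c}{2}\,\nabla_\theta\big\|\vs^\theta-(\vs^{\mathrm{ref}}+\mathbf g/c)\big\|^2 .
\]
So a linear reward surrogate plus a quadratic KL penalty is, in gradient, an $L_2$ regression toward a shifted target.

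\textbf{First-order methods.} For DRaFT, DRaFT-$K$ and ReFL, I will differentiate $r(\vx_0)$ through the sampling chain. The chain rule gives the contribution of step $i$ as $\Omega(t_i)\langle \nabla_{\vx_{t_{i-1}}} r,\partial_\theta \vs^\theta_{t_i}\rangle$. In these methods the network parameterizes an affine transform of the score with scale $\delta(t_i)$, which is where the $\delta$ factors and $C_1=\alpha/\delta^2$ enter. The backpropagated quantity $\nabla_{\vx_{t_{i-1}}} r(\vx_0)$ is a single-sample ($K=1$) lookahead estimate of $\nabla V_{t_{i-1}}$.

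Plugging it into the right-hand side of Proposition~\ref{cor:gen_soft_value} defines $\hat{\mathbf{\Psi}}^{\mathrm{LA},1}_{t_i}(0,1)$. Identity~2 then supplies the rest, and $\gamma(t_i)$ is read off by matching coefficients. For ReFL, the one-step clean prediction Jacobian $\mathbf I$ gives $\hat{\mathbf{\Psi}}^{\mathrm{CS},1}[\mathbf I]$ with the $\frac{1}{2\sqrt{\bar\alpha}}$ factor coming from Tweedie's scaling. VGG-Flow follows the same pattern after rewriting its velocity regression as score regression. SQDF and Residual $\nabla$-DB follow the same pattern with a learned or bootstrapped $V_{t_{i-1}}$, which is what the lookahead depth $i-1$ denotes.

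\textbf{Adjoint Matching.} Here I will use the known fact that the lean adjoint $\mathbf a_{t}$ satisfies $\mathbb{E}[\mathbf a_t]=-\nabla V_t$. Its loss is already a squared regression of the control onto $-\mathbf a_t$, so only a change of variables from control to score is needed.

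\textbf{Zeroth-order methods.} For REINFORCE, PPO and GRPO, the score-function gradient of the log-likelihood of the Gaussian kernel is
\[
\nabla_\theta\log p^\theta_{t_i}=\frac{\Omega(t_i)}{\sigma_{t_i}^2}\big\langle \vx_{t_{i-1}}-\mu^\theta,\ \partial_\theta\vs^\theta\big\rangle .
\]
Multiplying by the advantage $r(\vx_0)$ gives an unbiased zeroth-order estimate of $\mathbb{E}[\nabla V_{t_{i-1}}]$, by Stein's lemma applied to the Gaussian kernel. Combined with Identity~1 and Identity~2, this recovers $C_1$ and $\hat{\mathbf{\Psi}}^{\mathrm{LA},0}$.

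The trust-region term $C_2$ comes from expanding the ratio-based surrogate to second order at $\theta^\dagger$. The ratio is
\[
\exp\!\big(\log p^\theta-\log p^{\theta^\dagger}\big),
\]
and its quadratic term is, via Identity~1, $\frac{\Omega^2}{2\sigma^2}\|\vs^\theta-\vs^{\theta^\dagger}\|^2$ weighted by the reward. This gives $C_2=r(\vx_0)/\alpha$. Clipping is treated as realizing the same trust region.

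The reweighted variants (PCPO, TempFlow-GRPO, GRPO-Guard) only rescale the per-step loss, which is absorbed into $\gamma$ and $C_2$. Branch-GRPO and TempFlow-GRPO average over $b_i$ or $6$ children, which sets the branching width $K_i$.

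\textbf{Main obstacle.} I expect the hardest step to be the PPO/GRPO reduction. The clipped ratio objective and the group-normalized advantages are equivalent to Eq.~\eqref{eq:master_equation} only to second order, or in expected gradient. I would state the theorem's "admit" in that sense: equality of expected gradients at $\theta=\theta^\dagger$, with a local quadratic model around it. A secondary burden is bookkeeping the $\delta$, $\Omega$, $\sigma$ conversions for each sampler, which I will defer to the appendices' derivations of $\Omega$ and $\delta$.
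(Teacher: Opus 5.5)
Your overall plan, a method-by-method reduction using the Gaussian-KL identity and completing the square, is the paper's, and your first-order and REINFORCE steps go through. The step that fails is your source for $C_2$. The sample is drawn from the old kernel, $\vx_{t_{i-1}}=\vmu^{\theta^\dagger}_{t_i}+\sigma_{t_i}\veps_{t_i}$, so with $\mathbf{u}=\frac{\Omega(t_i)}{\sigma_{t_i}}(\vs^\theta_{t_i}-\vs^{\theta^\dagger}_{t_i})$ one has \emph{exactly} $\log\rho_{t_i}=\veps_{t_i}^\top\mathbf{u}-\frac12\|\mathbf{u}\|^2$. The log-ratio surrogate $-r(\vx_0)\log\rho_{t_i}$ (PCPO-base) plus the KL term is therefore already an exact quadratic in the score. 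Completing the square gives $C_2=r(\vx_0)/\alpha$ with no approximation, so no weakening of ``admit'' is needed there.

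If you instead Taylor-expand the ratio $\rho_{t_i}=\exp(\log\rho_{t_i})$ to second order at $\theta^\dagger$, you pick up $\frac12(\log\rho_{t_i})^2$ at the same order. This gives $-r(\rho_{t_i}-1)\approx -r\,\veps_{t_i}^\top\mathbf{u}+\frac r2\|\mathbf{u}\|^2-\frac r2(\veps_{t_i}^\top\mathbf{u})^2$. The last term is a rank-one, noise-dependent quadratic form that cannot be written as $C_2\|\vs^\theta_{t_i}-\vs^{\theta^\dagger}_{t_i}\|^2$. For an $\veps$-independent reward it even cancels the isotropic term in expectation, since $\mathbb{E}_{\theta^\dagger}[\rho_{t_i}]=1$. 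So your second-order model does not produce the table's $C_2$.

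Your fallback criterion, equal expected gradients at $\theta=\theta^\dagger$, cannot identify $C_2$ at all, because the anchor's gradient vanishes there (as the Remark after the theorem notes). The paper instead proves the log-ratio case exactly. It then reduces PPO/GRPO to that case by linearizing the exponential in $\log\rho_{t_i}$, i.e.\ $\rho_{t_i}-1\approx\log\rho_{t_i}$ under the tight clipping used in practice, rather than by a Taylor model in $\theta$. Clipping is handled as a case split that sets $\mathbf{\Psi}_{t_i}\leftarrow\mathbf{0}$ and $C_2\leftarrow 0$ on clipped samples, not as a separate realization of the trust region.

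Second, ``same pattern'' does not cover Residual $\nabla$-DB. Its objective also contains $\mathcal{L}_\text{backward}$, which involves $\nabla_{\vx_{t_i}}\ell^\theta_{t_i}$ and hence input Jacobians of the network. That term is not an $L_2$ score regression, so the reduction holds only after pruning $\mathcal{L}_\text{backward}$, $\mathcal{L}_\text{terminal}$ and $g_\phi$, which is what the $\ddagger$ footnote of the table records and the paper justifies empirically. The same pruning of $g_\phi$ is needed for VGG-Flow to land on $\hat{\mathbf{\Psi}}^{\mathrm{CS},1}[\mathbf{J}_{\mathrm{Tw}}]$.

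After pruning, the forward loss becomes score regression through the identity $\nabla_{\vx_{t_{i-1}}}\ell^\theta_{t_i}=\frac{\Omega(t_i)}{\sigma_{t_i}^2}(\vs^\theta_{t_i}-\vs^{\text{ref}}_{t_i})$. Its $C_2$ comes from the explicit regularizer $\mathcal{L}_\text{reg}$, whose integrand is $(1-\bar\alpha_{t_i})\|\vs^\theta_{t_i}-\vs^{\theta^\dagger}_{t_i}\|^2$, not from any ratio. Also, depth $j=i-1$ denotes the one-step Tweedie surrogate $r(\hat\vx_{0|t_{i-1}})$, not a learned or bootstrapped $V_{t_{i-1}}$.

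Finally, for Adjoint Matching the control-to-score change of variables leaves a factor $\frac{2\tilde\sigma_t^2\delta(t)}{\tilde\sigma_t^2\delta(t)+2}$ on $\va_t$. The table's entry ($\gamma=1$, $C_1=2/\delta(t)$) holds only under the memoryless schedule $\tilde\sigma_t=\sqrt{2/\delta(t)}$.
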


\begin{remark}
In the first on-policy update\footnote{Throughout this paper, we use the relaxed notion of \emph{on-policy} common in online RL algorithms, where data is collected under the old policy \(\theta^\dagger\) and reused for one or more updates (e.g., DDPO~\cite{black2023training}). When we need to refer specifically to the case \(\theta=\theta^\dagger\), we distinguish it explicitly, e.g., by referring to it as the \emph{first on-policy update} or \emph{strictly on-policy}.}, the old-policy anchor is inactive.
In presence of PPO clipping, the update is suppressed by setting $\textcolor{blue}{\mathbf{\Psi}_{t_i}} \leftarrow \mathbf{0}$ and $C_2(t_i)\leftarrow 0$.
\end{remark}

The gradient of Eq.~\eqref{eq:master_equation} reveals update dynamics.
We derive a \emph{Canonical Gradient Form}:
\begin{align}
\nabla_\theta \mathcal{L}(\theta)
&=
2\mathbb{E}[\mathbf{J}_\theta(\vs_{t_i}^\theta)^\top \mathcal{G}(\vx_{t_i})]
\label{eq:gradient}
\\[-1.5em]
\mathcal{G}(\vx_{t_i})
&=
C_1(t_i)\Big(
\textcolor{blue}{-\mathbf{\Psi}_{t_i}(\vx_{t_i})}
+ \overbrace{(\vs^\theta_{t_i} - \vs^\text{ref}_{t_i})}^{\mathclap{\text{KL Reg.}}}
+ C_2(t_i)\overbrace{(\vs^\theta_{t_i} - \vs^{\theta^\dagger}_{t_i})}^{\mathclap{\text{Trust Region}}}
\Big),
\label{eq:canonical-gradient-form}
\end{align}
where $\mathbf{J}_\theta$ denotes the Jacobian.
This decomposition highlights three competing forces: \emph{\textcolor{blue}{Effective Guidance}} steers generation toward high rewards, while \emph{KL Regularization} and the \emph{Trust Region} term prevent model collapse by anchoring the model to the reference and previous policies, respectively.
Section~\ref{subsec:rsm} is written in the entropy-regularized setting for generality, but Eq.~\eqref{eq:canonical-gradient-form} also reconciles the $\alpha=0$ case for SQDF~\cite{kang2026sqdf} and policy-gradient methods; see Appendix~\ref{appendix:deriving-psi} for details.

Eq.~\eqref{eq:canonical-gradient-form} also clarifies the hierarchy of RSM's design dimensions.
The primary source of variation between methods lies in the way $\textcolor{ForestGreen}{\hat{\mathbf{\Psi}}_{t_i}}$ is constructed.
By contrast, temporal weighting and trust-region realization act only as secondary coefficient-level modifiers via $\gamma(t_i), C_1(t_i), C_2(t_i)$ and clipping. 
We therefore focus first on estimator design, and defer these coefficient-level choices to Section~\ref{sec:practical-design}.

\vspace{-1em}
\subsection{\texorpdfstring{Estimating Optimal Value Guidance $\textcolor{ForestGreen}{\hat{\mathbf{\Psi}}_{t_i}}$}{Estimating Optimal Value Guidance}}
\label{sec:value_estimate}

This section, which is one of the most important contributions of this paper, shows that the seemingly different forms of reward-based finetuning methods can be unified as the zeroth and first order estimate of the current or lookahead estimator of the optimal value guidance estimate.
Specifically, Eq.~\eqref{eq:alternate-optimal-psi} suggests two primary practical families of value-guidance estimators: \emph{current-state} and \emph{lookahead} \footnote{Adjoint Matching~\cite{domingo-enrich2025adjoint} provides an alternative first-order construction that propagates reward gradients via the adjoint method; see Appendix~\ref{appendix:adjoint_matching}.}.

\textbf{Current-state Estimator.}
The first equality in Eq.~\eqref{eq:alternate-optimal-psi}
suggests a \emph{current-state} estimator, available only in the first-order setting;
VGG-Flow~\cite{liu2025vggflow} and ReFL~\cite{ImageReward} fall in this family.
Specifically, since we do not have access to the value function $V$ but only to the terminal reward $r$, we approximate the value gradient via a Tweedie estimate.

\begin{definition}[Current-state Estimator]
\label{def:current_state_estimator}
For a local pullback matrix $A_{t_i}$, the first-order current-state estimator is
\begin{align}
\textcolor{ForestGreen}{\hat{\mathbf{\Psi}}_{t_i}^{\mathrm{CS},1}}[A_{t_i}]
&=
\frac{1}{\alpha}
A_{t_i}^\top
\nabla_{\hat\vx_{0|t_i}} r(\hat\vx_{0|t_i})
\;\approx\;
\frac{1}{\alpha}\nabla_{\vx_{t_i}} V_{t_i}(\vx_{t_i}).
\label{eq:cs-estimator}
\end{align}
\end{definition}
VGG-Flow uses the full Tweedie pullback $A_{t_i} = \mathbf J_{\mathrm{Tw}} \triangleq
\frac{\partial \hat\vx_{0|t_i}}{\partial \vx_{t_i}}
$,
whereas ReFL uses the identity pullback $A_{t_i}=\mathbf I$, with the remaining scalar Jacobian factor absorbed into $\gamma(t_i)$.
Current-state estimators avoid stochastic rollouts, but incur substantial bias at low-SNR timesteps, as will be discussed later.
They are therefore best viewed as simpler but potentially more biased alternatives to the lookahead estimators introduced below, which subsume most existing methods and expose the main design tradeoffs.

\textbf{Lookahead Estimator.}
Lookahead estimators roll out from $\vx_{t_i}$ to $\vx_{t_j}$ with $j<i$ and extract reward information there.
First-order estimators require differentiable rewards, whereas zeroth-order estimators apply to arbitrary black-box rewards.

\begin{definition}[Lookahead Estimators]
\label{def:lookahead_estimators}
For \emph{lookahead depth} $j<i$ and branching width $K_i$, the first-order and zeroth-order lookahead estimators of the existing reward-alignment methods are
\begin{alignat}{2}
\textcolor{ForestGreen}{\hat{\mathbf{\Psi}}_{t_i}^{\mathrm{LA},1}(j, K_i)}
&=
\frac{\sigma_{t_i}^2}{\alpha\Omega(t_i)}
\frac{1}{K_i}\sum_{k=1}^{K_i}
\nabla_{\vx_{t_{i-1}}^{(k)}} r(\hat\vx_{0|t_j}^{(k)})
&\;\approx\;&
\frac{\sigma_{t_i}^2}{\alpha\Omega(t_i)}
\mathbb{E}_{\vx_{t_{i-1}:t_j}}
\Big[
\nabla_{\vx_{t_{i-1}}} r(\hat\vx_{0|t_j})
\Big], \label{eq:fo-estimator}\\[-0.3em]
\textcolor{ForestGreen}{\hat{\mathbf{\Psi}}_{t_i}^{\mathrm{LA},0}(j, K_i)}
&=
\frac{\sigma_{t_i}}{\alpha \Omega(t_i)}
\frac{1}{K_i}\sum_{k=1}^{K_i}
r(\hat\vx_{0|t_j}^{(k)})\,\veps_{t_i}^{(k)}
&\;\approx\;&
\frac{\sigma_{t_i}}{\alpha \Omega(t_i)}
\mathbb{E}_{\vx_{t_{i-1}:t_j}}
\Big[
r(\hat\vx_{0|t_j})\,\veps_{t_i}
\Big], \label{eq:zo-estimator}
\end{alignat}
where $\hat\vx_{0|t_j}= \mathbb{E} [\vx_0\mid \vx_{t_j}] \triangleq  \mathbb{E}_{\vx_{0}\sim p_{t_j}(\vx_0\mid \vx_{t_j})}
\!\left[ \vx_0 \right]$ is the Tweedie estimate of the posterior mean.
\end{definition}

Existing local first-order lookahead methods (Residual $\nabla$-DB~\cite{liu2025resnabladb}, SQDF~\cite{kang2026sqdf})
use one-step lookahead ($j=i-1$), and therefore inherit substantial Tweedie bias from $\hat{\vx}_{0|t_j}$ at low-SNR timesteps, akin to current-state methods ($j=i$).

By contrast, DRaFT-$K$~\cite{clark2024draft} uses full-lookahead guidance ($j=0$) over the final $K$ reverse steps, while truncating gradient propagation beyond them. Equivalently, it applies full-lookahead guidance at those $K$ timesteps and sets the effective guidance to zero at all remaining timesteps\footnote{Untruncated DRaFT applies full-lookahead guidance ($j=0$) at every timestep, but is impractical because backpropagating through the entire Jacobian chain incurs prohibitive memory costs.}.

Several local first-order methods (Residual $\nabla$-DB, VGG-Flow) attempt to correct the bias of their local value-gradient approximation by learning the Jensen gap via a small neural network $g_\phi$.
Proposition~\ref{prop:consistent_first_order} shows that, if this residual parameterization were exact, both local current-state and one-step first-order estimators would be unbiased; see Appendix~\ref{appendix:formalize-first-order} for the proof.
\begin{restatable}[Conditional Consistency of First-Order Estimators]{proposition}{consistentfirstorder}
\label{prop:consistent_first_order}
Let \(j \in \{i,\, i-1\}\). For
\begin{equation}
\label{eq:fo-decomposition-prop}
\nabla_{\vx_{t_j}} V_{t_j}(\vx_{t_j})
=
\gamma_{t_i}\nabla_{\vx_{t_j}} r(\hat \vx_{0 \mid t_j})
+
g_\phi(\vx_{t_j}),
\end{equation}
define the residual-corrected first-order estimator:
\begin{equation}
\label{eq:fo-consistent-estimator-compact}
\textcolor{ForestGreen}{\hat{\mathbf{\Psi}}_{t_i}^{\mathrm{FO, res}}(j)}
\triangleq
\left(
\frac{1}{\alpha}\iota[j=i]
+
\frac{\sigma_{t_i}^2}{\alpha \Omega(t_i)}\iota[j=i-1]
\right)
\Big(
\gamma_{t_i}\nabla_{\vx_{t_j}} r(\hat \vx_{0 \mid t_j})
+
g_\phi(\vx_{t_j})
\Big),
\end{equation}
where \(\iota[\,\cdot\,]\) denotes the indicator function, and for \(j=i-1\) we take
\(\vx_{t_{i-1}} \sim p^\star_{t_i}(\cdot \mid \vx_{t_i})\),
while for \(j=i\) we simply set \(\vx_{t_j}=\vx_{t_i}\).
Then,
$\mathbb{E}\!\left[
\textcolor{ForestGreen}{\hat{\mathbf{\Psi}}_{t_i}^{\mathrm{FO, res}}(j)}
\;\middle|\;
\vx_{t_i}
\right]
$ is an unbiased estimate of the value gradient.
\end{restatable}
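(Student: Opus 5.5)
The argument treats the two cases $j=i$ and $j=i-1$ separately. In each case we substitute the decomposition \eqref{eq:fo-decomposition-prop} into the definition \eqref{eq:fo-consistent-estimator-compact} and then invoke Proposition~\ref{cor:gen_soft_value}. The key observation is that, under the exactness assumption, the bracket $\gamma_{t_i}\nabla_{\vx_{t_j}} r(\hat\vx_{0\mid t_j}) + g_\phi(\vx_{t_j})$ is not an approximation at all. It equals $\nabla_{\vx_{t_j}} V_{t_j}(\vx_{t_j})$ pointwise. The estimator therefore collapses to a scaled value gradient, either at the current state or at a sampled next state.

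\textbf{Case $j=i$.} Here $\vx_{t_j}=\vx_{t_i}$ is deterministic given the conditioning, so only the indicator $\iota[j=i]$ survives. This gives
\[
\hat{\mathbf{\Psi}}_{t_i}^{\mathrm{FO,res}}(i)=\tfrac{1}{\alpha}\nabla_{\vx_{t_i}}V_{t_i}(\vx_{t_i}).
\]
The conditional expectation given $\vx_{t_i}$ is this same quantity, which is exactly the first expression for $\mathbf{\Psi}^\star_{t_i}$ in \eqref{eq:alternate-optimal-psi}. This case is immediate.

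\textbf{Case $j=i-1$.} Only the indicator $\iota[j=i-1]$ survives, and the estimator becomes
\[
\frac{\sigma_{t_i}^2}{\alpha\Omega(t_i)}\nabla_{\vx_{t_{i-1}}}V_{t_{i-1}}(\vx_{t_{i-1}}), \qquad \vx_{t_{i-1}}\sim p^\star_{t_i}(\cdot\mid\vx_{t_i}).
\]
Taking the conditional expectation given $\vx_{t_i}$ and pulling out the deterministic prefactor yields precisely the right-hand side of \eqref{eq:alternate-optimal-psi}. By the second equality of Proposition~\ref{cor:gen_soft_value}, this equals $\frac{1}{\alpha}\nabla_{\vx_{t_i}}V_{t_i}(\vx_{t_i})=\mathbf{\Psi}^\star_{t_i}$. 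We may take that equality as given, since it is stated earlier in the paper. If a self-contained check is wanted, it follows from the soft Bellman recursion
\[
\exp(V_{t_i}/\alpha)=\mathbb{E}_{p^{\mathrm{ref}}}\!\left[\exp(V_{t_{i-1}}/\alpha)\mid\vx_{t_i}\right]
\]
together with two further facts. First, the tilted kernel is $p^\star\propto p^{\mathrm{ref}}\exp(V_{t_{i-1}}/\alpha)$. Second, Stein's identity for the Gaussian mean $\kappa\vx_{t_i}+\Omega\vs^{\mathrm{ref}}$ gives that the gradient of the mean-shift of the Gaussian kernel produces the factor $\sigma_{t_i}^2/\Omega(t_i)$ in the first-order solver parameterization.

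\textbf{Main subtlety.} The delicate point is interpreting what "unbiased" means, and making sure the expectation is taken over the correct distribution. The sample must be drawn from $p^\star_{t_i}$, not from $p^{\mathrm{ref}}$ or the current policy $p^\theta$. Unbiasedness is exact only when the decomposition \eqref{eq:fo-decomposition-prop} holds exactly and $\vx_{t_{i-1}}$ follows the optimal kernel. With on-policy samples the identity holds only to the extent that $p^\theta\approx p^\star$, and the proof should state this caveat explicitly. Finally, integrability of $\nabla V_{t_{i-1}}$ is needed to justify the conditional expectation; this can be assumed.
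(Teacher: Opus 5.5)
Your core argument is the paper's proof. You split into $j=i$ and $j=i-1$, use the exact decomposition \eqref{eq:fo-decomposition-prop} to collapse the bracket to $\nabla_{\vx_{t_j}}V_{t_j}$, and for $j=i-1$ pull out the deterministic prefactor and invoke the second equality of Proposition~\ref{cor:gen_soft_value}. With that proposition taken as given, as the paper does, this is complete.

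You should, however, drop the optional ``self-contained check,'' because it does not produce the factor you claim. Differentiating $\exp(V_{t_i}/\alpha)=\mathbb{E}_{p^{\mathrm{ref}}}[\exp(V_{t_{i-1}}/\alpha)\mid\vx_{t_i}]$ in $\vx_{t_i}$ acts only through the Gaussian mean $\vmu^{\mathrm{ref}}_{t_i}(\vx_{t_i})=\kappa(t_i)\vx_{t_i}+\Omega(t_i)\vs^{\mathrm{ref}}_{t_i}(\vx_{t_i})$. Combining this with Stein's identity gives
\[
\nabla_{\vx_{t_i}}V_{t_i}(\vx_{t_i})
=\big(\kappa(t_i)\mathbf{I}+\Omega(t_i)\nabla_{\vx_{t_i}}\vs^{\mathrm{ref}}_{t_i}(\vx_{t_i})\big)^{\top}
\mathbb{E}_{\vx_{t_{i-1}}\sim p^\star_{t_i}(\cdot\mid\vx_{t_i})}\!\left[\nabla_{\vx_{t_{i-1}}}V_{t_{i-1}}(\vx_{t_{i-1}})\right].
\]
That is, the expected next-step value gradient is pulled back through the Jacobian of the reference mean. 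In general this agrees with $\frac{\sigma_{t_i}^2}{\Omega(t_i)}\mathbb{E}[\nabla V_{t_{i-1}}]$ only if that Jacobian reduces to $\frac{\sigma_{t_i}^2}{\Omega(t_i)}\mathbf{I}$.

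The factor $\sigma_{t_i}^2/\Omega(t_i)$ comes instead from the mean-matching step in the paper's proof of Proposition~\ref{cor:gen_soft_value}. The zero-mean score identity (equivalently Stein's identity) gives $\mathbb{E}_{p^\star_{t_i}}[\vx_{t_{i-1}}\mid\vx_{t_i}]-\vmu^{\mathrm{ref}}_{t_i}=\frac{\sigma_{t_i}^2}{\alpha}\mathbb{E}_{p^\star_{t_i}}[\nabla V_{t_{i-1}}]$. One then equates this with the mean shift $\Omega(t_i)\mathbf{\Psi}_{t_i}$ of the first-order kernel. So the second expression in \eqref{eq:alternate-optimal-psi} is a kernel-matching characterization. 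Its identification with $\frac{1}{\alpha}\nabla_{\vx_{t_i}}V_{t_i}$ is exactly what you import from Proposition~\ref{cor:gen_soft_value}; your Bellman-recursion sketch does not recover it.
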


However, as we later verify in Appendix~\ref{subsec:experiment-invalidate-auxiliary}, auxiliary corrections $g_\phi$ turn out to be effectively negligible in the existing methods, so the resulting local current-state and one-step first-order estimators remain biased.

Existing zeroth-order (policy-gradient) methods typically use full rollout ($j=0$)\footnote{In principle, first-order consistency can also be achieved by full rollout, i.e. by selecting $j=0$ and differentiating the terminal reward through the entire denoising trajectory. However, this requires the full Jacobian chain and is therefore computationally impractical and potentially unstable in high dimension.
}, which is an unbiased value gradient estimate.
The following theorem, following from Stein's identity~\cite{Stein1981}, completes the theoretical unification of zeroth- and first-order methods.

\begin{restatable}[Consistency of Policy Gradient]{theorem}{policygradientzeroth}
\label{thm:stein-lemma}
For the full-rollout $j=0$, the lookahead zeroth-order estimator of the existing reward-alignment methods is an unbiased estimator of the value gradient:
\begin{equation}
\begin{aligned}
    \textcolor{ForestGreen}{\hat{\mathbf{\Psi}}_{t_i}^{\mathrm{LA},0}}
    &\approx \frac{\sigma_{t_i}}{\alpha \Omega(t_i)} \mathbb{E}_{\vx_{t_{i-1:0}}\sim p_{t_{i:1}}^\star,\veps_{t_{i:1}} \overset{\mathrm{iid}}{\sim} \mathcal{N}(\mathbf{0}, \mathbf{I})}[r(\vx_0)\veps_{t_i}] = \mathbb{E}_{\vx_{t_{i-1}}\sim p_{t_{i}}^\star}[\frac{\sigma_{t_i}^2}{\alpha \Omega(t_i)} \nabla_{\vx_{t_{i-1}}} V_{t_{i-1}}(\vx_{t_{i-1}})].
\end{aligned}
\end{equation}
\end{restatable}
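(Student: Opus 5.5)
The plan is to prove the equality by combining the tower property over the rollout with Stein's identity applied to the single Gaussian step $\vx_{t_i}\to\vx_{t_{i-1}}$. The ``$\approx$'' in the statement is just the Monte Carlo approximation of the expectation by $K_i$ samples, with $\hat\vx_{0|t_0}=\vx_0$ at $j=0$. So the substantive claim is the equality on the right.

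First, reparametrize the optimal transition as $\vx_{t_{i-1}}=\vmu^\star_{t_i}(\vx_{t_i})+\sigma_{t_i}\veps_{t_i}$. Here $\vmu^\star_{t_i}=\kappa(t_i)\vx_{t_i}+\Omega(t_i)(\vs^{\mathrm{ref}}_{t_i}+\mathbf{\Psi}^\star_{t_i})$ is the mean from Proposition~\ref{cor:gen_soft_value}, and $\veps_{t_i}\sim\mathcal N(\mathbf 0,\rmI)$ is independent of the later noises $\veps_{t_{i-1:1}}$. Conditioning on $\vx_{t_{i-1}}$ and using the tower property gives
\[
\mathbb E[r(\vx_0)\veps_{t_i}\mid \vx_{t_i}]=\mathbb E_{\veps_{t_i}}\big[\veps_{t_i}\,\mathbb E[r(\vx_0)\mid \vx_{t_{i-1}}]\big].
\]
I then identify the inner conditional expectation with the value at the next step. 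In the soft-RL setting the natural object is the soft value, so $\mathbb E_{p^\star}[r(\vx_0)\mid\vx_{t_{i-1}}]$ equals $V_{t_{i-1}}(\vx_{t_{i-1}})$ up to the KL-to-go term. I would handle this via the $\alpha$-convention of Appendix~\ref{appendix:hard-soft-value}: either take the statement with $V$ interpreted as the expected terminal reward under the rollout policy, or note that the entropy correction contributes an additive term. The key point is that, after the step below, only the gradient of this quantity survives.

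Next, apply Stein's identity $\mathbb E[\veps f(\vmu+\sigma\veps)]=\sigma\,\mathbb E[\nabla f(\vmu+\sigma\veps)]$ with $f=V_{t_{i-1}}$. This holds under mild regularity: $V_{t_{i-1}}$ differentiable with gradient of at most polynomial growth, so that Gaussian integration by parts has vanishing boundary terms. It yields
\[
\mathbb E[\veps_{t_i}V_{t_{i-1}}(\vx_{t_{i-1}})]=\sigma_{t_i}\,\mathbb E_{\vx_{t_{i-1}}\sim p^\star_{t_i}}[\nabla_{\vx_{t_{i-1}}}V_{t_{i-1}}].
\]
Multiplying by $\sigma_{t_i}/(\alpha\Omega(t_i))$ gives exactly the right-hand side of the statement. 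By the second equality in Eq.~\eqref{eq:alternate-optimal-psi} of Proposition~\ref{cor:gen_soft_value}, this equals $\mathbf{\Psi}^\star_{t_i}(\vx_{t_i})$, which establishes unbiasedness.

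The main obstacle is the identification step, i.e.\ justifying $\mathbb E[r(\vx_0)\mid\vx_{t_{i-1}}]=V_{t_{i-1}}$ under the optimal rollout. If this identity only holds up to a function whose Stein contribution is nonzero, the argument breaks. My first approach is the soft Bellman recursion $V_{t_{i-1}}=\alpha\log\mathbb E_{p^{\mathrm{ref}}}[e^{r/\alpha}\mid\vx_{t_{i-1}}]$, combined with the change-of-measure identity $\mathbb E_{p^\star}[\,\cdot\mid\vx_{t_{i-1}}]=\mathbb E_{p^{\mathrm{ref}}}[\,\cdot\,e^{(r-V_{t_{i-1}})/\alpha}\mid\vx_{t_{i-1}}]$. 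If that does not give a clean equality, I would instead state the result with $V$ as the on-policy expected reward, which is the quantity policy-gradient methods actually estimate. A secondary technicality is justifying the interchange of expectation and differentiation, for which I would assume bounded $r$ and smooth Gaussian transitions.
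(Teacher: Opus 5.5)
For the displayed equality, your argument is the paper's proof. You reparametrize the Gaussian step as $\vx_{t_{i-1}}=\vmu^\star_{t_i}+\sigma_{t_i}\veps_{t_i}$, pass to $\mathbb{E}[r(\vx_0)\mid\vx_{t_{i-1}}]$, and apply Stein's identity. The paper runs the same chain backwards: it starts from $\mathbb{E}_{p^\star}[\nabla_{\vx_{t_{i-1}}}V_{t_{i-1}}]$, writes $\nabla_{\vx_{t_{i-1}}}=\sigma_{t_i}^{-1}\nabla_{\veps_{t_i}}$, and integrates by parts. The paper removes your ``main obstacle'' by definition. Eq.~\eqref{eq:soft-v-bellman-opt} sets $V_{t_i}(\vx_{t_i}):=\mathbb{E}_{p^\star}[r(\vx_{t_0})\mid\vx_{t_i}]$, the \emph{hard} value under the optimal policy, which is exactly your fallback. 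The identification then follows from the Markov property, since the later noises are independent of $\veps_{t_i}$.

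Two things need fixing.

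First, the soft-Bellman route does not merely risk failing; it fails. One has $\mathbb{E}_{p^\star}[r(\vx_0)\mid\vx_{t_{i-1}}]=V^{\rm soft}_{t_{i-1}}(\vx_{t_{i-1}})+\alpha\,\mathbb{E}_{p^\star}\bigl[\sum_{k=1}^{i-1}D_{\rm KL}\bigl(p^\star_{t_k}(\cdot\mid\vx_{t_k})\Vert p^{\rm ref}_{t_k}(\cdot\mid\vx_{t_k})\bigr)\mid\vx_{t_{i-1}}\bigr]$. This KL-to-go is a non-constant function of $\vx_{t_{i-1}}$, so its gradient is precisely what survives the Stein step. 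Saying that ``only the gradient survives'' does not dispose of it. Concretely, take a single reference step $p^{\rm ref}(x_0\mid x_1)=\mathcal N(x_1,s^2)$ with $r(x)=-cx^2/2$ in one dimension. Then $\nabla V^{\rm hard}_{t_1}=\nabla V^{\rm soft}_{t_1}/(1+cs^2/\alpha)$, so under the soft reading the identity generically fails already at $i=2$.

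Second, once you adopt the hard value, your closing sentence is not justified. Proposition~\ref{cor:gen_soft_value} is derived from the soft-Bellman kernel \eqref{eq:soft-opt-kernel}, i.e.\ for the soft value. It therefore does not turn $\frac{\sigma_{t_i}^2}{\alpha\Omega(t_i)}\mathbb{E}_{p^\star}[\nabla V^{\rm hard}_{t_{i-1}}]$ into $\mathbf{\Psi}^\star_{t_i}$. The theorem and the paper's proof stop at the displayed equality. Appendix~\ref{appendix:hard-soft-value} says explicitly that this hard-value target differs from the soft one. The soft target is recovered only by replacing $r(\vx_0)$ with the entropy-augmented return $\tilde r_{t_i}$ (Theorem~\ref{thm:stein-lemma-soft-value}). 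So drop that final step, and read ``unbiased'' as unbiased for the hard-value gradient on the right-hand side.
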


See Appendix~\ref{appendix:formalize-stein} for the proof.
Lookahead estimators introduce two design dimensions for variance reduction.
\emph{Branching} specifies per-step branch widths $K_i$, with $K_i=1$ denoting a point estimate at $t_i$; larger $K_i$ reduce variance at higher compute cost~\cite{li2025branchgrpostableefficientgrpo,he2025tempflowgrpotimingmattersgrpo}.
\emph{Stochasticity localization} changes the underlying stochastic control problem by keeping only selected reverse steps stochastic~\cite{li2025mixgrpounlockingflowbasedgrpo,he2025tempflowgrpotimingmattersgrpo}.

\section{Practical Design Choices of RSM from Optimization Perspectives}
\label{sec:practical-design}

Section~\ref{sec:theory} established exact reductions of many existing methods to a common RSM objective.
We now shift to the practical optimization view: under finite compute and noisy gradient estimates, performance is governed by three coupled dimensions: \emph{estimator design}, \emph{temporal weighting}, and \emph{trust-region realization}.
These determine the quality of value guidance and how compute, optimization mass, and regularization are allocated across timesteps.

\textbf{Estimator design} determines the quality of $\textcolor{ForestGreen}{\hat{\mathbf{\Psi}}_{t_i}}$.
Its main knobs are lookahead depth $j$, branching width $\{K_i\}$, and stochasticity localization.
Deeper lookahead reduces bias at higher compute cost, while larger branching widths reduce finite-sample variance at higher cost.
Stochasticity localization can likewise improve estimator quality, but unlike branching, it also changes the underlying stochastic control problem.
Reward centering, e.g., subtracting the sample mean, may reduce variance for free without changing the underlying gradients.
Under a fixed compute budget, two allocation problems emerge: how to distribute the total compute budget \emph{across timesteps}, and how to use compute \emph{within each timestep} to obtain a useful guidance estimate.

An important example is the zeroth-order estimator $\textcolor{ForestGreen}{\hat{\mathbf{\Psi}}_{t_i}^{\text{LA}, 0}}$, whose conditional variance under independent descendant rollouts scales as:
\begin{equation}
\label{eq:branch-covariance}
\mathrm{Cov}\!\left(\textcolor{ForestGreen}{\hat{\mathbf{\Psi}}_{t_i}^{\text{LA}, 0}}\mid \vx_{t_i}\right)
=
\frac{1}{K_i}
\left(\frac{\sigma_{t_i}}{\alpha\Omega(t_i)}\right)^2
\mathrm{Cov}\!\left(r(\vx_0)\veps_{t_i}\mid \vx_{t_i}\right)
\apropto
\frac{1}{K_i}
\left(\frac{\sigma_{t_i}}{\alpha\Omega(t_i)}\right)^2.
\end{equation}
This scaling shows that, under a fixed branching width ${K_i}$, some timesteps are intrinsically noisier than others and may therefore require larger $K_i$ to reduce variance, at the cost of additional compute.

\begin{figure}[!t]
\centering
\vspace{-1.5em}
\includegraphics[width=0.85\linewidth]{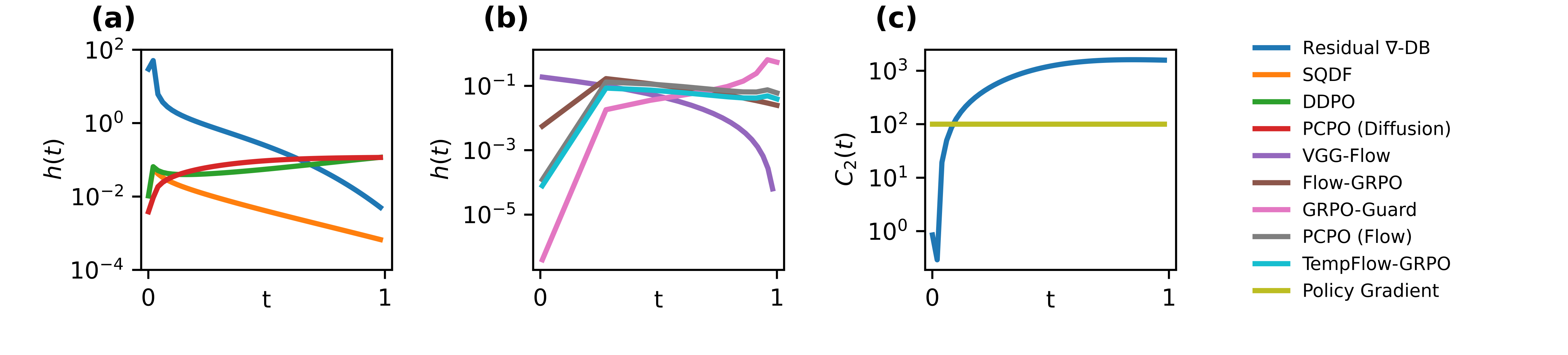}
\vspace{-1em}
\caption{\textbf{Temporal Optimization Strength.} (a, b) Many first-order methods reduce value guidance at low-SNR timesteps; improved zeroth-order methods reduce value guidance at high-SNR timesteps: (a) diffusion, (b) flow-matching. (c) Residual $\nabla$-DB enforces stronger trust-region constraints for low-SNR timesteps. Policy Gradient's $C_2(t)$ used constants $r(\vx_0)=1, \alpha=10^{-2}$.}
\label{fig:h}
\end{figure}

\textbf{Temporal weighting.}
Another key question is how strongly the reward signal enters the update at each timestep.
We seek a common coefficient measuring its timestep-wise optimization strength.
For current-state estimators, Eq.~\eqref{eq:cs-estimator} shows that $\textcolor{ForestGreen}{\hat{\mathbf{\Psi}}_{t_i}}$ is proportional to a reward gradient estimate at the current state.
For lookahead estimators, Theorem~\ref{thm:stein-lemma} together with Eqs.~\eqref{eq:fo-estimator}--\eqref{eq:zo-estimator} yields:
\begin{equation}
\label{eq:r}
\widehat{\nabla r(\vx_{t_i})}
=
\frac{1}{\sigma_{t_i}}
\mathbb{E}_{\vx_{t_{i-1:0}}\sim p_{t_{i:1}}^\star,\veps_{t_{i:1}} \overset{\mathrm{iid}}{\sim} \mathcal{N}(\mathbf{0}, \mathbf{I})}
\big[
r(\vx_0)\veps_{t_i}
\big]
=
\mathbb{E}_{\vx_{t_{i-1:0}}\sim p_{t_{i:1}}^\star}
\big[
\nabla_{\vx_{t_{i}}} r(\vx_0)
\big].
\end{equation}
Thus, all cases admit a common reward-gradient signal.
Using the parameterization-conversion factor $\delta(t_i)$ (Appendix~\ref{appendix:delta-derivations}), defined by
$-\delta(t)(\vg^\theta_t-\vg^\text{ref}_t)=\vs^\theta_t-\vs^\text{ref}_t$
for $\vg \in \{\veps, \vv\}$,
Eq.~\eqref{eq:gradient} becomes
\begin{align}
\nabla_\theta \mathcal{L}(\theta)
&=
2\mathbb{E}[\mathbf{J}_\theta(\vg_{t_i}^\theta)^\top \tilde{\mathcal{G}}(\vx_{t_i})]
\label{eq:gradient-reparam}
\\[-1.5em]
\tilde{\mathcal{G}}(\vx_{t_i})
&=
h(t_i)\widehat{\nabla r(\vx_{t_i})}
+ \delta(t_i)^2 C_1(t_i)\Big(
\overbrace{(\vg^\theta_{t_i} - \vg^\text{ref}_{t_i})}^{\mathclap{\text{KL Reg.}}}
+ C_2(t_i)\overbrace{(\vg^\theta_{t_i} - \vg^{\theta^\dagger}_{t_i})}^{\mathclap{\text{Trust Region}}}
\Big).
\label{eq:gradient-reparam-expanded}
\\[-0.5em]
h(t_i)
&:=
\delta(t_i) C_1(t_i)\times
\begin{cases}
\gamma(t_i)/\alpha, & (\text{current-state}),\\
\gamma(t_i)\sigma_{t_i}^2/(\alpha\Omega(t_i)), & (\text{lookahead}).
\end{cases}
\label{eq:h-definition}
\end{align}
The \emph{Normalized Influence Metric} $h(t)$ measures the effective scaling of the reward gradient after accounting for temporal weighting and parameterization.
For representative methods (Fig.~\ref{fig:h}), $h(t)$ reveals a clear contrast: representative first-order methods concentrate mass on high-SNR timesteps, suppressing low-SNR steps where bias is largest, whereas improved zeroth-order methods do the opposite, suppressing high-SNR steps where variance is largest under fixed branching width (Eq.~\eqref{eq:branch-covariance}).
Note that, while omitted for simplicity, temporal weighting also subsumes any rescaling of reward-gradients (first-order) or rewards (zeroth-order).

\textbf{Trust-region realization} appears through $C_2(t)$ and clipping. Optimization strength depends not only on $h(t)$, but also on how much of the reward gradient survives regularization and clipping.

\section{Experiments}
\label{sec:optimizing_design}
\vspace{-0.5em}

We study the practical design space of RSM in two stages.
First, we isolate estimator design at a single timestep under fixed compute, asking which local allocations estimate $\textcolor{red}{\mathbf{\Psi}_{t_i}^\star}$ most effectively.
Second, we study end-to-end design under fixed total compute, asking how estimator design, temporal weighting, and trust-region realization should be chosen jointly across timesteps.

\begin{figure}[!t]
\centering
\vspace{-0.5em}
\includegraphics[width=0.95\linewidth]{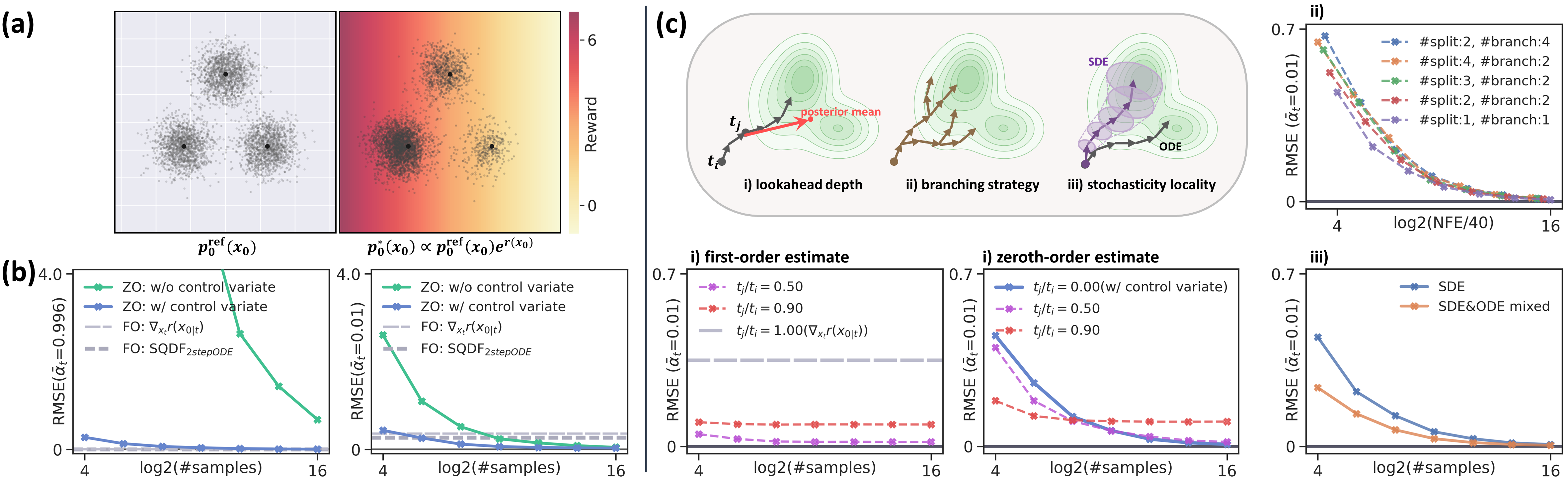}
\caption{\textbf{Toy analysis of estimator quality under fixed compute.}
(a) Reference and reward-tilted distributions.
(b) RMSE of representative first-order (FO) and zeroth-order (ZO) estimators\protect\footnotemark\ at two timesteps.
(c) RMSE--NFE for estimators with varying lookahead depth, branching, and stochasticity localization. \emph{\#split} is the number of recursive branching stages; \emph{\#branch} is the number of child trajectories per stage. Mixed SDE-ODE sampling follows TempFlow-GRPO~\citep{he2025tempflowgrpotimingmattersgrpo}.}
\label{fig:toy_experiment}
\vspace{-1em}
\end{figure}
\footnotetext{$\text{SQDF}_{2stepODE}$ refers to the 2-step ODE estimator proposed in SQDF~\cite{kang2026sqdf}.}

\subsection{Isolated Estimator Analysis}
\label{subsec:isolated-estimator}
\vspace{-0.5em}

\textbf{Setup.}
We construct a 2D toy problem with a Gaussian-mixture reference distribution $p_0^{\mathrm{ref}}(\vx_0)$ and simple $r(x)$, yielding an analytically tractable $\textcolor{red}{\mathbf{\Psi}_{t_i}^\star}$.
We compare estimators $\textcolor{ForestGreen}{\hat{\mathbf{\Psi}}_{t_i}^\text{LA}}$ 
from Eqs.~\eqref{eq:fo-estimator}--\eqref{eq:zo-estimator} 
against ground truth $\textcolor{red}{\mathbf{\Psi}_{t_i}^\star}$ by RMSE under matched compute.
See Appendix~\ref{appendix:toy-experiment-details} for details.

\textbf{Results.}
Fig.~\ref{fig:toy_experiment}(b) shows that one-step first-order lookahead is severely biased at low SNR, whereas full-rollout zeroth-order guidance is unbiased but requires many more samples.
Reward centering reduces RMSE at no additional cost.
Fig.~\ref{fig:toy_experiment}(c-i) shows that lookahead depth $j$ induces a bias--variance--compute tradeoff.
Although deeper lookahead reduces bias, under limited samples shallower lookahead can be both more accurate and more compute-efficient, especially for zeroth-order estimators, because averaging over posterior uncertainty lowers variance relative to full rollout.
At fixed lookahead depth, first-order estimators typically improve faster with sample count than zeroth-order estimators\footnote{In this toy analysis, first-order curves use the full Jacobian chain to compute the exact first-order signal.}.
Fig.~\ref{fig:toy_experiment}(c-ii) shows that, under fixed total sample budget, different branching patterns yield similar RMSE--NFE trends\footnote{Larger $K_i$ improves the conditional estimate for a given parent latent, but under fixed budget reduces the number of distinct parent latents explored at timestep $t_i$. This tradeoff is not captured here, but is important in practice.}.
Fig.~\ref{fig:toy_experiment}(c-iii) shows that stochasticity localization improves estimator quality under fixed compute by making the induced value function easier to predict\footnote{As discussed in Section~\ref{sec:value_estimate}, this benefit comes partly from changing the underlying stochastic control problem rather than merely reducing variance for the same target.}.

\textbf{Findings.}
Under finite compute, the best local estimator need not be the least biased one: shallower lookahead estimators may outperform unbiased full-rollout estimators.
Our results also suggest that point estimates are too noisy.
Once branching is used at all, the specific choice of $(\#\text{split}, \#\text{branch})$ is relatively less important.
In addition, subtracting the sample mean substantially reduces variance.

\subsection{End-to-End Design}
\label{subsec:end-to-end-design}
We now test whether these design principles improve end-to-end optimization under matched compute.
Our improvements require comparable or less per-epoch compute than baselines; see Appendices~\ref{appendix:experiment-details},~\ref{appendix:extended-ablations} and~\ref{appendix:qualitative-results} for experiment details, more ablations and qualitative results, respectively.

\begin{figure}[!t]
\centering
\includegraphics[width=0.82\linewidth]{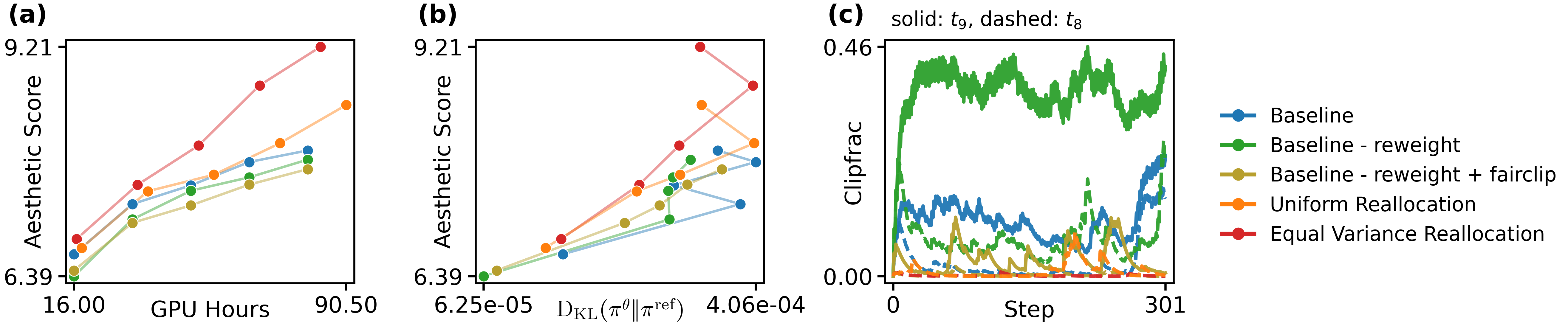}
\caption{\textbf{Improving high-SNR timesteps is better than merely suppressing them.}
Making clipping timestep-fair and reallocating budget improves performance and stability.
(a) Aesthetic Score vs.\ GPU hours.
(b) Aesthetic Score vs.\ $\mathcal{D}_\text{KL}$.
(c) Clip fraction for $t_9$ (solid) and $t_8$ (dashed).}
\label{fig:fairclip_reallocation}
\vspace{-1em}
\end{figure}

\textbf{Mechanistic Analysis.}
We begin with a case study on the zeroth-order flow-matching baseline TempFlow-GRPO.
Eq.~\eqref{eq:branch-covariance} suggests that, under fixed branching width, high-SNR timesteps are intrinsically noisier.
Yet the baseline allocates no additional budget to them, relying instead on strong clipping and downweighting for stability.

To address this inefficiency, we propose a redesign in three steps.
First, we make clipping comparable across timesteps by moving the timestep-dependent scale outside the clipping operator:
\begin{equation}
\label{eq:fair-clip}
\text{clip}_\xi\!\left(\frac{\|\mu_{t_i}^\theta-\mu_{t_i}^{\theta^\dagger}\|^2}{2\tilde{\sigma}_{t_i}^2 \Delta t_i}\right)
    \;\longrightarrow\;
    \frac{1}{\tilde{\sigma}_{t_i}^2 \Delta t_i}
    \text{clip}_\xi\!\left(\frac{\|\mu_{t_i}^\theta-\mu_{t_i}^{\theta^\dagger}\|^2}{2}\right).
\end{equation}
Second, we reallocate branching budget toward high-SNR timesteps, to reduce estimator variance where it is largest.
Third, despite this redistribution, we find that $t_9$ remains too noisy and heavily clipped to justify investment, so we deactivate it and redistribute its budget to other timesteps.

Fig.~\ref{fig:fairclip_reallocation} shows that uniform redistribution already improves over TempFlow-GRPO, while variance-aware redistribution performs best.
Our redesign improves reward more quickly while maintaining a comparable reward--KL tradeoff, suggesting that budget should be concentrated on timesteps whose guidance can be made reliable rather than on timesteps whose updates are later suppressed.

\textbf{Validation Across Settings.}
In zeroth-order flow matching, uniform branching across timesteps is suboptimal because timestep importance is reward-dependent.
For the primarily semantic GenEval~\cite{ghosh2023geneval} reward, high-SNR timesteps matter less, since semantic structure is mainly determined at low SNR~\cite{yu2023freedom}.
A redesign that concentrates branching budget on low-SNR timesteps reaches GenEval \(=0.97\) with an estimated \(5\times\) speedup over the reported TempFlow-GRPO baseline (Fig.~\ref{fig:zeroth_order_geneval_hps}(a)).

In zeroth-order diffusion, the PCPO~\cite{lee2025pcpo} baseline uses $K_i = 1$ for $\forall i$; each timestep receives a high-variance point estimate of the value gradient.
Ideally, one would choose $K_i \gg 1$ at all timesteps, but doing so over the full trajectory ($N=50$) is prohibitively expensive.
As a practical compromise, we choose $K_i \gg 1$ at a sparse subset of steps and $K_i=0$ elsewhere.
This sparse-allocation design substantially improves efficency while achieving a better reward--KL tradeoff (Fig.~\ref{fig:zeroth_order_geneval_hps}(b, c)).

Existing first-order baselines~\cite{liu2025vggflow,liu2025resnabladb,kang2026sqdf} rely on Tweedie-based guidance. The severe bias at low-SNR timesteps is largely obscured by their original aesthetic-dominant evaluation settings\footnote{Prior first-order methods are evaluated either on purely aesthetic rewards or on HPSv2.1~\cite{wu2023humanpreferencescorev2} with prompt subsets such as \texttt{hpd\_photo\_painting}, where semantic demands are relatively weak.} and aggressive low-SNR downweighting.
When fine-tuning on HPSv2.1 with GenEval training prompts, where both aesthetic quality and semantic fidelity matter, performance drops sharply under original designs, since meaningful learning must occur at low-SNR timesteps as well.
We therefore (i) replace these local gradients with terminal-image reward gradients \(\nabla_{\vx_0} r(\vx_0)\), effectively moving to \(j=0\), which yields a practical \emph{reward score distillation} update\footnote{Although the exact quantity \(\nabla_{\vx_{t_i}} r(\vx_0)\) would provide an unbiased first-order guidance signal, computing it requires a high-order Jacobian chain through the full denoising trajectory, which is both computationally expensive and potentially destabilizing, as noted in \citet{poole2023dreamfusion}. We therefore use the practical surrogate \(\nabla_{\vx_0} r(\vx_0)\); see Appendix~\ref{appendix:experiment-details} for details.}, while (ii) reversing the inherited low-SNR downweighting.
Across both flow-matching and diffusion backbones, this yields substantially faster reward improvement while remaining competitive in the reward--KL tradeoff (Fig.~\ref{fig:first_order_validation})\footnote{We display \emph{transition mean drift} for first-order flow matching, since KL divergence is undefined for the VGG-Flow baseline~\cite{liu2025vggflow}; see Appendices~\ref{appendix:end-to-end-design-details} and \ref{appendix:extended-ablations}.}.

\begin{figure}[!t]
\centering
\vspace{-1.2em}
\includegraphics[width=0.72\linewidth]{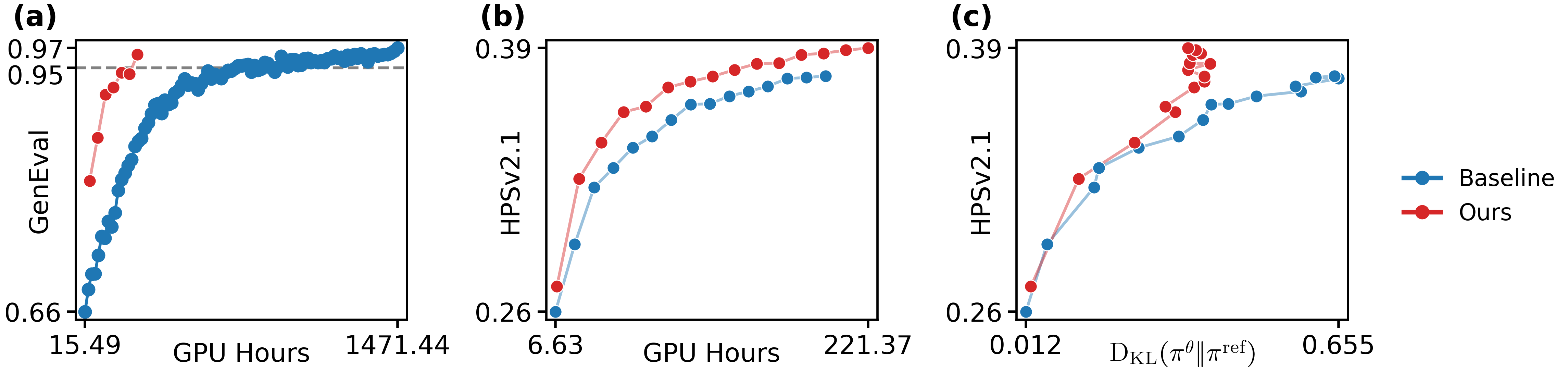}
\caption{\textbf{Validation: Zeroth-order methods.}
Principled budget allocation and temporal weighting improve performance on
(a) GenEval with SD3.5-M\protect\footnotemark\ and
(b, c) HPSv2.1 with SD1.5.}
\label{fig:zeroth_order_geneval_hps}
\vspace{-0.5em}
\end{figure}
\footnotetext{Due to resource constraints, we did not rerun the GenEval baseline; instead, we plot reverse-engineered data from Figure~3 of \citet{he2025tempflowgrpotimingmattersgrpo}. As a result, reward--KL tradeoff comparisons are unavailable for GenEval.}

\begin{figure}[!t]
\centering
\includegraphics[width=0.9\linewidth]{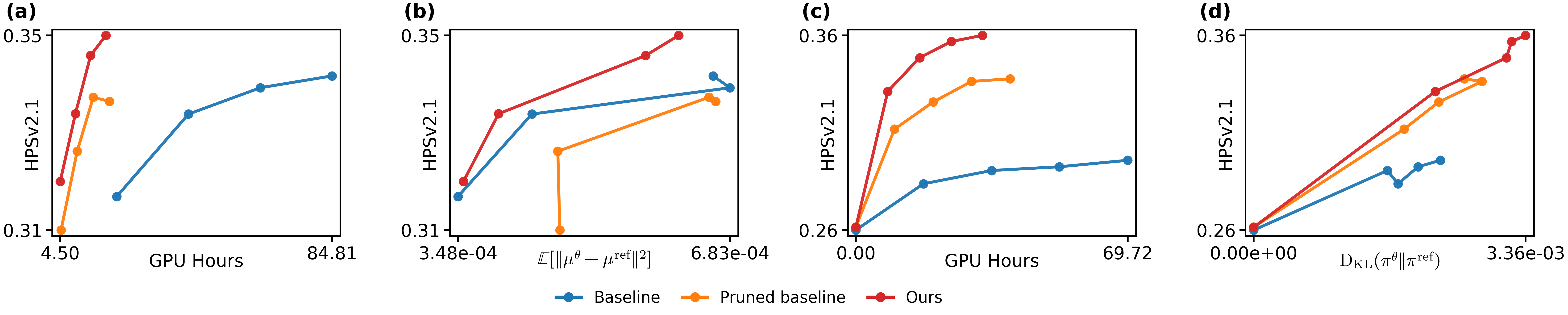}
\caption{\textbf{Validation: First-order methods.} Improved reward guidance for low-SNR timesteps yields faster reward gains, while maintaining a competitive reward--KL tradeoff on (a, b) SD3.5-M and (c, d) SD1.5.
See Appendix~\ref{appendix:reward-hacking} for more results.
}
\label{fig:first_order_validation}
\vspace{-1em}
\end{figure}

\section{Discussion}
\label{sec:discussion}

\textbf{Broadening the framework.}
RSM covers a broad class of affine flow-based reward fine-tuning methods, including methods derived from Soft RL as well as seemingly different formulations such as VGG-Flow, DRaFT/DRaFT-$K$, ReFL, and Adjoint Matching.
Several related objectives remain closely connected but non-identical.
Soft Q-learning is also a subset of soft RL but admits slightly different gradients~\cite{schulman2018equivalencepolicygradientssoft} (see Appendix~\ref{appendix:hard-soft-value} for details).

DiffusionNFT~\cite{zheng2025diffusionnft} can be interpreted as reward-weighted regression, which is closely connected to Soft RL but does not reduce exactly to the RSM objective (see Appendix~\ref{appendix:temp-diffusionnft}).
This connection suggests a natural first-order analogue of reward-weighted regression, obtained through a straightforward extension of DRaFT-LV (see Appendix~\ref{appendix:draft-lv-rwr}).

A complementary direction is to improve critic learning~\cite{jensen2026value,zhao2025scoreasaction}, since existing auxiliary networks $g_\phi$ often collapse under weak supervision and may benefit from better bootstrapped or consistency-based targets.

Related ideas extend to inference-time alignment, which can also be understood through the lens of value gradients~\cite{uehara2025inferencetimealignmentdiffusionmodels,holderrieth2026diamondmapsefficientreward}.
Appendix~\ref{appendix:inference_time_scaling} shows that methods such as DNO~\cite{tang2025inferencetime} rely on closely related zeroth-order principles.

\textbf{Better tuning of the design space.}
Rather than performing exhaustive tuning or aggressive engineering, we tested only a small number of principled modifications, yet observed substantial gains.
These results suggest considerable headroom for more systematic exploration of RSM's design space.
Further simplification may also be possible, for example by absorbing the trust-region term $C_2$ into temporal weighting, as in GRPO-Guard~\cite{wang2025grpoguard} (see Eq.~\eqref{eq:grpo-guard-unified-terms}).
In addition, joint optimization of the sampler and training objective, as in PCPO (diffusion)~\cite{lee2025pcpo}, could enable more fine-grained control over estimator variance and native temporal weights.
Furthermore, for differentiable rewards, interpolating between first- and zeroth-order estimators~\cite{ren2026halforder} may yield a more preferable tradeoff.

\textbf{Limitations.}
Although our redesigns improve existing reward-based fine-tuning methods, they do not eliminate reward hacking.
Our auxiliary evaluations suggest that the gains are not driven by \emph{more} aggressive reward hacking, but they do not establish emergent robustness to reward misspecification.
More broadly, while RSM exactly covers a broad class of reward-based fine-tuning methods for flow-based models, its current formulation does not cover offline preference-based objectives such as Diffusion-DPO~\cite{Wallace_2024_CVPR}.

\textbf{Broader Impact.}
\label{sec:impact-statement}
Improving alignment of flow-based models is important for generating data with desired properties, thereby better serving human needs across various applications.
Such methods may have harmful downstream uses, but we are not aware of any additional negative impacts specific to this work that require particular emphasis.

\section{Conclusion}
\label{sec:conclusion}

We introduced Reward Score Matching (RSM), a unified view of reward-based fine-tuning for flow and diffusion models.
Under this view, many methods that are motivated from different perspectives reduce to the same underlying score-matching problem, with their main differences arising from how value gradients are estimated, weighted, and regularized across timesteps.

This perspective also clarifies the practical structure of the design space.
In particular, it puts first-order and zeroth-order methods on common ground as different estimators of the same target, exposing shared bias--variance--compute tradeoffs that are otherwise obscured by method-specific derivations.
It further shows that much of the remaining variation can be understood through three coupled design dimensions: estimator design, temporal weighting, and trust-region realization.

Guided by this framework, we identified several simplifications and principled redesigns that improve fine-tuning-based alignment across representative differentiable and black-box rewards, often at substantially lower compute cost.
Taken together, these results suggest that progress depends less on introducing increasingly specialized objectives and more on understanding how to estimate, weight, and preserve value-guided updates effectively.
We aspire for RSM to provide a common language for comparing existing methods, and a practical foundation for designing stronger ones.

\bibliographystyle{plainnat} 
\bibliography{main}

\medskip

\newpage
\appendix

\section{Notation Guide}
\label{appendix:notation-guide}

Table~\ref{tab:notation-guide} summarizes the main symbols used throughout the paper.

\begin{table*}[t]
\centering
\small
\setlength{\tabcolsep}{5pt}
\renewcommand{\arraystretch}{1.08}
\caption{\textbf{Notation guide.} Main symbols used in the unified RSM framework.}
\label{tab:notation-guide}
\begin{tabular}{p{0.06\linewidth} p{0.43\linewidth} p{0.49\linewidth}}
\toprule
\textbf{Symbol} & \textbf{Meaning} & \textbf{Main role} \\
\midrule
$\textcolor{red}{\mathbf{\Psi}_{t_i}^\star}$ 
& Optimal value guidance, $\frac{1}{\alpha}\nabla_{\vx_{t_i}}V_{t_i}$ 
& Ideal reward-guided correction to the reference score \\

$\textcolor{ForestGreen}{\hat{\mathbf{\Psi}}_{t_i}}$ 
& Practical estimate of $\textcolor{red}{\mathbf{\Psi}_{t_i}^\star}$ 
& Guidance estimate before temporal reweighting \\

$\textcolor{blue}{\mathbf{\Psi}_{t_i}}$ 
& Effective guidance, $\gamma(t_i)\textcolor{ForestGreen}{\hat{\mathbf{\Psi}}_{t_i}}$ 
& Guidance actually used in the loss \\

$\gamma(t_i)$ 
& Heuristic temporal weight 
& Controls timestep-wise guidance strength \\

$C_1(t_i)$ 
& Main timestep weight in Eq.~\eqref{eq:master_equation} 
& Scales the score-matching update at $t_i$ \\

$C_2(t_i)$ 
& Trust-region coefficient in Eq.~\eqref{eq:master_equation} 
& Scales anchoring to the previous policy / score \\

$h(t_i)$ 
& Normalized Influence Metric 
& Effective timestep-wise optimization strength \\

$\Omega(t_i)$ 
& Score coefficient in the reverse transition kernel 
& Converts score guidance into mean shift \\

$\delta(t_i)$ 
& Parameterization-conversion factor 
& Maps native outputs to score differences \\

$j$ 
& Lookahead depth 
& Sets how far rollout proceeds before reward extraction \\

$K_i$ 
& Branching width at timestep $t_i$ 
& Number of descendants used for conditional estimation \\
\bottomrule
\end{tabular}
\end{table*}

\section{Main Proofs}
\label{appendix:core}

\subsection{Generality of affine conditional flows}
\label{appendix:general-flow-de}
We show that the affine conditional flow (Eq.~\eqref{eq:affine-flow-expand}) subsumes VP-SDE, VE-SDE, and Rectified Flow.

\textbf{VP-SDE.}
We define the coefficients $a_t=\sqrt{\bar\alpha_t}$ and $b_t=\sqrt{1-\bar\alpha_t}$, where $\bar\alpha_t=\exp{(-\int_0^t \beta(s) ds)}$ is derived from a pre-defined noise scheduler $\beta(s)$ satisfying $\beta(0)=0$ and strictly increasing $\beta(t)$.
Consequently, the flow ODE becomes
\begin{equation}
    d\vx_t = -\frac{1}{2}\beta(t)\vx_t dt- \frac{1}{2}\beta(t)\nabla_{\vx_t}\log p_t(\vx_t) dt.
\end{equation}

\textbf{VE-SDE.}
Let $a_t=1$ and $b_t=\sigma_t$ where $\sigma_t$ denotes a pre-defined noise variance schedule satisfying $\sigma_0=0$ and $\sigma_t < \sigma_{t+1}$.
The corresponding flow ODE is
\begin{equation}
    d\vx_t = -\dot\sigma_t \sigma_t \nabla_{\vx_t} \log p_t(\vx_t)dt.
\end{equation}

\textbf{Rectified Flow.}
Let $a_t=1-t$ and $b_t=t$ for $t\in[0,1]$.
Then, the flow ODE becomes
\begin{equation}
    d\vx_t = \frac{\vx_t - \mathbb{E}[\vx_0|\vx_t]}{t}dt.
\end{equation}

\subsection{Proof of discretized affine flow SDE}
\label{appendix:discretize}

\begin{proof}
Consider a conditional affine flow $\vx_t:=\psi_t(\vx_1|\vx_0)=a_t\vx_0 + b_t\vx_1$ with $\vx_1\sim\gN(0,\rmI)$.
The corresponding flow ODE is
\begin{align}
    d\vx_t = \vv_t(\vx_t, t)dt &= \frac{\dot a_t}{a_t}\vx_t dt + \left( \frac{\dot a_t b_t^2 - a_t\dot b_t b_t}{a_t}\right)\nabla_{\vx_t}\log p_t(\vx_t) dt\nonumber\\
    &= A(t) \vx_t dt + B(t) \vs(\vx_t, t).
\end{align}
From \citet{song2021scorebased}, the flow SDE that matches the same marginal density across time $t$ is
\begin{align}
    d\vx_t &= [\vv_t(\vx_t, t) -\frac{1}{2}g_t \vs(\vx_t, t)]dt + \sqrt{g_t}d\vw_t\\
    &= [A(t)\vx_t + (B(t)-\frac{1}{2}g_t)\vs(\vx_t, t)]dt + \sqrt{g_t}d\vw_t\\
    &= [A(t)\vx_t + \tilde B(t)\vs(\vx_t, t)]dt + \sqrt{g_t}d\vw_t,
\end{align}
where we set $\tilde B(t)=B(t)-0.5 g_t$. Note that $\tilde B(t)=B(t)$ when $g_t=0$ (i.e., flow ODE).

Consider an integrating factor $\Phi(t)=\exp{(\int A(\tau)d\tau)}$. For $\vy_t = \Phi(t)^{-1} \vx_t$, we apply the Itô derivative and obtain
\begin{align}
    d\vy_t &= \Phi(t)^{-1} d\vx_t + \vx_t d\Phi(t)^{-1}\\
    &= \Phi(t)^{-1}[A(t)\vx_tdt + \tilde B(t)\vs(\vx_t, t)dt + \sqrt{g_t} d\vw_t] - A(t)\Phi(t)^{-1}\vx_t dt\\
    &= \Phi(t)^{-1}[\tilde B(t)\vs(\vx_t,t)dt + \sqrt{g_t}d\vw_t],
\end{align}
where the first equality holds as $d\Phi(t)^{-1}d\vx_t \approx 0$.
Integrating both sides from $s$ to $t$ gives
\begin{align}
    \vx_t &= \frac{\Phi(t)}{\Phi(s)} \vx_s+  \frac{\Phi(t)}{\Phi(s)} \int_s^t  \frac{\Phi(s)}{\Phi(r)} \tilde B(r)\vs(\vx_r, r) dr +  \frac{\Phi(t)}{\Phi(s)} \int_s^t  \frac{\Phi(s)}{\Phi(r)} \sqrt{g_r}d\vw_r\\
    &\approx \frac{\Phi(t)}{\Phi(s)} \vx_s+  \left[\frac{\Phi(t)}{\Phi(s)} \int_s^t \frac{\Phi(s)}{\Phi(r)} \tilde B(r)dr \right] \vs(\vx_s, s) +  \frac{\Phi(t)}{\Phi(s)} \int_s^t  \frac{\Phi(s)}{\Phi(r)} \sqrt{g_r}d\vw_r,
\end{align}
where we assume $\vs(\vx_r, r) \approx \vs(\vx_s, s)$, which corresponds to the \emph{first-order exponential integrator} underlying the DPM-solver.
Note that higher-order DPM-solvers use higher-order approximations here.

For conditional flow models, $A(t):=\dot a_t/a_t$ and $\Phi(t):=a_t$.
Thus, the discretized SDE simplifies to:
\begin{align}
    \vx_t = \frac{a_t}{a_s} \vx_s + \left[ \frac{a_t}{a_s} \int_s^t \frac{a_s}{a_r}\tilde B(r)dr \right] \vs(\vx_s, s) + a_t \int_s^t \frac{1}{a_r}\sqrt{g_r} d\vw_r,
\end{align}
where the first term denotes a scaled previous sample, the second term denotes a score function, and the third term denotes the Gaussian noise with zero mean and covariance $\sigma_{s}^2\rmI$ where $\sigma_{s}^2 = a_t^2 \int_s^t \frac{g_r}{a^2_r} dr$.
Thus, setting $s=t_i$ and $t=t_{i-1}$ yields the general transition density:
\begin{align}
    p(\vx_{t_{i-1}}|\vx_{t_i}) = \gN(\underbrace{\kappa(t_i)\vx_{t_i} + \Omega(t_i)\vs(\vx_{t_i})}_{=:\pmb\mu_{t_i}(\vx_{t_i})}, \sigma_{t_i}^2 \rmI).
    \label{eq:general_kernel}
\end{align}
\end{proof}

\subsection{Proof of Proposition~\ref{cor:gen_soft_value}}
We begin with a simple lemma establishing the zero-mean score identity.
\begin{lemma}[Zero-mean score identity under tail regularity]
\label{thm:zero_mean_score}
Fix $\vx\in\mathbb{R}^d$ and let $p(\cdot\mid \vx)$ be a conditional density on $\mathbb{R}^d$
(with respect to Lebesgue measure) such that:
\begin{enumerate}
\item $\int_{\mathbb{R}^d} p(\vu\mid \vx)\,d\vu = 1$;
\item $p(\cdot\mid \vx)$ is differentiable in $\vu$, and $\int_{\mathbb{R}^d}\big|\partial_{u_j} p(\vu\mid x)\big|\,d\vu < \infty \ \text{ for } \ \forall j\in\{1,\dots,d\}$;
\item For $\forall j$, for almost every $\vu_{-j}\in\mathbb{R}^{d-1}$,
      $\lim_{\vu_j\to\pm\infty} p(u_j,\vu_{-j}\mid \vx)=0$.
\end{enumerate}
Then the conditional score has zero mean:
\begin{equation}
\label{eq:zero_mean_score}
\mathbb{E}_{\vu\sim p(\cdot\mid \vx)}\big[\nabla_\vu \log p(\vu\mid \vx)\big] = \mathbf{0}.
\end{equation}
\begin{proof}
For each $j$,
\[
\mathbb{E}_{p(\cdot\mid \vx)}[\partial_{u_j}\log p(\vu\mid \vx)]
= \int_{\mathbb{R}^d} p(\vu\mid \vx)\,\frac{\partial_{u_j}p(\vu\mid \vx)}{p(\vu\mid \vx)}\,d\vu
= \int_{\mathbb{R}^d} \partial_{u_j}p(\vu\mid \vx)\,d\vu,
\]
where the equality is valid wherever $p(\vu \mid \vx)>0$, and condition 2 justifies the integral.
Writing $\vu=(u_j,\vu_{-j})$ and using Fubini's theorem, we obtain
\[
\int_{\mathbb{R}^d}\partial_{u_j}p(\vu\mid \vx)\,d\vu
= \int_{\mathbb{R}^{d-1}}\left(\int_{\mathbb{R}}\partial_{u_j}p(u_j,\vu_{-j}\mid \vx)\,du_j\right)d\vu_{-j}.
\]
For fixed $\vu_{-j}$, the inner integral is a 1D integral of a derivative, hence by the fundamental
theorem of calculus,
\[
\int_{\mathbb{R}}\partial_{u_j}p(u_j,\vu_{-j}\mid \vx)\,du_j
= \Big[p(u_j,\vu_{-j}\mid \vx)\Big]_{u_j=-\infty}^{u_j=+\infty}.
\]
By condition 3, this boundary term is $0$ for almost every $\vu_{-j}$, so the outer integral
is $0$.
Since this holds for each $j$, we conclude Eq.~\eqref{eq:zero_mean_score}.
\end{proof}
\end{lemma}
With this identity in place, we now prove the main result for the first-order score-matching kernel.
\gensoftval*
\begin{proof}
First, we will prove the first part, i.e. the current-state characterization:
\begin{equation}
    \textcolor{red}{\mathbf{\Psi}^\star_{t_i}}(\vx_{t_i}) = \frac{1}{\alpha}\nabla_{\vx_{t_i}}V_{t_i}(\vx_{t_i}).
\end{equation}
From Eq.~\eqref{eq:optimal-marginal-distribution}, we derive the optimal Value Guidance Vector.
Take the logarithm of both sides:
\begin{equation}
  \log p_{t_i}^\star(\vx_{t_i}) = \log p_{t_i}^{\text{ref}}(\vx_{t_i}) + \frac{1}{\alpha}V_{t_i}(\vx_{t_i}) - \log Z.
\end{equation}
Then, differentiate with respect to $\vx$ at $t = t_i$:
\begin{equation}
  \begin{aligned}
    \nabla_{\vx_{t_i}} \log p_{t_i}^\star(\vx_{t_i})
    &= \nabla_{\vx_{t_i}} \log p_{t_i}^{\text{ref}}(\vx_{t_i})
    + \frac{1}{\alpha}\nabla_{\vx_{t_i}}V_{t_i}(\vx_{t_i}) \\
    \vs^\star_{t_i}(\vx_{t_i})
    &= \vs^{\text{ref}}_{t_i}(\vx_{t_i})
    + \frac{1}{\alpha}\nabla_{\vx_{t_i}} V_{t_i}(\vx_{t_i}).
    \label{eq:reward_optimal_score}
  \end{aligned}
\end{equation}
Finally, by comparing this to the definition $\vs^\star_{t_i}(\vx_{t_i}) = \vs^{\text{ref}}_{t_i}(\vx_{t_i}) + \textcolor{red}{\mathbf{\Psi}^\star_{t_i}}(\vx_{t_i})$, we conclude that:
\begin{equation}
    \textcolor{red}{\mathbf{\Psi}^\star_{t_i}}(\vx_{t_i}) = \frac{1}{\alpha}\nabla_{\vx_{t_i}}V_{t_i}(\vx_{t_i}).
\end{equation}

Next,  we will express the optimal Value Guidance Vector in relation to $\nabla_{\vx_{t_{i-1}}} V_{t_{i-1}}(\vx_{t_{i-1}})$.
Using Eq.~\eqref{eq:reverse_transition},
 the \emph{reference} one-step kernel at time $t_i$ can be represented by
\[
p^{\mathrm{ref}}_{t_i}(\vx_{t_{i-1}}\mid \vx_{t_i})
=\mathcal{N}\!\Big(\,\underbrace{\kappa(t_i)\vx_{t_i}+{\Omega(t_i)}\vs_{t_i}^\text{ref}(\vx_{t_i})}_{=:~\vmu^{\mathrm{ref}}_{t_i}(\vx_{t_i})}\;,\;{\sigma_{t_i}^2}\mathbf{I}\Big),
\]
Using the definition of Bellman equations and optimal policies~\cite{uehara2024understandingreinforcementlearningbasedfinetuning}, we obtain:
\begin{equation}
\label{eq:soft-opt-kernel}
p^\star_{t_i}(\vx_{t_{i-1}}\mid \vx_{t_i})
=
\frac{
\exp\!\big(V^\star_{t_{i-1}}(\vx_{t_{i-1}})/\alpha\big)\;
p^{\mathrm{ref}}_{t_i}(\vx_{t_{i-1}}\mid \vx_{t_i})
}{
\exp\big(V^\star_{t_{i}}(\vx_{t_{i}})/\alpha\big)
}.
\end{equation}
Substituting the Gaussian reference kernel into the preceding identity gives
\[
\log p^\star_{t_i}(\vx_{t_{i-1}}\mid \vx_{t_i})
=
\frac{V^\star_{t_{i-1}}(\vx_{t_{i-1}})}{\alpha}
-\frac{1}{2\sigma_{t_i}^2}\big(\vx_{t_{i-1}}-\vmu^{\mathrm{ref}}_{t_i}(\vx_{t_i})\big)^\top
\big(\vx_{t_{i-1}}-\vmu^{\mathrm{ref}}_{t_i}(\vx_{t_i})\big),
\]
up to an additive constant independent of $\vx_{t_{i-1}}$. Next,
we differentiate w.r.t.\ $\vx_{t_{i-1}}$:
\[
\nabla_{\vx_{t_{i-1}}}\log p^\star_{t_i}(\vx_{t_{i-1}}\mid \vx_{t_i})
=
\frac{1}{\alpha}\nabla_{\vx_{t_{i-1}}}V^\star_{t_{i-1}}(\vx_{t_{i-1}})
-\frac{1}{\sigma_{t_i}^2}\big(\vx_{t_{i-1}}-\vmu^{\mathrm{ref}}_{t_i}(\vx_{t_i})\big).
\]
Taking the conditional expectation under $p^\star_{t_i}(\cdot\mid \vx_{t_i})$ and using Lemma~\ref{thm:zero_mean_score}, we obtain
\[
\mathbf{0}
=
\frac{1}{\alpha}\,
\mathbb{E}_{\vx_{t_{i-1}}\sim p^\star_{t_i}(\cdot\mid \vx_{t_i})}
\!\left[\nabla_{\vx_{t_{i-1}}}V^\star_{t_{i-1}}(\vx_{t_{i-1}})\right]
-\frac{1}{\sigma_{t_i}^2}\Big(
\mathbb{E}_{\vx_{t_{i-1}}\sim p^\star_{t_i}(\cdot\mid \vx_{t_i})}[\vx_{t_{i-1}}]
-\vmu^{\mathrm{ref}}_{t_i}(\vx_{t_i})
\Big).
\]
Rearranging yields the conditional mean identity
\begin{equation}
\mathbb{E}_{p^\star_{t_i}}[\vx_{t_{i-1}}\mid \vx_{t_i}]
=
\vmu^{\mathrm{ref}}_{t_i}(\vx_{t_i})
+
\frac{\sigma_{t_i}^2}{\alpha}\,
\mathbb{E}_{\vx_{t_{i-1}}\sim p^\star_{t_i}(\cdot\mid \vx_{t_i})}
\!\left[\nabla_{\vx_{t_{i-1}}}V^\star_{t_{i-1}}(\vx_{t_{i-1}})\right].
\end{equation}
Matching the mean of $p_{t_i}^{\textcolor{blue}{\Psi}}(\cdot\mid \vx_{t_i})$, namely 
$\vmu^{\mathrm{ref}}_{t_i}(\vx_{t_i})+\Omega(t_i)\textcolor{blue}{\mathbf{\Psi}_{t_i}}(\vx_{t_i})$, with $\mathbb{E}_{p^\star_{t_i}}[\vx_{t_{i-1}}\mid \vx_{t_i}]$ yields:
\[\textcolor{red}{\mathbf{\Psi}^\star_{t_i}}(\vx_{t_i})
=
\frac{\sigma_{t_i}^2}{\alpha\Omega(t_i)}\,
\mathbb{E}_{\vx_{t_{i-1}}\sim p^\star_{t_i}(\cdot\mid \vx_{t_i})}
\!\left[\nabla_{\vx_{t_{i-1}}}V^\star_{t_{i-1}}(\vx_{t_{i-1}})\right].
\]
\end{proof}

\subsection{Proof of Proposition~\ref{prop:consistent_first_order}}
\label{appendix:formalize-first-order}

\consistentfirstorder*

\begin{proof}
For \(j=i\), Eq.~\eqref{eq:fo-consistent-estimator-compact} reduces to
\begin{equation}
\textcolor{ForestGreen}{\hat{\mathbf{\Psi}}_{t_i}^{\mathrm{FO,res}}(i)}
=
\frac{1}{\alpha}
\Big(
\gamma_{t_i}\nabla_{\vx_{t_i}} r(\hat \vx_{0 \mid t_i})
+
g_\phi(\vx_{t_i})
\Big)
=
\frac{1}{\alpha}\nabla_{\vx_{t_i}} V_{t_i}(\vx_{t_i})
=
\textcolor{red}{\mathbf{\Psi}_{t_i}^\star}(\vx_{t_i}),
\end{equation}
where the second equality follows from Eq.~\eqref{eq:fo-decomposition-prop}, and the third from Eq.~\eqref{eq:alternate-optimal-psi}.

For \(j=i-1\), by linearity of expectation and Eq.~\eqref{eq:fo-decomposition-prop},
\begin{equation}
\begin{aligned}
\mathbb{E}\!\left[
\textcolor{ForestGreen}{\hat{\mathbf{\Psi}}_{t_i}^{\mathrm{FO,res}}(i-1)}
\;\middle|\;
\vx_{t_i}
\right]
&=
\frac{\sigma_{t_i}^2}{\alpha \Omega(t_i)}
\mathbb{E}_{\vx_{t_{i-1}}\sim p^\star_{t_i}(\cdot \mid \vx_{t_i})}
\left[
\gamma_{t_i}\nabla_{\vx_{t_{i-1}}} r(\hat \vx_{0 \mid t_{i-1}})
+
g_\phi(\vx_{t_{i-1}})
\right] \\
&=
\frac{\sigma_{t_i}^2}{\alpha \Omega(t_i)}
\mathbb{E}_{\vx_{t_{i-1}}\sim p^\star_{t_i}(\cdot \mid \vx_{t_i})}
\left[
\nabla_{\vx_{t_{i-1}}} V_{t_{i-1}}(\vx_{t_{i-1}})
\right].
\end{aligned}
\end{equation}
Applying Proposition~\ref{cor:gen_soft_value} yields
\begin{equation}
\mathbb{E}\!\left[
\textcolor{ForestGreen}{\hat{\mathbf{\Psi}}_{t_i}^{\mathrm{FO,res}}(i-1)}
\;\middle|\;
\vx_{t_i}
\right]
=
\textcolor{red}{\mathbf{\Psi}_{t_i}^\star}(\vx_{t_i}),
\end{equation}
which proves the claim.
\end{proof}

Proposition~\ref{prop:consistent_first_order} formalizes the idealized condition required by residual-corrected first-order methods: if \(g_\phi\) exactly captures the Jensen gap introduced by the Tweedie-based approximation, then both current-state and one-step lookahead first-order guidance are unbiased.
However, in practice, Appendix~\ref{subsec:experiment-invalidate-auxiliary} shows that the learned residual term is negligible, suggesting that this condition is not realized empirically.
Thus, the Jensen gap
    $V_t(\vx_t) - r(\hat \vx_0(\vx_t))
    =
    \mathbb{E}[r(\vx_0)] - r(\mathbb{E}[\vx_0])$
is not meaningfully closed.

Under exact residual correction, Eq.~\eqref{eq:fo-decomposition-prop} recovers the intended $\mathcal{L}_\text{forward}$ of Residual $\nabla$-DB.
Choosing $g_\phi(\cdot)=\mathbf{0}$ yields SQDF.
$\gamma_t=1, g_\phi(\cdot)=\mathbf{0}$ returns to the case where we approximate the value function with only Tweedie's formula, as in DPS~\cite{chung2023diffusion}.

\subsection{Proof of Theorem~\ref{thm:stein-lemma}}
\label{appendix:formalize-stein}
We prove that the existing implementation of the KL-regularized REINFORCE estimator (derived in Section~\ref{appendix:subsec-zeroth-order}) is an unbiased estimator of the gradient of the \emph{hard} value function.
We clarify that the following derivation aligns with the practical reality that existing implementations of Soft PPO and GRPO \cite{fan2023dpok, liu2025flowgrpo} treat the KL penalty as additive intermediate rewards, rather than solving the exact log-sum-exp soft Bellman equations as suggested by~\citet{schulman2018equivalencepolicygradientssoft}.

\begin{definition}[Value Function of Diffusion MDP]
Let $r: \R^d \to \R$ be the terminal reward at $t = t_0$.
Given a transition function $f: \R^d \times \Z_{\ge 0} \to \R^d$ and a noise schedule $\sigma: \Z_{\ge 0} \to \R$, the definition of the value function $V: \R^d \times \Z_{\ge 0} \to \R$ under the expectation over the optimal policy is stated as:
\begin{align}
 \label{eq:soft-v-bellman-opt}
 V_{t_i}(\vx_{t_{i}}) &:= \mathbb{E}_{\vx_{t_{i-1:0}}\sim p_{t_{i:1}}^\star} [r(\vx_{t_0})]
\end{align}
\end{definition}

\policygradientzeroth*
\begin{proof}
When $\veps_{t_i}$ denotes the Gaussian noise injected in the transition process described in Eq.~\ref{eq:reverse_transition}, the value gradient estimate is given as:
\begin{align}
&\mathbb{E}_{\vx_{t_{i-1}}\sim p_{t_i}^\star}\left[\nabla_{\vx_{t_{i-1}}}V_{t_{i-1}}(\vx_{t_{i-1}})\right] \label{eq:thm37-line1} \\
&= \mathbb{E}_{\vx_{t_{i-1}}\sim p_{t_i}^\star}\left[\nabla_{\vx_{t_{i-1}}} \left(\mathbb{E}_{\vx_{t_{i-2:0}}\sim p_{t_{i-1:1}}^\star, \veps_{t_{i-1:1}} \overset{\mathrm{iid}}{\sim} \mathcal{N}(\mathbf{0}, \mathbf{I})} [r(\vx_{t_0})]\right)\right] \label{eq:thm37-line2}\\
&= \mathbb{E}_{\vx_{t_{i-1}}\sim p_{t_i}^\star, \veps_{t_i}\sim \mathcal{N}(\mathbf{0}, \mathbf{I})}\left[\frac{1}{\sigma_{t_i}}\nabla_{\veps_{t_i}} \left(\mathbb{E}_{\vx_{t_{i-2:0}}\sim p_{t_{i-1:1}}^\star, \veps_{t_{i-1:1}} \overset{\mathrm{iid}}{\sim} \mathcal{N}(\mathbf{0}, \mathbf{I})} [r(\vx_{t_0})]\right)\right] \\
&= \mathbb{E}_{\vx_{t_{i-1}}\sim p_{t_i}^\star, \veps_{t_i}\sim \mathcal{N}(\mathbf{0}, \mathbf{I})}\left[\frac{1}{\sigma_{t_i}} \left(\mathbb{E}_{\vx_{t_{i-2:0}}\sim p_{t_{i-1:1}}^\star, \veps_{t_{i-1:1}} \overset{\mathrm{iid}}{\sim} \mathcal{N}(\mathbf{0}, \mathbf{I})} [r(\vx_{t_0})\veps_{t_i}]\right)\right]. \label{eq:thm37-line4} 
\end{align}
The last line comes from a special case of Stein's identity~\cite{Stein1981}, which states that for a differentiable function $h$,
\begin{equation}
\label{eq:steins-identity3}
\mathbb{E}_{\veps \sim \mathcal{N}(\mathbf{0}, \mathbf{I})}[h(\veps)\veps] = \mathbb{E}_{\veps \sim \mathcal{N}(0, \mathbf{I})}[\nabla_\veps h(\veps)].
\end{equation}

Multiplying Eqs.~\eqref{eq:thm37-line1},~\eqref{eq:thm37-line4} by $\frac{\sigma_{t_i}^2}{\alpha \Omega(t_i)}$ completes the proof:
\begin{align}
    \mathbb{E}_{\vx_{t_{i-1}}\sim p_{t_{i}}^\star}[\frac{\sigma_{t_i}^2}{\alpha \Omega(t_i)}\nabla_{\vx_{t_{i-1}}} V_{t_{i-1}}(\vx_{t_{i-1}})] = \frac{\sigma_{t_i}}{\alpha \Omega(t_i)}\mathbb{E}_{\vx_{t_{i-1:0}}\sim p_{t_{i:1}}^\star, \veps_{t_{i:1}} \overset{\mathrm{iid}}{\sim} \mathcal{N}(\mathbf{0}, \mathbf{I})}[r(\vx_0)\veps_{t_i}].
\end{align}
\end{proof}

\subsection{Hard versus Soft Value Gradients in Practical Implementations}
\label{appendix:hard-soft-value}

Sections~\ref{appendix:formalize-first-order} and~\ref{appendix:formalize-stein}
show that full-rollout estimators can provide unbiased estimates of a value gradient.
The relevant point, however, is which value function is being differentiated.
In the main text, we show that the existing diffusion fine-tuning methods are unified to admit $\textcolor{ForestGreen}{\hat{\mathbf{\Psi}}_{t_i}}$ which denotes an estimator of the gradient of the
\emph{hard} value function, namely the expected unregularized terminal reward
\begin{equation}
V^{\rm hard}_{t_i}(\vx_{t_i})
:=
\mathbb{E}_{p^\star}
\!\left[
r(\vx_{t_0}) \mid \vx_{t_i}
\right].
\end{equation}
To get this value-gradient surrogate of many practical diffusion fine-tuning methods, the rollout estimator uses the terminal reward $r(\vx_{t_0})$, while the KL regularization against the reference model is introduced as a separate timestep-wise penalty or score correction.

However, this convention differs from the fully entropy-regularized, or \emph{soft}, value function from soft RL.
For a given policy $p_{t_{i:1}}$, define the entropy-augmented return from $\vx_{t_i}$ by
\begin{equation}
\tilde r_{t_i}
:=
r(\vx_{t_0})
-
\alpha
\sum_{k=1}^{i}
D_{\rm KL}
\!\left(
p_{t_k}(\cdot \mid \vx_{t_k})
\Vert
p^{\rm ref}_{t_k}(\cdot \mid \vx_{t_k})
\right).
\end{equation}
The corresponding soft value function is
\begin{equation}
V^{\rm soft}_{t_i}(\vx_{t_i})
:=
\max_{p_{t_{i:1}}}
\mathbb{E}
\!\left[
\tilde r_{t_i}
\mid
\vx_{t_i}
\right],
\end{equation}
where the expectation is taken over the reverse trajectory induced by $p_{t_{i:1}}$.
Equivalently, if $p^\star_{t_{i:1}}$ denotes an optimizer of the entropy-regularized objective, then
\begin{equation}
V^{\rm soft}_{t_i}(\vx_{t_i})
=
\mathbb{E}_{p^\star_{t_{i:1}}}
\!\left[
\tilde r_{t_i}
\mid
\vx_{t_i}
\right].
\end{equation}
This is the diffusion analogue of the entropy-augmented value function in soft RL, where the return itself contains the KL costs rather than adding only an instantaneous KL-gradient term~\cite{schulman2018equivalencepolicygradientssoft}.
Under the same regularity assumptions used in Appendix~\ref{appendix:formalize-stein}, applying Stein's identity to the entropy-augmented return yields the soft-value analogue of the full-rollout estimator (see Appendix~\ref{appendix:completeness-soft-version}):
\begin{equation}
\frac{\sigma_{t_i}}{\alpha \Omega(t_i)}
\mathbb{E}_{\vx_{t_{i-1:0}}\sim p_{t_{i:1}}^\star, \veps_{t_{i:1}} \sim \mathcal{N}(\mathbf{0}, \mathbf{I})}
\!\left[
\tilde r_{t_i}\veps_{t_i}
\mid
\vx_{t_i}
\right]
=
\frac{1}{\alpha}
\nabla_{\vx_{t_i}}
V^{\rm soft}_{t_i}(\vx_{t_i})
=:
\textcolor{red}{\mathbf{\Psi}^{\star}_{t_i}(\vx_{t_i})}.
\label{eq:soft_value_zo_estimate}
\end{equation}
Thus, a fully soft-value-consistent zeroth-order method would replace the terminal reward
$r(\vx_{t_0})$ in the rollout estimator by the entropy-augmented return $\tilde r_{t_i}$.

It can be shown that Policy gradient methods using the entropy-augmented reward $\tilde r_{t_i}$ admit an unbiased zeroth-order estimator for the soft value gradient in Eq.~\eqref{eq:soft_value_zo_estimate}, equivalent to that of soft Q-learning methods.
On the other hand, estimators that add only a local KL-gradient term, which is a widely-used surrogate of many practical diffusion fine-tuning methods, are not identical to estimators in Eq.~\eqref{eq:soft_value_zo_estimate} because they do not account for how earlier actions affect future KL costs\cite{schulman2018equivalencepolicygradientssoft}.
More formally, the surrogate of existing implementations unified by RSM corresponds to the form used in many existing diffusion fine-tuning implementations:
\begin{equation}
-\frac{1}{\alpha}
\nabla_{\vx_{t_i}}
V^{\rm hard}_{t_i}(\vx_{t_i})
+
\left(
\vs^\star_{t_i}(\vx_{t_i})
-
\vs^{\rm ref}_{t_i}(\vx_{t_i})
\right).
\label{eq:hard_value_plus_kl_surrogate}
\end{equation}
This expression is closely related to, but not algebraically identical to,
\begin{equation}
-\frac{1}{\alpha}
\nabla_{\vx_{t_i}}
V^{\rm soft}_{t_i}(\vx_{t_i}).
\end{equation}

The difference is not a failure of the RSM reduction; rather, it specifies the exact surrogate that the reduction describes. Since our goal is to unify the objectives used by \emph{existing} fine-tuning methods (see Table~\ref{tab:unified-table}), the main text focuses on the hard-value estimator with a separately added KL correction. A fully soft-value-consistent variant can be obtained by using $\tilde r_{t_i}$ inside the value-gradient estimator, as in Eq.~\eqref{eq:soft_value_zo_estimate}. We leave a systematic empirical study of when this distinction matters to future work.

\subsubsection{Soft-Value Estimator}
\label{appendix:completeness-soft-version}

For completeness, we now formalize the soft value gradient estimator.
This is a direct analogue of Theorem~\ref{thm:stein-lemma}, with the terminal reward replaced by the entropy-augmented return.

\begin{restatable}[Soft-value analogue of the zeroth-order estimator]{theorem}{softvaluestein}
\label{thm:stein-lemma-soft-value}
Assume the regularity conditions of Theorem~\ref{thm:stein-lemma}, and assume that the optimizer
$p^\star_{t_{i:1}}$ of the entropy-regularized objective satisfies the standard envelope-theorem conditions.
Then
\begin{equation}
\frac{\sigma_{t_i}}{\alpha \Omega(t_i)}
\mathbb{E}_{\vx_{t_{i-1:0}}\sim p_{t_{i:1}}^\star, \veps_{t_{i:1}} \sim \mathcal{N}(\mathbf{0}, \mathbf{I})}
\!\left[
\tilde r_{t_i}\veps_{t_i}
\mid
\vx_{t_i}
\right]
=
\frac{1}{\alpha}
\nabla_{\vx_{t_i}}
V^{\rm soft}_{t_i}(\vx_{t_i}).
\end{equation}
\end{restatable}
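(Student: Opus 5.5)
Our plan mirrors the proof of Theorem~\ref{thm:stein-lemma}, with the terminal reward replaced by the entropy-augmented return $\tilde r_{t_i}$, plus one envelope-theorem step so that we never differentiate the optimizer $p^\star_{t_{i:1}}$.

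\textbf{Step 1: one-step decomposition.} Split the entropy-augmented return as
\[
\tilde r_{t_i} = \tilde r_{t_{i-1}} - \alpha\,D_{\rm KL}\big(p^\star_{t_i}(\cdot\mid\vx_{t_i})\,\Vert\,p^{\rm ref}_{t_i}(\cdot\mid\vx_{t_i})\big).
\]
The KL term depends only on $\vx_{t_i}$. Moreover $\mathbb{E}[\veps_{t_i}\mid\vx_{t_i}]=\mathbf 0$, since $\veps_{t_i}$ is the fresh Gaussian noise injected at step $t_i$. Hence this term drops out of the left-hand side, and the left-hand side equals
\[
\frac{\sigma_{t_i}}{\alpha\Omega(t_i)}\,\mathbb{E}\big[\tilde r_{t_{i-1}}\veps_{t_i}\mid\vx_{t_i}\big].
\]

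\textbf{Step 2: Stein's identity.} Let $V^{\rm soft}_{t_{i-1}}(\vx_{t_{i-1}})=\mathbb{E}[\tilde r_{t_{i-1}}\mid\vx_{t_{i-1}}]$. Write $\vx_{t_{i-1}}=\vmu^\star_{t_i}(\vx_{t_i})+\sigma_{t_i}\veps_{t_i}$, where $\vmu^\star_{t_i}$ is the mean of the optimal Gaussian kernel; this is the same reparametrization as in Eq.~\eqref{eq:thm37-line2}--\eqref{eq:thm37-line4}. By the tower property and Eq.~\eqref{eq:steins-identity3},
\[
\mathbb{E}[\tilde r_{t_{i-1}}\veps_{t_i}\mid\vx_{t_i}] = \sigma_{t_i}\,\mathbb{E}_{p^\star_{t_i}}\big[\nabla V^{\rm soft}_{t_{i-1}}(\vx_{t_{i-1}})\big].
\]
The left-hand side therefore becomes
\[
\frac{\sigma_{t_i}^2}{\alpha\Omega(t_i)}\,\mathbb{E}_{p^\star_{t_i}}\big[\nabla V^{\rm soft}_{t_{i-1}}(\vx_{t_{i-1}})\big].
\]

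\textbf{Step 3: identify with the current-state gradient.} The soft Bellman recursion for the KL-regularized problem gives the log-sum-exp form
\[
\exp\big(V^{\rm soft}_{t_i}/\alpha\big) = \int \exp\big(V^{\rm soft}_{t_{i-1}}/\alpha\big)\,p^{\rm ref}_{t_i}\,d\vx_{t_{i-1}}.
\]
This is exactly the identity underlying Eq.~\eqref{eq:soft-opt-kernel}. The envelope theorem (assumed in the statement) justifies it: at the maximizer, the dependence of the optimal policy on $\vx_{t_i}$ contributes nothing to first order. With $V^{\rm soft}$ playing the role of $V^\star$, we can then invoke the second equality of Proposition~\ref{cor:gen_soft_value}:
\[
\frac{\sigma_{t_i}^2}{\alpha\Omega(t_i)}\,\mathbb{E}_{p^\star_{t_i}}\big[\nabla V^{\rm soft}_{t_{i-1}}\big] = \frac{1}{\alpha}\nabla_{\vx_{t_i}}V^{\rm soft}_{t_i}.
\]

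\textbf{Route for Step 3 if the identification is in doubt.} One can instead differentiate the log-sum-exp representation under the integral. Since $\nabla_{\vx_{t_i}}\log p^{\rm ref}_{t_i} = \frac{1}{\sigma_{t_i}^2}\big(\vx_{t_{i-1}}-\vmu^{\rm ref}_{t_i}\big)\,\partial\vmu^{\rm ref}_{t_i}/\partial\vx_{t_i}$, this yields $\nabla V^{\rm soft}_{t_i}$ as a $p^\star$-weighted covariance term. Combined with the conditional mean identity derived in the proof of Proposition~\ref{cor:gen_soft_value}, it reduces to the same expression.

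\textbf{Main obstacle.} We expect Step 3 to be the hardest. It requires checking two things:
\begin{itemize}
\item that the soft value defined as a $\max$ over policies coincides with the log-sum-exp value used in Eq.~\eqref{eq:soft-opt-kernel};
\item that the Jacobian factor $\partial\vmu^{\rm ref}_{t_i}/\partial\vx_{t_i}$ is treated consistently with the paper's convention. Proposition~\ref{cor:gen_soft_value} matches transition means rather than differentiating through $\vmu^{\rm ref}$.
\end{itemize}
We would adopt the same mean-matching convention as Proposition~\ref{cor:gen_soft_value}. We would then state explicitly that the equality holds in that sense, with the envelope conditions covering the interchange of gradient and maximization.
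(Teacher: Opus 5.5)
Your main route is correct, provided Proposition~\ref{cor:gen_soft_value} is taken as given. It runs in the opposite direction from the paper's proof.

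The paper starts from $\nabla_{\vx_{t_i}}V^{\rm soft}_{t_i}$. It uses the envelope theorem to freeze $p^\star_{t_{i:1}}$, then asserts that differentiating $\mathbb{E}_{p^\star}[\tilde r_{t_i}\mid\vx_{t_i}]$ is a Gaussian score-function identity giving $\frac{\sigma_{t_i}}{\Omega(t_i)}\mathbb{E}[\tilde r_{t_i}\veps_{t_i}\mid\vx_{t_i}]$.

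You start from the left-hand side instead:
\begin{itemize}
\item You drop the step-$i$ KL term, since it is a function of $\vx_{t_i}$ alone and $\mathbb{E}[\veps_{t_i}\mid\vx_{t_i}]=\mathbf 0$. The paper's proof passes over this point.
\item You apply Stein at $\vx_{t_{i-1}}$ exactly as in Theorem~\ref{thm:stein-lemma}.
\item You close with the second equality of Proposition~\ref{cor:gen_soft_value}. It applies verbatim to $V^{\rm soft}$ because the log-sum-exp recursion makes Eq.~\eqref{eq:soft-opt-kernel} exact for the soft value.
\end{itemize}
This buys transparency. The factor $\sigma_{t_i}^2/\Omega(t_i)$ is visibly the mean-matching factor of Proposition~\ref{cor:gen_soft_value}. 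It also becomes clear why the soft value, not the hard value of Theorem~\ref{thm:stein-lemma}, is the natural object here.

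In your route the envelope theorem plays no role. The log-sum-exp identity comes from the Gibbs (Donsker--Varadhan) variational principle plus dynamic programming, so crediting it to the envelope hypothesis is a harmless misattribution.

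Drop your fallback for Step 3: it does not reduce to the same expression. This is also why your second bullet is a real obstacle rather than bookkeeping. Differentiating the log-sum-exp and using the conditional mean identity gives exactly the chain rule
\[
\nabla_{\vx_{t_i}}V^{\rm soft}_{t_i}
=\Big(\frac{\partial\vmu^{\rm ref}_{t_i}}{\partial\vx_{t_i}}\Big)^{\!\top}
\mathbb{E}_{p^\star_{t_i}}\big[\nabla V^{\rm soft}_{t_{i-1}}(\vx_{t_{i-1}})\big],
\qquad
\frac{\partial\vmu^{\rm ref}_{t_i}}{\partial\vx_{t_i}}=\kappa(t_i)\mathbf I+\Omega(t_i)\nabla_{\vx_{t_i}}\vs^{\rm ref}_{t_i}.
\]
This equals $\frac{\sigma_{t_i}^2}{\Omega(t_i)}\mathbb{E}_{p^\star_{t_i}}[\nabla V^{\rm soft}_{t_{i-1}}]$ only if that Jacobian is $\frac{\sigma_{t_i}^2}{\Omega(t_i)}\mathbf I$.

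So the theorem's identity cannot be obtained by literally differentiating in $\vx_{t_i}$. It holds only in the sense of Proposition~\ref{cor:gen_soft_value}, which identifies the mean-matching guidance with $\frac1\alpha\nabla V_{t_i}$. The paper's proof depends on the same identification, hidden in its phrase ``Gaussian score-function identity''. A literal likelihood-ratio derivative in $\vx_{t_i}$ gives $\frac{1}{\sigma_{t_i}}(\partial\vmu_{t_i}/\partial\vx_{t_i})^{\top}\mathbb{E}[\,\cdot\,\veps_{t_i}]$, not $\frac{\sigma_{t_i}}{\Omega(t_i)}\mathbb{E}[\,\cdot\,\veps_{t_i}]$.

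There is a related mismatch between your Steps 2 and 3. Step 2 treats $p^\star_{t_i}$ as Gaussian with covariance $\sigma_{t_i}^2\mathbf I$. Step 3 uses the exact tilted kernel, which has that form only when $V^{\rm soft}_{t_{i-1}}$ is affine. This conflation is inherited from how the paper pairs Theorem~\ref{thm:stein-lemma} with Proposition~\ref{cor:gen_soft_value}, not introduced by you. Your write-up should still state both conventions explicitly.
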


\begin{proof}
By definition,
\begin{equation}
V^{\rm soft}_{t_i}(\vx_{t_i})
=
\max_{p_{t_{i:1}}}
\mathbb{E}_{\vx_{t_{i-1:0}}\sim p_{t_{i:1}}}
\!\left[
\tilde r_{t_i}
\mid
\vx_{t_i}
\right].
\end{equation}
Let $p^\star_{t_{i:1}}$ be an optimizer. By the envelope theorem~\cite{Milgrom_Segal_envelope}, the derivative of the optimized value
with respect to $\vx_{t_i}$ is obtained by differentiating the objective at $p^\star_{t_{i:1}}$, without
including the derivative of the optimizer itself. Hence,
\begin{equation}
\nabla_{\vx_{t_i}}
V^{\rm soft}_{t_i}(\vx_{t_i})
=
\nabla_{\vx_{t_i}}
\mathbb{E}_{\vx_{t_{i-1:0}}\sim p^\star_{t_{i:1}}}
\!\left[
\tilde r_{t_i}
\mid
\vx_{t_i}
\right].
\end{equation}
The remaining argument is identical to the derivation of Theorem~\ref{thm:stein-lemma} from Eq.~\eqref{eq:thm37-line2} to Eq.~\eqref{eq:thm37-line4}, using Stein's identity as a key bridge. In particular, under the one-step Gaussian perturbation used there,
differentiating the conditional expectation with respect to $\vx_{t_i}$ can be written as a Gaussian
score-function identity, giving
\begin{equation}
\nabla_{\vx_{t_i}}
V^{\rm soft}_{t_i}(\vx_{t_i})
=
\frac{\sigma_{t_i}}{\Omega(t_i)}
\mathbb{E}_{p_{t_{i:1}}^\star, \veps_{t_{i:1}} \sim \mathcal{N}(\mathbf{0}, \mathbf{I})}
\!\left[
\tilde r_{t_i}\veps_{t_i}
\mid
\vx_{t_i}
\right].
\end{equation}
Multiplying both sides by $1/\alpha$ gives the claim.
\end{proof}

\subsection{Convergence of Policy Iteration}  \label{sec:convergence}
Section~\ref{sec:theory} and Appendix~\ref{appendix:core} characterize the reward-guided target and its associated optimal guidance as fixed-point identities of the optimum $p^\star$.
In practice, however, RL-based fine-tuning methods sample from the current policy $p^\theta$ and update that policy using the RSM loss.
We therefore rely on prior policy optimization theory for convergence guarantees.
TRPO~\cite{schulmantrpo2015} was originally derived from a monotonic improvement perspective, and subsequent theory established that exact TRPO converges to the optimal policy in entropy-regularized MDPs~\cite{neuentropymdp2017}.
More recent results established global convergence guarantees for PPO-Clip and its variants under neural function approximation~\cite{huangppoclip}.
Accordingly, our core theory specifies the fixed point that the reward score matching update is designed to target, while convergence of the resulting iterative optimization procedure is justified by prior works rather than proven in this paper.

\section{Proof of Theorem~\ref{theorem:rsm}}
\label{appendix:deriving-psi}

\rsm*
\begin{proof}
    The proof for each method is laid throughout Appendix~\ref{appendix:deriving-psi}.
\end{proof}

\textbf{Technical remark ($\alpha = 0$).}
SQDF and policy-gradient methods are well-defined even when $\alpha=0$.
$\textcolor{blue}{\mathbf{\Psi}_{t_i}}$ and $C_2(t_i)$ appear singular due to $1/\alpha$, but these quantities appear only through products $C_1(t_i)\textcolor{blue}{\mathbf{\Psi}_{t_i}}$ and $C_1(t_i)C_2(t_i)$ in Eq.~\eqref{eq:canonical-gradient-form}. The factor of $\alpha$ in $C_1(t_i)$ cancels the apparent singularity.
Therefore, when $\alpha=0$, the canonical gradient remains the correct object for interpreting optimization dynamics, although the literal decomposition from Eq.~\eqref{eq:optimal-decomposition} no longer strictly applies.

\subsection{First-order Methods}
\label{appendix:first-order-reduction}

\subsubsection{\texorpdfstring{Residual $\nabla$-DB}{Residual Nabla-DB}}
\label{appendix:residual_DB}
The relationship between Residual $\nabla$-DB and Soft RL has been briefly discussed in Appendix D of \citet{liu2025resnabladb}.
We rewrite the objective by mapping forward and backward GFlowNet processes
$(P_F,P_B)$ to reverse sampling and forward noising, respectively.
\begin{align}
\intertext{Residual $\nabla$-DB minimizes a weighted sum of four terms:}
    \label{eq:total-loss}
    \mathcal{L}_\text{total}(\theta, \phi) 
    &= w_F \mathcal{L}_\text{forward}(\theta, \phi) 
     + w_B \mathcal{L}_\text{backward}(\theta, \phi) 
     + \mathcal{L}_\text{terminal}(\phi) 
     + w_R \mathcal{L}_\text{reg}(\theta),
    \\
    \label{eq:forward-loss}
    \mathcal{L}_\text{forward}(\theta, \phi)
    &= \mathbb{E}_{t_i, \vx_{t_i}, \veps_{t_i}} \Big[ \Big\| 
        \nabla_{\vx_{t_{i-1}}} \ell_{t_i}^\theta(\vx_{t_{i-1}},\vx_{t_i})
        - \mathcal{G}_\text{GFN\_F}(\vx_{t_i}, \veps_{t_i})
    \Big\|^2 \Big], \\
    \label{eq:backward-loss}
    \mathcal{L}_\text{backward}(\theta, \phi)
    &= \mathbb{E}_{t_i, \vx_{t_i}, \veps_{t_i}} \Big[\Big\| 
        \nabla_{\vx_{t_i}}\ell_{t_i}^\theta(\vx_{t_{i-1}},\vx_{t_i})
        + \mathcal{G}_\text{GFN\_B}(\vx_{t_i})
    \Big\|^2\Big], \\
    \label{eq:terminal-loss}
    \mathcal{L}_\text{terminal}(\phi)
    &= \mathbb{E}_{\vx_0}[ \| g_\phi(\vx_0) \|^2], \\
    \label{eq:reg-loss}
    \mathcal{L}_\text{reg}(\theta)
    &= \mathbb{E}_{t_i, \vx_{t_i}} [ \| \veps_{t_i}^\theta - \veps_{t_i}^{\theta^\dagger} \|^2] 
    = \mathbb{E}_{t_i, \vx_{t_i}} [ (1 - \bar{\alpha}_{t_i}) \| \vs_{t_i}^\theta - \vs_{t_i}^{\theta^\dagger} \|^2],
\intertext{where $\ell$, gradient fields $\mathcal{G}$ and Gaussian reparameterization are:}
    \label{eq:ell-ratio}
    \ell_{t_i}^\theta(\vx_{t_{i-1}},\vx_{t_i})
    &:= \log p^\theta(\vx_{t_{i-1}}\mid \vx_{t_i})
     - \log p^{\mathrm{ref}}(\vx_{t_{i-1}}\mid \vx_{t_i}),
    \\
    \label{eq:aux-gf}
    \mathcal{G}_\text{GFN\_F}(\vx_{t_i}, \veps_{t_i}) 
    &= \frac{1}{\alpha} \tilde\gamma_{t_i} \sg  ( \nabla_{\vx_{t_{i-1}}} r(\hat\vx_{0|t_{i-1}})) + g_\phi(\vx_{t_{i-1}}), \\
    \label{eq:aux-gb}
    \mathcal{G}_\text{GFN\_B}(\vx_{t_i}) 
    &= \frac{1}{\alpha} \tilde\gamma_{t_i} \sg ( \nabla_{\vx_{t_{i}}} r(\hat\vx_{0|t_{i}}) )+ g_\phi(\vx_{t_i}), \\
    \label{eq:sampling}
    \vx_{t_{i-1}} &= \pmb\mu_{t_i}^\theta + \sigma_{t_i} \veps_{t_i}, \quad \veps_{t_i} \sim \mathcal{N}(\mathbf{0}, \mathbf{I}). \nonumber
\end{align}

\textbf{Pruning $\mathcal{L}_\text{backward}$ and $\mathcal{L}_\text{terminal}$.}
$\mathcal{L}_\text{backward}$ requires the gradient \(\nabla_{\vx_{t_i}}\ell_{t_i}^\theta(\vx_{t_{i-1}},\vx_{t_i})\).
Under the Gaussian reparameterization, \(\vx_{t_{i-1}}\) depends on \(\vx_{t_i}\) through the model output, which introduces Jacobian terms \(\partial \veps_{t_i}^\theta / \partial \vx_{t_i}\) into \(\nabla_{\vx_{t_i}}\ell_{t_i}^\theta\).
Consequently, \(\nabla_\theta \mathcal{L}_\text{backward}\) involves higher-order derivatives, including mixed \(\theta\)-\(\vx\) terms, rendering optimization numerically unstable and empirically ineffective.
This is consistent with our ablation results in Appendix~\ref{subsec:backward_redundant}; removing \(\mathcal{L}_\text{backward}\) does not degrade performance, whereas removing \(\mathcal{L}_\text{forward}\) causes training to fail. 

$\mathcal{L}_\text{terminal}$ (Eq.~\eqref{eq:terminal-loss}) is not parameterized by $\theta$ and therefore contributes no gradient to the canonical loss term in Theorem~\ref{theorem:rsm}.
Moreover, as discussed in Appendix~\ref{subsec:refinement_redundant}, the outputs of $g_\phi$ are negligible in magnitude and do not affect training dynamics.

\textbf{Reduction to Reward Score Matching.}
\(\nabla_{\vx_{t_{i-1}}}\ell_{t_i}^\theta\) can be expressed as a scaled score difference:
\begin{equation}
    \nabla_{\vx_{t_{i-1}}}\ell_{t_i}^\theta(\vx_{t_{i-1}}, \vx_{t_i}) 
    = \frac{1}{\sigma_{t_i}^2}(\pmb\mu_{t_i}^\theta - \pmb\mu_{t_i}^\text{ref})
    =\frac{\Omega(t_i)}{\sigma_{t_i}^2}(\vs_{t_i}^\theta - \vs_{t_i}^\text{ref}).
\end{equation}
\begin{equation}
\label{eq:residual_nabla_db_forward_simplified}
\therefore \mathcal{L}_\text{forward}(\theta,\phi)
=
\mathbb{E}_{t_i,\vx_{t_i},\veps_{t_i}}
\Bigg[
\frac{\Omega(t_i)^2}{\sigma_{t_i}^4}
\bigg\|
\vs_{t_i}^\theta
-
\big(
\vs_{t_i}^\text{ref} + \frac{\sigma_{t_i}^2}{\Omega(t_i)}\,\mathcal{G}_\text{GFN\_F}(\vx_{t_i},\veps_{t_i})
\big)
\bigg\|^2
\Bigg].
\end{equation}
Thus, $w_F \mathcal{L}_\text{forward} + w_R \mathcal{L}_\text{reg}$ is a special case of the canonical loss, with
\begin{equation}
\textcolor{blue}{\mathbf{\Psi}_{t_i}} = \tilde\gamma_{t_i}\frac{\sigma_{t_i}^2 }{\alpha\Omega(t_i)} \sg (\nabla_{\vx_{t_{i-1}}} r(\hat\vx_{0|t_{i-1}}) )
,
\quad C_1(t_i) =
\frac{1}{d}
\frac{\Omega(t_i)^2}{\sigma_{t_i}^4},
\quad C_2(t_i) = \frac{(1 - \bar \alpha_{t_i})\sigma_{t_i}^4}{\Omega(t_i)^2}\frac{w_R}{w_F},
\end{equation}
where the factor \(\frac{1}{d}\) arises implicitly from a \texttt{torch.mean()} operation over tensor dimensions.

\subsubsection{SQDF}
\label{appendix:sqdf}
SQDF is motivated by the following objective, given in Eq.~(11) of \citet{kang2026sqdf}:
\begin{equation}
    \mathcal{L}(\theta) = \mathbb{E}_{t_i, \vx_{t_i}, \veps_{t_i}}\big[-r(\hat \vx_{0|t_{i-1}}) + \alpha \mathcal{D}_\text{KL}(p^\theta(\vx_{t_{i-1}} \mid \vx_{t_i}) \| p^\text{ref}(\vx_{t_{i-1}} \mid \vx_{t_i}))\big].
\end{equation}
In the actual implementation, however, the transition mean $\pmb\mu_{t_{i-1}}^\theta$ is reparameterized, and a $\text{stopgrad}$ operation is applied to the Tweedie estimate inside the reward term.
In addition, an attenuation factor $\tilde\gamma_{t_i} = \tilde\gamma_\text{SQDF}^{Nt_i}$ is introduced in the reward term for temporal credit assignment.
This yields:
\begin{equation}
    \mathcal{L}(\theta) = \mathbb{E}_{t_i, \vx_{t_i}, \veps_{t_i}}\Big[- \tilde\gamma_{t_i} \ \sg (\nabla_{\vx_{t_{i-1}}} r(\hat\vx_{0|t_{i-1}}) ) \cdot \pmb\mu_{t_{i-1}}^\theta + \frac{\alpha}{2\sigma_{t_i}^2} \big\| \pmb\mu_{t_{i-1}}^\theta - \pmb\mu_{t_{i-1}}^\text{ref} \big\|^2 \Big].
\end{equation}
Substituting
$\pmb\mu_{t_{i-1}}^\theta - \pmb\mu_{t_{i-1}}^\text{ref} = \Omega(t_i)(\vs_{t_i}^\theta - \vs_{t_i}^\text{ref})$
yields
\begin{equation}
    \mathbb{E} \bigg[
    - \tilde\gamma_{t_i} \ \sg (\nabla_{\vx_{t_{i-1}}} r(\hat\vx_{0|t_{i-1}}) )
    \cdot
    \big(\kappa(t_i)\vx_{t_i} + \Omega(t_i)\vs_{t_i}^\theta \big)
    + \frac{\alpha\Omega(t_i)^2}{2\sigma_{t_i}^2}
    \big\| \vs_{t_i}^\theta - \vs_{t_i}^\text{ref}  \big\|^2
    \bigg].
\end{equation}
Up to additive terms that do not contribute gradients, this is equivalent to:
\begin{equation}
    \mathbb{E}_{t_i, \vx_{t_i}, \veps_{t_i}}
    \left[ \frac{\alpha\Omega(t_i)^2}{2\sigma_{t_i}^2}
    \left\|
    \vs_{t_i}^\theta - \left(\vs_{t_i}^\text{ref} + \tilde\gamma_{t_i}\frac{\sigma_{t_i}^2}{\alpha\Omega(t_i)} \ \sg (\nabla_{\vx_{t_{i-1}}} r(\hat\vx_{0|t_{i-1}}) ) \right) \right\|^2 \right].
\end{equation}
Therefore, SQDF is a special case of the canonical loss, with
\begin{equation}
\textcolor{blue}{\mathbf{\Psi}_{t_i}} = \tilde\gamma_{t_i}\frac{\sigma_{t_i}^2}{\alpha\Omega(t_i)}  \ \sg (\nabla_{\vx_{t_{i-1}}} r(\hat\vx_{0|t_{i-1}}) ),
\quad C_1(t_i) =
\frac{\alpha}{2}
\frac{\Omega(t_i)^2}{\sigma_{t_i}^2},
\quad C_2(t_i) = 0.
\end{equation}

\subsubsection{VGG-Flow}
We rewrite VGG-Flow objectives, simplified using $G^\phi(\vx_{t_i}) \triangleq -\tilde\gamma_{t_i} \ \text{sg}\big(\nabla_{\vx_{t_i}}r(\hat \vx_{0 | t_i})\big) - g^\phi(\vx_{t_i})$.
\begin{align}
    \mathcal{L}_\text{consistency}(\phi) &= \mathbb{E}_{t_i, \vx_{t_i}} \Big\| \frac{\partial}{\partial t}G^\phi(\vx_{t})\Big|_{t = t_i} + [\nabla G^\phi(\vx_{t_i})]^\top(\vv_{t_i}^\text{ref} \textcolor{red}{+} \frac{1}{\lambda} G^\phi(\vx_{t_i})) + [\nabla_{\vx_{t_i}} \vv_{t_i}^\text{ref}]^\top G^\phi(\vx_{t_i}) \Big\|^2, \\
    \mathcal{L}_\text{boundary}(\phi) &= \mathbb{E}_{\vx_0} \| g^\phi(\vx_0)\|^2, \\
    \mathcal{L}_\text{matching}(\theta) &= \mathbb{E}_{\vx_{t_i}}\Big[\|\vv_{t_i}^\theta - \vv_{t_i}^\text{ref} \textcolor{red}{-} \frac{1}{\alpha} [-\tilde\gamma_{t_i} \sg ((\nabla_{\vx_{t_i}} r(\hat{\vx}_{0 \mid t_{i}}^\theta) ) - g^\phi(\vx_{t_i})]\|^2\Big].
\end{align}
Here,
$\mathcal{L}_\text{total}(\theta, \phi) = \mathcal{L}_\text{matching}(\theta) + \mathcal{L}_\text{consistency}(\phi) + w_b \mathcal{L}_\text{boundary}(\phi)$.\footnote{The original paper and our manuscript use opposite time conventions ($0\to1$ vs.\ $1\to0$). To match our notation, we flip the signs highlighted in \textcolor{red}{red} relative to the original formulation.}

As with Residual $\nabla$-DB, and as discussed in the main text, $\mathcal{L}_\text{boundary}$ effectively enforces $g^\phi(\vx_{t_i}) \approx \mathbf{0}$ for all $t_i$.
After dropping $g^\phi$, we analyze the dominant term, $\mathcal{L}_\text{matching}(\theta)$, by mapping the velocity field $\vv_{t_i}$ to our unified score parameterization $\vs_{t_i}$ under the general SDE formulation, specifically using the $\vs_{t_i}$ form without $\Omega(t_i)$.

Using $\delta(t_i)$ for $\vv$-parameterized Rectified Flow (Eq.~\eqref{eq:delta-v-rectifedflow}), $\mathcal{L}_\text{matching}$ can be written as:
\begin{equation}
\begin{aligned}
    \mathcal{L}_\text{matching}(\theta) &= \mathbb{E}_{\vx_{t_i}}\Big[\Big\|\frac{1}{\delta(t_i)}(\vs_{t_i}^\theta - \vs_{t_i}^\text{ref}) \textcolor{red}{-} \frac{1}{\alpha} \tilde\gamma_{t_i} \sg (\nabla_{\vx_{t_i}} r(\hat{\vx}_{0 \mid t_{i}}^\theta) ) \Big\|^2\Big] \\
    &= \mathbb{E}_{\vx_{t_i}}\Big[ \frac{1}{\delta(t_i)^2} \Big\|\vs_{t_i}^\theta - \Big(\vs_{t_i}^\text{ref} \textcolor{red}{+} \frac{\delta(t_i)}{\alpha} \tilde\gamma_{t_i} \sg (\nabla_{\vx_{t_i}} r(\hat{\vx}_{0 \mid t_{i}}^\theta) ) \Big) \Big\|^2 \Big].
\end{aligned}
\end{equation}

Therefore, VGG-Flow is a special case of the canonical loss, with
\begin{equation}
\textcolor{blue}{\mathbf{\Psi}_{t_i}} = \frac{\tilde\gamma_{t_i}}{\alpha} \delta(t_i) \ \sg ((\nabla_{\vx_{t_i}} r(\hat{\vx}_{0 \mid t_{i}}^\theta) ),
\quad C_1(t_i) =
\frac{1}{d}
\frac{1}{\delta(t_i)^2},
\quad C_2(t_i) = 0,
\end{equation}
where the factor $\frac{1}{d}$ arises implicitly from a \texttt{torch.mean()} operation over tensor dimensions, rather than from an explicit scalar normalization.

\subsubsection{DRaFT, DRaFT-$K$}
\label{appendix:draft}

DRaFT~\cite{clark2024draft} directly differentiates the terminal reward through the denoising trajectory.
Consider its objective with the optional additive KL regularization described in Appendix B.8 of \citet{clark2024draft}.
Using the canonical-gradient representation above, we have
\begin{equation}
\mathcal{G}(\vx_{t_i})
=
\underbrace{\frac{\alpha}{\delta(t_i)^2}}_{C_1(t_i)}
\left(
-
\underbrace{\frac{\delta(t_i)^2}{2\alpha}
\nabla_{\vs_{t_i}^\theta} r(\vx_0)}_{\textcolor{blue}{\mathbf{\Psi}_{t_i}}}
+
(\vs_{t_i}^\theta-\vs_{t_i}^{\mathrm{ref}})
\right),
\label{eq:draft-canonical-gradient}
\end{equation}
with $C_2(t_i)=0$.

Define the reward gradient propagated through the remaining denoising trajectory as
\begin{equation}
\lambda_{t_{i-1}}
\triangleq
\nabla_{\vx_{t_{i-1}}}r(\vx_0).
\end{equation}
Since the unified transition parameterization gives
\begin{equation}
\frac{\partial \vx_{t_{i-1}}}
{\partial \vs_{t_i}^{\theta}}
=
\Omega(t_i)\mathbf I,
\end{equation}
we obtain
\begin{equation}
\nabla_{\vs_{t_i}^{\theta}}r(\vx_0)
=
\Omega(t_i)\lambda_{t_{i-1}},
\end{equation}
and hence
\begin{equation}
\textcolor{blue}{\mathbf{\Psi}_{t_i}}
=
\frac{\delta(t_i)^2\Omega(t_i)}{2\alpha}
\lambda_{t_{i-1}}.
\label{eq:draft-effective-guidance}
\end{equation}

DRaFT differentiates through every subsequent denoising step, so $\lambda_{t_{i-1}}$ is the full-lookahead reward gradient.
Using Definition~\ref{def:lookahead_estimators},
\begin{equation}
\textcolor{ForestGreen}{
\hat{\mathbf{\Psi}}_{t_i}^{\mathrm{LA},1}(0,1)}
=
\frac{\sigma_{t_i}^2}{\alpha\Omega(t_i)}
\lambda_{t_{i-1}},
\end{equation}
and Eq.~\eqref{eq:draft-effective-guidance} can therefore be written as
\begin{equation}
\textcolor{blue}{\mathbf{\Psi}_{t_i}}
=
\underbrace{
\frac{\delta(t_i)^2\Omega(t_i)^2}
{2\sigma_{t_i}^2}
}_{\gamma(t_i)}
\textcolor{ForestGreen}{
\hat{\mathbf{\Psi}}_{t_i}^{\mathrm{LA},1}(0,1)}.
\end{equation}
Thus, DRaFT is a special case of RSM with
\begin{equation}
\textcolor{ForestGreen}{
\hat{\mathbf{\Psi}}_{t_i}}
=
\hat{\mathbf{\Psi}}_{t_i}^{\mathrm{LA},1}(0,1),
\qquad
\gamma(t_i)
=
\frac{\delta(t_i)^2\Omega(t_i)^2}{2\sigma_{t_i}^2},
\qquad
C_1(t_i)=\frac{\alpha}{\delta(t_i)^2},
\qquad
C_2(t_i)=0.
\label{eq:draft-rsm}
\end{equation}

For deterministic DDIM sampling, $\sigma_{t_i}=0$, so
$\gamma(t_i)$ and
$\hat{\mathbf{\Psi}}_{t_i}^{\mathrm{LA},1}(0,1)$
should not be interpreted separately; their product in
Eq.~\eqref{eq:draft-effective-guidance} remains well-defined.
This is analogous to the $\alpha=0$ interpretation discussed at the beginning of Appendix~\ref{appendix:deriving-psi}.

DRaFT-$K$ truncates the Jacobian chain by applying
$\vx_{t_K}\leftarrow\operatorname{stopgrad}(\vx_{t_K})$
and differentiating only through the final $K$ denoising steps.
Under our time convention, it therefore coincides with DRaFT for
$1\leq i\leq K$ and has zero reward guidance for $i>K$:
\begin{equation}
\textcolor{blue}{\mathbf{\Psi}_{t_i}^{\text{DRaFT-}K}}
=
\mathbf 1[i\leq K]\,
\gamma(t_i)
\textcolor{ForestGreen}{
\hat{\mathbf{\Psi}}_{t_i}^{\mathrm{LA},1}(0,1)}.
\label{eq:draft-k-rsm}
\end{equation}
Here, $K$ denotes the backpropagation horizon of DRaFT-$K$ and is distinct from the branching width $K_i$ in Definition~\ref{def:lookahead_estimators}.
DRaFT provides an unbiased full-lookahead first-order estimator, but requires backpropagation through the full Jacobian chain and consequently incurs prohibitive memory cost at scale; DRaFT-$K$ trades this consistency for reduced memory usage.

\subsubsection{ReFL}
\label{appendix:refl}

ReFL~\cite{ImageReward} directly differentiates the reward through a current-state Tweedie prediction.
For VP-SDE, ReFL samples an intermediate timestep $t_i$ and computes
\begin{equation}
\hat{\vx}_{0\mid t_i}
=
\frac{
\operatorname{stopgrad}(\vx_{t_i})
-
\frac{1}{\delta(t_i)}\veps_{t_i}^{\theta}
}{a_{t_i}}
=
\frac{
\operatorname{stopgrad}(\vx_{t_i})
+
\frac{1}{\delta(t_i)^2}\vs_{t_i}^{\theta}
}{a_{t_i}},
\qquad
a_{t_i}=\sqrt{\bar\alpha_{t_i}}.
\label{eq:refl-tweedie}
\end{equation}
The stop-gradient operation removes the dependence of the reward on
$\vx_{t_i}$, yielding
\begin{equation}
\frac{\partial \hat{\vx}_{0\mid t_i}}
{\partial \vs_{t_i}^{\theta}}
=
\frac{1}{a_{t_i}\delta(t_i)^2}\mathbf I.
\label{eq:refl-jacobian}
\end{equation}
Therefore,
\begin{equation}
\nabla_{\vs_{t_i}^{\theta}}
r(\hat{\vx}_{0\mid t_i})
=
\frac{1}{a_{t_i}\delta(t_i)^2}
\nabla_{\hat{\vx}_{0\mid t_i}}
r(\hat{\vx}_{0\mid t_i}).
\label{eq:refl-score-gradient}
\end{equation}

Substituting Eq.~\eqref{eq:refl-score-gradient} into the canonical gradient gives
\begin{equation}
\mathcal{G}(\vx_{t_i})
=
\underbrace{\frac{\alpha}{\delta(t_i)^2}}_{C_1(t_i)}
\left(
-
\frac{1}{2\alpha a_{t_i}}
\nabla_{\hat{\vx}_{0\mid t_i}}
r(\hat{\vx}_{0\mid t_i})
+
(\vs_{t_i}^{\theta}-\vs_{t_i}^{\mathrm{ref}})
\right).
\label{eq:refl-canonical-gradient}
\end{equation}

Using the current-state pullback notation of
Definition~\ref{def:current_state_estimator},
\begin{equation}
\textcolor{ForestGreen}{
\hat{\mathbf{\Psi}}_{t_i}^{\mathrm{CS},1}[\mathbf I]}
=
\frac{1}{\alpha}
\nabla_{\hat{\vx}_{0\mid t_i}}
r(\hat{\vx}_{0\mid t_i}),
\end{equation}
so the effective guidance in Eq.~\eqref{eq:refl-canonical-gradient} is
\begin{equation}
\textcolor{blue}{\mathbf{\Psi}_{t_i}}
=
\underbrace{\frac{1}{2a_{t_i}}}_{\gamma(t_i)}
\textcolor{ForestGreen}{
\hat{\mathbf{\Psi}}_{t_i}^{\mathrm{CS},1}[\mathbf I]}.
\end{equation}
Thus, ReFL is a special case of RSM with
\begin{equation}
\textcolor{ForestGreen}{
\hat{\mathbf{\Psi}}_{t_i}}
=
\hat{\mathbf{\Psi}}_{t_i}^{\mathrm{CS},1}[\mathbf I],
\qquad
\gamma(t_i)
=
\frac{1}{2\sqrt{\bar\alpha_{t_i}}},
\qquad
C_1(t_i)
=
\frac{\alpha}{\delta(t_i)^2},
\qquad
C_2(t_i)=0.
\label{eq:refl-rsm}
\end{equation}

ReFL therefore belongs to the same current-state first-order family as VGG-Flow, while differing in its local pullback: VGG-Flow uses the full Tweedie Jacobian $\mathbf J_{\mathrm{Tw}}$, whereas ReFL uses the identity pullback induced by its stop-gradient construction, with the remaining scalar Jacobian factor absorbed into $\gamma(t_i)$.

\subsubsection{Adjoint Matching}
\label{appendix:adjoint_matching}
Adjoint Matching~\cite{domingo-enrich2025adjoint} performs reward fine-tuning of diffusion and flow-matching models with entropy-regularized stochastic optimal control, where the optimal control term is obtained by regressing onto an adjoint state computed from a backward ODE. 
Specifically, given the following reverse-time SDE formulation of the reference model's generation process,
\begin{equation}
    d\vx_t = b_t^{\text{ref}}(\vx_t)dt + \tilde{\sigma}_tdw_t, \quad \vx_1 \sim p_1, \quad t:1\rightarrow0,  \label{eq:am_base_sde}
\end{equation}
Adjoint Matching aims to find the optimal control term $u_t(\vx_t,t)$ that minimizes the cost functional of the following SOC problem.
\begin{align}
    &\min_{u} \mathbb{E} \left[\int_0^1 \left(\frac{\alpha}{2}\left\|u_t(\vx_t)\right\|^2 - \rho_t(\vx_t)\right)dt - r(\vx_0)\right] \\
    \text{s.t.} \quad d\vx_t = b_t^{\theta}(\vx_t)dt& + \tilde{\sigma}_tdw_t := \left(b_t^{\text{ref}}(\vx_t) - \tilde{\sigma}_tu_t(\vx_t)\right)dt + \tilde{\sigma}_tdw_t, \quad \vx_1 \sim p_1, \quad t:1\rightarrow0 \notag
    \label{eq:am_soc_cost}
\end{align}
Here, $\rho_t(\vx_t)$ and $r(\vx_0)$ denote intermediate and terminal reward, respectively.

For a trajectory $\tau$ sampled from SDE controlled by $u$, adjoint state $\va_t(\tau, u)$ is defined as the pathwise derivative of the realized future cost with respect to the state $\vx_t$. It can be computed backward along the generative trajectory through the following adjoint ODE:
\begin{align}
    \frac{d\va_t}{dt} &= -\va_t^\top\nabla_{\vx_t}\left(b_t^{\text{ref}}(\vx_t) - \tilde{\sigma}_tu_t(\vx_t)\right) + \nabla_{\vx_t}\left(-\rho_t(\vx_t) + \frac{\alpha}{2}\left\|u_t(\vx_t)\right\|^2\right) \\
    \va_0&=-\nabla_{\vx_0}r(\vx_0).      \label{eq:am_exact_adjoint_ode}
\end{align}

Adjoint Matching describes how $\va_t$ obtained from different trajectories on $\vx_t$ can be a regression target for the difference between the reference and fine-tuned model on $\vx_t$. Based on the Hamilton-Jacobi-Bellman equation, Eq.~(34) of \cite{domingo-enrich2025adjoint} states the following objective;
\begin{equation}
    \mathcal{L}_{\mathrm{AM}}(\theta) = \mathbb{E}_{\tau\sim p^{u}, t}\left[\left\| u_t(\vx_t) + \frac{1}{\alpha}\tilde{\sigma}_t\va_t(\tau,\text{sg}(u))\right \|^2\right].
    \label{eq:am_original_loss}
\end{equation}
Substituting the linear correlation between $u_t(\vx_t)$, $b_t(\vx_t)$, and $\vs^\theta_t(\vx_t)$ into Eq.~\eqref{eq:am_original_loss} yields
\begin{align}
    \mathcal{L}_{\mathrm{AM}}(\theta) &= \mathbb{E}_{\tau\sim p^{u}, t}\left[\left\| -\frac{1}{\tilde{\sigma}_t}(b_t^{\theta}(\vx_t) - b_t^{\text{ref}}(\vx_t)) + \frac{1}{\alpha}\tilde{\sigma}_t\va_t\right \|^2\right] \\
    &= \mathbb{E}_{\tau\sim p^{u}, t}\left[\left\| \left(\frac{\tilde{\sigma}_t}{2}+\frac{1}{\tilde{\sigma}_t\delta(t)}\right)(\vs_t^{\theta}(\vx_t) - \vs_t^{\text{ref}}(\vx_t)) + \frac{1}{\alpha}\tilde{\sigma}_t\va_t\right \|^2\right] \\
    &= \mathbb{E}_{\tau\sim p^{u}, t}\left[\left(\frac{\tilde{\sigma}_t}{2}+\frac{1}{\tilde{\sigma}_t\delta(t)}\right)^2\left\| \vs_t^{\theta}(\vx_t) - \left(\vs_t^{\text{ref}}(\vx_t) - \frac{1}{\alpha}\frac{2\tilde{\sigma}_t^2\delta(t)}{\tilde{\sigma}_t^2\delta(t)+2}\va_t\right)\right\|^2\right],
\end{align}

where $\delta(t)$ represents the coefficient relating the score difference to the velocity difference, as defined in Appendix~\ref{appendix:delta-derivations}.
Thus, Adjoint Matching is an instance of the RSM objective in Eq.~\eqref{eq:master_equation}, with
\begin{equation}
    \textcolor{blue}{\mathbf{\Psi}_{t}} = -\frac{1}{\alpha}\frac{2\tilde{\sigma}_{t}^2\delta(t)}{\tilde{\sigma}_{t}^2\delta(t)+2}\va_{t}, \quad \gamma(t)=1, \quad C_1(t)=\left(\frac{\tilde{\sigma}_{t}}{2}+\frac{1}{\tilde{\sigma}_{t}\delta(t)}\right)^2, \quad C_2(t)=0. \label{eq:am_rsm_correspondence}
\end{equation}

Meanwhile, when we define cost-to-go functional $J(u;\vx_t,t)$ of $\vx_t$ as the expected future cost under a fixed control $u$,
\begin{equation}
    J(u;\vx_t,t) = \mathbb{E}_{\tau\sim p^{u}} \left[\left.\int_0^t\left(\frac{\alpha}{2}\left\|u_s(\vx_s)\right\|^2 - \rho_s(\vx_s)\right)ds - r(\vx_0)\right|\vx_t \right],  \label{eq:am_cost_to_go}
\end{equation}
it is known that the adjoint state of the controlled trajectory from $\vx_t$ is an unbiased estimator of the gradient of the cost-to-go functional for any fixed control $u$;
\begin{equation}
    \mathbb{E}_{\tau\sim p^{\vu}}\left[\va_t(\tau, u)|\vx_t\right] = \nabla_{\vx_t}J(u;\vx_t,t). \label{eq:am_adjoint_unbiased}
\end{equation}
Under the optimal control $u^\star$, the cost-to-go functional $J(u^\star;\vx_t,t)$ becomes the value function $V^{\mathrm{cost}}_t(\vx_t)$ in terms of cost, which is the negated value function for the reward. Therefore,
\begin{equation}
    \mathbb{E}_{\tau\sim p^{u^\star}}\left[-\frac{1}{\alpha}\va_t(\tau, u^\star)|\vx_t\right] = -\frac{1}{\alpha}\nabla_{\vx_t}V^{\mathrm{cost}}_{t}(\vx_t)=\frac{1}{\alpha}\nabla_{\vx_t}V_t(\vx_t) = \textcolor{red}{\mathbf{\Psi}_{t}^\star}(\vx_t)\label{eq:am_optimal_adjoint_guidance}
\end{equation}

Compared to the unbiased value gradient estimator in Eq.~\eqref{eq:am_optimal_adjoint_guidance},  $\textcolor{blue}{\mathbf{\Psi}_{t}}$ in Eq.~\eqref{eq:am_rsm_correspondence} has an additional coefficient attached to the adjoint value $\va_{t}$. This means that Adjoint Matching only yields the optimal tilted marginal distribution in Eq.~\eqref{eq:optimal-marginal-distribution} when $\frac{2\tilde{\sigma}_{t}^2\delta(t)}{\tilde{\sigma}_{t}^2\delta(t)+2}=1$, which only happens when $\tilde{\sigma}_{t}$ is equal to $\sqrt{2/\delta(t)}$, also known as \emph{memoryless} noise schedule. In this specific setting of stochasiticity, the corresponding RSM coefficients for RSM in Eq.~\eqref{eq:am_rsm_correspondence} can be further simplified as
\begin{equation}
    \textcolor{blue}{\mathbf{\Psi}_{t}} = -\frac{1}{\alpha}\va_{t}, \quad \gamma(t)=1, \quad C_1(t)=\frac{2}{\delta(t)}, \quad C_2(t)=0. 
\end{equation}

Adjoint Matching propagates the terminal reward gradient to intermediate states through a backward ODE to utilize an unbiased value gradient estimator for a first-order method. However, as mentioned in the main manuscript, this requires Jacobian propagation throughout timesteps, making it computationally heavy and time-consuming. Moreover, as a trade-off of introducing a full rollout, the estimator can have increased variance, which had to be addressed by variance reduction techniques such as the lean adjoint regression.

Note that since the adjoint state is an unbiased value gradient estimator only when the control is already optimal, the guarantee of convergence to the optimal control and the training dynamics have to be discussed similarly to other methods in Appendix~\ref{sec:convergence}, which were further addressed by the authors~\cite{domingo-enrich2025adjoint}.

\subsection{Zeroth-order Methods}
\label{appendix:subsec-zeroth-order}

We now turn to policy-gradient methods, beginning with KL-regularized REINFORCE as the canonical strictly on-policy zeroth-order base case.
This yields the fundamental zeroth-order effective value guidance, from which other variants arise as straightforward modifications within the RSM design space.

\subsubsection{KL-regularized REINFORCE}
\label{appendix:reinforce}

We formulate REINFORCE \cite{Sutton1998,Williams:92} in the KL-regularized form used by Algorithm 1 (Soft PPO) of \citet{uehara2024understandingreinforcementlearningbasedfinetuning}. Using our general parameterization, the objective is
\begin{equation}
\label{eq:reinforce-with-kl}
    \mathcal{L}(\theta) = \mathbb{E}_{\vx_{t_i} \sim p^\theta, \veps_{t_{i:1}}} \left[ -r(\vx_0) + \alpha \sum_{i=1}^N \mathcal{D}_\text{KL}(p^\theta(\vx_{t_{i-1}}|\vx_{t_i}) \| p^\text{ref}(\vx_{t_{i-1}}|\vx_{t_i})) \right].
\end{equation}

For Gaussian transitions with fixed covariance $\sigma_{t_i}^2\mathbf{I}$, each KL term reduces to
\begin{equation}
    \frac{\alpha}{2} \| \frac{1}{\sigma_{t_i}} (\pmb\mu_{t_{i-1}}^\theta(\vx_{t_i}) - \pmb\mu_{t_{i-1}}^\text{ref}(\vx_{t_i})) \|^2.
\end{equation}

The REINFORCE gradient of the reward term is
\begin{equation}
    \nabla_{\theta} \mathbb{E}_{\vx_{t_i}\sim p^\theta, \veps_{t_{i:1}}}[-r(\vx_0)]
    =\mathbb{E}_{\vx_{t_i} \sim p^\theta, \veps_{t_{i:1}}}
    \left[ -r(\vx_0) \nabla_\theta \log p_{\theta}(\vx_{t_{i-1}}|\vx_{t_i})\right].
\end{equation}

Since
\begin{equation}
    \nabla_\theta \log p_{\theta}(\vx_{t_{i-1}}|\vx_{t_i}) = (\vx_{t_{i-1}} - \pmb\mu_{t_{i-1}}^\theta)^\top \frac{1}{\sigma_{t_i}^2} \nabla_\theta \pmb\mu_{t_{i-1}}^\theta = (\frac{1}{\sigma_{t_i}}\veps_{t_i})^\top \nabla_\theta \pmb\mu_{t_{i-1}}^\theta,
\end{equation}
combining the reward and KL terms yields
\begin{equation}
    \nabla_\theta \mathcal{L}(\theta) = \mathbb{E}_{\tau}\left[ \Big( -r(\vx_0) \frac{1}{\sigma_{t_i}}\veps_{t_i} + \alpha \frac{1}{\sigma_{t_i}^2}(\pmb\mu_{t_{i-1}}^\theta - \pmb\mu_{t_{i-1}}^\text{ref}) \Big)^\top \nabla_\theta \pmb\mu_{t_{i-1}}^\theta \right],
\end{equation}
where $\tau=\{\vx_{t_i}\}_{i=0}^{N}$ denotes the trajectory induced by $\theta$.

Substituting the unified parameterization
$\pmb\mu_{t_{i-1}}^\theta=\kappa(t_i)\vx_{t_i}+\Omega(t_i)\vs_{t_i}^\theta$
from Proposition~\ref{cor:gen_soft_value}, we obtain
\begin{align}
    \nabla_\theta \mathcal{L}(\theta) &= \mathbb{E}_{\tau}\left[ \Big( -r(\vx_0) \frac{1}{\sigma_{t_i}}\veps_{t_i} + \frac{\alpha\Omega(t_i)}{\sigma_{t_i}^2} (\vs_{t_i}^\theta - \vs_{t_i}^\text{ref}) \Big)^\top \Omega(t_i)\nabla_\theta \vs_{t_i}^\theta \right] \\
    &= \mathbb{E}_{\tau}\left[ \frac{\alpha\Omega(t_i)^2}{\sigma_{t_i}^2} \Big( \vs_{t_i}^\theta - \vs_{t_i}^\text{ref} - \frac{\sigma_{t_i}}{\alpha\Omega(t_i)}r(\vx_0)\veps_{t_i} \Big)^\top \nabla_\theta \vs_{t_i}^\theta \right].
\end{align}
This is precisely the gradient of the weighted squared loss
\begin{equation}
    \mathcal{L}(\theta) = \mathbb{E}_{\tau} \left[ \frac{\alpha\Omega(t_i)^2}{2\sigma_{t_i}^2} \|  (\vs_{t_i}^\theta - (\vs_{t_i}^\text{ref} + \textcolor{blue}{\mathbf{\Psi}_{t_i}})) \|^2 \right],
\end{equation}
with
\begin{equation}
\textcolor{blue}{\mathbf{\Psi}_{t_i}} = \frac{\sigma_{t_i}}{\alpha\Omega(t_i)}r(\vx_0)\veps_{t_i}, \quad  C_1(t_i) =
\frac{\alpha}{2}
\frac{\Omega(t_i)^2}{\sigma_{t_i}^2}, \quad C_2(t_i) = 0.
\end{equation}

In relation to Proposition~\ref{cor:gen_soft_value}, REINFORCE corresponds to the zeroth-order estimation of the lookahead value gradient,
\begin{equation}
    \mathbb{E}_{\vx_{t_{i-1}}\sim p^\star_{\vx_{t_i}}} \left[ \nabla_{\vx_{t_{i-1}}}   V^\star_{t_{i-1}}(\vx_{t_{i-1}})\right] = \frac{\mathbb{E}[r(\vx_0)\veps_{t_i}]}{\sigma_{t_i}},
\end{equation}
using a point estimation for the expectation.

\subsubsection{Clipped log-ratio surrogate}
We next consider the clipped log-ratio surrogate loss used by PCPO-base~\cite{lee2025pcpo}. Denoting the importance sampling ratio by
\begin{equation}
\rho_{t_i}
=
\frac{p_{t_i}^\theta(\vx_{t_{i-1}}\mid \vx_{t_i})}
{p_{t_i}^{\theta^\dagger}(\vx_{t_{i-1}}\mid \vx_{t_i})},
\end{equation}
the clipped objective is
\begin{equation}
\label{eq:log-hinge-clip-loss}
\mathcal{L}_\text{clip-log}(\theta)=
\begin{cases}
\mathbb{E}_{\vx_{t_i} \sim p_{t_i}^{\theta^\dagger},\,\veps_{t_i}}\!\left[
\dfrac{\alpha \Omega(t_i)^2}{2\sigma_{t_i}^2}
\left\|\vs_{t_i}^\theta-\vs_{t_i}^\mathrm{ref}\right\|^2
\right]
&
\begin{aligned}[t]
&\text{if } \big(\log\rho_{t_i}<-\xi \land r(\vx_0)\le 0\big) \\
&\quad \text{or } \big(\log\rho_{t_i}>\xi \land r(\vx_0)\ge 0\big),
\end{aligned}
\\[1ex]
\mathbb{E}_{\vx_{t_i} \sim p_{t_i}^{\theta^\dagger},\,\veps_{t_i}}\!\left[
-r(\vx_0)\log\rho_{t_i}
+ \dfrac{\alpha \Omega(t_i)^2}{2\sigma_{t_i}^2}
\left\|\vs_{t_i}^\theta-\vs_{t_i}^\mathrm{ref}\right\|^2
\right]
&
\text{otherwise,}
\end{cases}
\end{equation}
where
\begin{equation}
\label{eq:negative-log-rho}
- \log \rho_{t_i} = -\frac{\Omega(t_i)}{\sigma_{t_i}} (\vs_{t_i}^\theta - \vs_{t_i}^{\theta^\dagger})^\top \veps_{t_i} + \frac{\Omega(t_i)^2}{2\sigma_{t_i}^2} \left\| \vs_{t_i}^\theta - \vs_{t_i}^{\theta^\dagger} \right\|^2.
\end{equation}
We group terms by $\vs_{t_i}^\theta$, and perform a square completion equivalent to the previous section:
\begin{align}
    &r(\vx_0) (-\log \rho_{t_i}) + \frac{\alpha\Omega(t_i)^2}{2\sigma_{t_i}^2} \left\| \vs_{t_i}^\theta - \vs_{t_i}^\text{ref} \right\|^2\nonumber\\
    &=-\frac{r(\vx_0)\Omega(t_i)}{\sigma_{t_i}} (\vs_{t_i}^\theta - \vs_{t_i}^{\theta^\dagger} -\vs_{t_i}^{\text{ref}} + \vs_{t_i}^{\text{ref}})^\top \veps_{t_i}
    + \frac{r(\vx_0)}{2} \frac{\Omega(t_i)^2}{\sigma_{t_i}^2}
    \| \vs_{t_i}^\theta - \vs_{t_i}^{\theta^\dagger} \|^2
    + \frac{\alpha\Omega(t_i)^2}{2\sigma_{t_i}^2} \|\vs_{t_i}^\theta - \vs_{t_i}^{\text{ref}}\|^2 \nonumber\\
    &=\frac{\alpha\Omega(t_i)^2}{2\sigma_{t_i}^2} \| \vs_{t_i}^\theta -
    \big( \vs_{t_i}^{\text{ref}} + \frac{\sigma_{t_i}}{\alpha\Omega(t_i)}r(\vx_0) \veps_{t_i} \big)\|^2
    + \frac{r(\vx_0)}{2} \frac{\Omega(t_i)^2}{\sigma_{t_i}^2}
    \| \vs_{t_i}^\theta - \vs_{t_i}^{\theta^\dagger} \|^2
    + C
\end{align}
where $C$ is a non-gradient-producing constant w.r.t. $\theta$.
Thus,
\begin{equation}
\label{eq:pcpo-base-loss}
\begin{aligned}
\mathcal{L}(\theta)=
\begin{cases}
\mathbb{E}\!\left[
\frac{\alpha\Omega(t_i)^2}{2\sigma_{t_i}^2}
\left\|\vs_{t_i}^\theta-\vs_{t_i}^{\mathrm{ref}}\right\|^2
\right]
&
\begin{aligned}[t]
&\text{if } \big(\log\rho_{t_i}<-\xi \land r(\vx_0)\le 0\big) \\
&\quad \text{or } \big(\log\rho_{t_i}>\xi \land r(\vx_0)\ge 0\big),
\end{aligned}
\\[1ex]
\mathbb{E}\!\left[
\frac{\alpha\Omega(t_i)^2}{2\sigma_{t_i}^2}
\left\|\vs_{t_i}^\theta-
\big(\vs_{t_i}^{\mathrm{ref}}+\textcolor{blue}{\mathbf{\Psi}_{t_i}}\big)\right\|^2
+
\frac{r(\vx_0)}{2}\frac{\Omega(t_i)^2}{\sigma_{t_i}^2}
\left\|\vs_{t_i}^\theta-\vs_{t_i}^{\theta^\dagger}\right\|^2
\right]
&
\text{otherwise.}
\end{cases}
\end{aligned}
\end{equation}
Therefore, the clipped log-ratio surrogate loss admits the canonical form with
\begin{equation}
\label{eq:pcpo-base-unified-terms}
\textcolor{blue}{\mathbf{\Psi}_{t_i}} =
\frac{\sigma_{t_i}}{\alpha\Omega(t_i)}r(\vx_0) \veps_{t_i},
\quad C_1(t_i) =
\frac{\alpha}{2}
\frac{\Omega(t_i)^2}{\sigma_{t_i}^2},
\quad C_2(t_i) = \frac{r(\vx_0)}{\alpha}.
\end{equation}

\subsubsection{Variants}

\textbf{EPG.} EPG~\cite{choi2026rethinking} is a variant of REINFORCE, which extends it to the \emph{loose} on-policy setting, and uses a control variate for variance reduction. Its loss can be written as
\begin{equation}
\mathcal{L}_{\mathrm{EPG}}(\theta)
=
\mathbb{E}_{\tau \sim p^{\theta^\dagger}}
\Big[
\sg (\rho_{t_i} )\,
\frac{\alpha\Omega(t_i)^2}{2\sigma(t_i)^2}
\big\|
\vs_{t_i}^\theta
-
\big(
\vs_{t_i}^{\mathrm{ref}} + \textcolor{blue}{\mathbf{\Psi}_{t_i}}
\big)
\big\|^2
\Big],
\end{equation}
with the same $\mathbf{\Psi}_{t_i}$ as KL-regularized REINFORCE.
Compared to REINFORCE, the only modification is the multiplicative stop-gradient importance ratio, which does not produce gradients itself.
Note that the practical calculation of $\rho_{t_i}$ requires exponentiation, which is numerically unstable and exhibits worse training dynamics compared to the log-ratio surrogate~\cite{lee2025pcpo}.

\textbf{PPO, GRPO.} \citet{huangppoclip} proved that the KL-regularized PPO loss
\begin{equation}
    \mathbb{E}_{\vx_{t_i} \sim p^{\theta^\dagger}, \veps_{t_i}}[\max\big(-\rho_{t_i}r(\vx_0), -\text{clip}_\xi(\rho_{t_i})r(\vx_0)\big) + \frac{\alpha\Omega(t_i)^2}{2\sigma_{t_i}^2} \left\| \vs_{t_i}^\theta - \vs_{t_i}^\text{ref} \right\|^2]
\end{equation}
is equivalent to
\begin{equation}
\label{eq:hinge-loss-clip}
\mathcal{L}_\text{clip}(\theta)=
\begin{cases}
\mathbb{E}_{\vx_{t_i} \sim p^{\theta^\dagger},\,\veps_{t_i}}\!\left[
\dfrac{\alpha \Omega(t_i)^2}{2\sigma_{t_i}^2}
\left\|\vs_{t_i}^\theta-\vs_{t_i}^\mathrm{ref}\right\|^2
\right]
&
\begin{aligned}[t]
&\text{if } \big(\rho_{t_i}<1-\xi \land r(\vx_0)\le 0\big) \\
&\quad \text{or } \big(\rho_{t_i}>1+\xi \land r(\vx_0)\ge 0\big),
\end{aligned}
\\[1ex]
\mathbb{E}_{\vx_{t_i} \sim p^{\theta^\dagger},\,\veps_{t_i}}\!\left[
-r(\vx_0)(\rho_{t_i}-1)
+ \dfrac{\alpha \Omega(t_i)^2}{2\sigma_{t_i}^2}
\left\|\vs_{t_i}^\theta-\vs_{t_i}^\mathrm{ref}\right\|^2
\right]
&
\text{otherwise.}
\end{cases}
\end{equation}
up to constants independent of $\theta$.
Therefore, PPO simply replaces the log-ratio term in Eq.~\eqref{eq:log-hinge-clip-loss} with $\rho_{t_i} - 1$.
For Gaussian transitions,
\begin{equation}
    \rho_{t_i} - 1 = \exp
    \left(
    -\frac{\Omega(t_i)}{\sigma_{t_i}} (\vs_{t_i}^\theta - \vs_{t_i}^{\theta^\dagger})^\top \veps_{t_i} 
    - \frac{\Omega(t_i)^2}{2\sigma_{t_i}^2} \left\| \vs_{t_i}^\theta - \vs_{t_i}^{\theta^\dagger} \right\|^2 \right) - 1.
\end{equation}
Directly optimizing this exponential is numerically unstable. In practice, diffusion/flow-based PPO and GRPO implementations \cite{black2023training,fan2023dpok,xue2025dancegrpo,liu2025flowgrpo} use an extremely tight clipping threshold (e.g., $\xi=10^{-4}$). Under such tight clipping, the first-order approximation $\exp(x)-1 \approx x$ incurs error of order $\xi^2/2$ for the interior regime \(1-\xi \le \rho_{t_i} \le 1+\xi\), as well as for clipped cases where the loss is already fixed at the clipping boundary\footnote{The remaining non-clipped sign-consistent regimes, \((\rho_{t_i}<1-\xi \land r(\vx_0)\ge 0\)) and \((\rho_{t_i}>1+\xi \land r(\vx_0)\le 0)\), admit a looser bound. Empirically, however, \citet{lee2025pcpo} observe that \( |\rho_{t_i}-1| < 0.01 \) throughout training due to a low learning rate, so the approximation error remains small in practice.}.
Under this necessary approximation, PPO / GRPO reduce to the clipped log-ratio surrogate in Eq.~\eqref{eq:pcpo-base-loss}, and therefore have the same unified terms as Eq.~\eqref{eq:pcpo-base-unified-terms}.

\textbf{PCPO-reweight.}
For flow-matching models, PCPO \emph{explicitly} reweights the policy loss by
\begin{equation}
\gamma_{t_i}
=
\frac{w_\text{new}(t_i)}{w(t_i)},
\qquad
w_\text{new}(t_i)=\zeta \Delta t_i,
\qquad
\zeta=\sum_{i=1}^N w(t_i).
\end{equation}
See Appendix~\ref{appendix:weights} for more details.
This modifies only the effective guidance and anchor strength:
\begin{equation}
\label{eq:pcpo-flow-reweight-unified-terms}
    \textcolor{blue}{\mathbf{\Psi}_{t_i}} =
    \gamma_{t_i}\frac{\sigma_{t_i}}{\alpha\Omega(t_i)}r(\vx_0) \veps_{t_i},
    \quad C_1(t_i) =
    \frac{\alpha}{2}
    \frac{\Omega(t_i)^2}{\sigma_{t_i}^2},
    \quad C_2(t_i) = \gamma_{t_i}\frac{r(\vx_0)}{\alpha}.
\end{equation}
For diffusion models, PCPO \emph{implicitly} reweights $h(t_i)$ by modifying the variance schedule $\sigma_{t_i}$ to $\sigma'_{t_i}$. Its effect is summarized in Table~\ref{tab:unified-table}.

\textbf{GRPO-Guard.}
\label{appendix:grpo-guard}
GRPO-Guard observes that $\log \rho_{t_i}$ is negatively biased by the KL term between the current and old policies, which empirically shifts its mean below zero. To correct this, it centers and rescales the log-ratio as
\[
\sigma_{t_i}\big(\log \rho_{t_i}-\mathbb{E}[\log \rho_{t_i}]\big),
\]
and additionally reweights the policy term by $1/\Delta t_i$. Under the clipped log-ratio objective, this yields
\begin{equation}
\label{eq:guard-base-loss}
\begin{aligned}
    \mathcal{L}(\theta)
    &=
    \begin{cases}
        \mathbb{E}
        \left[
        \frac{\alpha\Omega(t_i)^2}{2\sigma_{t_i}^2}
        \left\|
        \vs_{t_i}^\theta-\vs_{t_i}^\text{ref}
        \right\|^2
        \right]
        &\text{if } \big(\sigma_{t_i}(\log\rho_{t_i} - \mathbb{E}[\log \rho_{t_i}])<-\xi \land r(\vx_0)\le 0\big) \\ 
&\quad \text{or } \big(\sigma_{t_i}(\log\rho_{t_i} - \mathbb{E}[\log \rho_{t_i}])>\xi \land r(\vx_0)\ge 0\big), \\
        \mathbb{E}
        \left[
        \frac{\alpha\Omega(t_i)^2}{2\sigma_{t_i}^2}
        \left\|
        \vs_{t_i}^\theta
        -
        \big(
        \vs_{t_i}^\text{ref}
        +
        \textcolor{blue}{\mathbf{\Psi}_{t_i}}
        \big)
        \right\|^2
        \right]
        & \text{otherwise},
    \end{cases}
\end{aligned}
\end{equation}
with
\begin{equation}
\label{eq:grpo-guard-unified-terms}
    \textcolor{blue}{\mathbf{\Psi}_{t_i}} = \gamma_{t_i}
    \frac{\sigma_{t_i}}{\alpha\Omega(t_i)}r(\vx_0) \veps_{t_i},
    \quad C_1(t_i) =
    \frac{\alpha}{2}
    \frac{\Omega(t_i)^2}{\sigma_{t_i}^2},
    \quad C_2(t_i) = 0,
\end{equation}
where $\gamma_{t_i} = \frac{\sigma_{t_i}\Omega(t_i)}{\Delta t_i}$. Thus, GRPO-Guard explicitly removes the old-policy anchor term.

\textbf{TempFlow-GRPO.} TempFlow-GRPO introduces \emph{trajectory branching}: starting from a deterministic ODE trajectory, it revisits each intermediate latent $\vx_{t_i}$, injects one-step SDE noise to generate multiple descendants, and then deterministically denoises each descendant to the terminal state for reward evaluation.
This is closely related to \emph{vine sampling} used to reduce estimator variance in TRPO~\cite{schulmantrpo2015}, and is precisely a multi-particle lookahead estimator in our framework.
Note that TempFlow-GRPO does not explicitly average value-guidance vectors; instead, it applies an $L_2$ regression loss to multiple reward-weighted point estimates.
However, because the gradient of a sum of squared errors is centered at the sample mean, this optimization is functionally equivalent to regressing toward the average of those particle-wise targets.
Hence, under RSM, TempFlow-GRPO implicitly realizes a zeroth-order lookahead value-guidance estimator with branching width $K_i>1$, together with heuristic temporal reweighting $\gamma(t_i)=\frac{9}{4}\sigma_{t_i}$.
This yields
\begin{equation}
\label{eq:tempflow-unified-terms}
    \textcolor{blue}{\mathbf{\Psi}_{t_i}} =
    \gamma_{t_i}\frac{\sigma_{t_i}}{\alpha\Omega(t_i)}
    \Big( \frac{1}{K_i} \sum_{k=1}^{K_i}
    r(\vx_0^{(k)}) \veps_{t_i}^{(k)} \Big),
    \quad C_1(t_i) =
    \frac{\alpha}{2}
    \frac{\Omega(t_i)^2}{\sigma_{t_i}^2},
    \quad C_2(t_i) = \gamma_{t_i}\frac{r(\vx_0)}{\alpha}.
\end{equation}

\textbf{BranchGRPO.} BranchGRPO likewise uses multiple descendants to estimate a zeroth-order lookahead guidance term, but differs from TempFlow-GRPO in that its branching is \emph{recursive} rather than one-step from each revisited parent latent. Under the same RSM view, it therefore realizes a multi-particle estimator form:
\begin{equation}
\label{eq:branch-unified-terms}
    \textcolor{blue}{\mathbf{\Psi}_{t_i}} =
    \frac{\sigma_{t_i}}{\alpha\Omega(t_i)}
    \Big( \frac{1}{K_i} \sum_{k=1}^{K_i}
    r(\vx_0^{(k)}) \veps_{t_i}^{(k)} \Big),
    \quad C_1(t_i) =
    \frac{\alpha}{2}
    \frac{\Omega(t_i)^2}{\sigma_{t_i}^2},
    \quad C_2(t_i) = \frac{r(\vx_0)}{\alpha}.
\end{equation}

\subsection{Remark on parameters.}
To address the discrepancies in naming conventions across Soft RL and GFlowNet literature, we have unified the notation for components that serve identical functions.
Table~\ref{tab:notation_mapping} maps the various notations found in prior works to the single, unified notation used in this text.

\begin{table}[!t]
    \caption{\textbf{Unified Notation for Prior Works.}}
    \label{tab:notation_mapping}
    \centering
    \resizebox{0.65\linewidth}{!}{
    \begin{tabular}{l l c}
        \toprule
        \textbf{Parameter Description} & \textbf{Notation in Literature} & \textbf{Unified} \\
        \midrule
        \multirow{4}{*}{Entropy Regularization} 
            & $\alpha$ \cite{uehara2024understandingreinforcementlearningbasedfinetuning} & \multirow{4}{*}{$\alpha$} \\
            & $\frac{2\alpha}{d}$ \cite{kang2026sqdf} & \\
            & $\beta$ \cite{lee2025pcpo,liu2025flowgrpo} & \\
            & $\frac{1}{\beta}$ \cite{liu2025resnabladb,liu2025vggflow} & \\
        \midrule
        \multirow{2}{*}{Downweighting Factor}   
            & $\eta_t$ \cite{liu2025vggflow} & \multirow{2}{*}{$\tilde\gamma_t$} \\
            & $\gamma_t$ \cite{liu2025resnabladb, kang2026sqdf} & \\
        \midrule
        \multirow{2}{*}{Residual Parameterization} 
            & $g_\phi$ \cite{liu2025resnabladb} & \multirow{2}{*}{$g_\phi$} \\
            & $-\nu_\phi$ \cite{liu2025vggflow} & \\
        \midrule
        \multirow{2}{*}{Reward Representation}      
            & $\log R(\cdot)$ \cite{liu2025resnabladb, zhang2025dagdb} & \multirow{2}{*}{$r(\cdot)$} \\
            & $r(\cdot)$ \cite{kang2026sqdf, lee2025pcpo, uehara2024understandingreinforcementlearningbasedfinetuning, liu2025flowgrpo,liu2025vggflow} & \\
        \bottomrule
    \end{tabular}
    }
\end{table}

\textbf{Entropy Regularization ($\alpha$)}: While often denoted as $\beta$ (inverse temperature) or $1/\beta$ in Soft RL contexts, we adopt $\alpha$ to strictly control the degree of entropy regularization\footnote{SQDF~\cite{kang2026sqdf} apply mean reduction over the latent dimension $d$ for \emph{only} entropy regularization, and omit the $\frac{1}{2}$ coefficient for the KL divergence. Consequently, their reported $\alpha$ corresponds to an effective regularization weight of $2\alpha/d$.}
.

\textbf{Downweighting Factor ($\tilde\gamma_t$)}: We rename VGG-Flow's one-step reward estimation factor from $\eta_t$ to $\tilde\gamma_t$ to align with the conventions established by Residual $\nabla$-DB and SQDF.

\textbf{Reward Value ($r(\cdot)$)}: In diffusion fine-tuning GFlowNet literature, the reward is often interpreted as the log of the GFlowNet reward. We use $r(\cdot)$ directly to refer to this value.

\section{Parameterizations}
\subsection{\texorpdfstring{$\Omega$ Derivations for Popular Samplers}{Omega Derivations for Popular Samplers}}
\label{appendix:omega-derivations}

In this section, we provide step-by-step derivations of the scaling factor $\Omega(t_i)$ for two widely used sampling algorithms: DDIM applied to VP-SDE (e.g., Stable Diffusion 1.5) and Euler Discrete applied to Rectified Flow (e.g., Stable Diffusion 3).

From Proposition~\ref{cor:gen_soft_value}, the relationship between the optimal score $\vs_{t_i}^\star$, the reference score $\vs_{t_i}^\text{ref}$, and the value gradient is established via the conditional transition mean $\pmb\mu_{t_{i-1}}$. When $\pmb\mu_{t_{i-1}}$ can be expressed in the form $\pmb\mu_{t_{i-1}} = \kappa(t_i)\vx_{t_i} + {\Omega(t_i)}\vs_{t_i}$, the guidance term simplifies to:
\begin{equation}
    \vs_{t_i}^\star = \vs_{t_i}^\text{ref} + \frac{\sigma_{t_i}^2}{\alpha\Omega(t_i)}\nabla_{\vx_{t_{i-1}}}V_{t_{i-1}}(\vx_{t_{i-1}}).
\end{equation}
The specific value of $\Omega(t_i)$ depends on the choice of noise schedule and numerical solver.

\subsubsection{DDIM Sampler (VP-SDE)}
\label{subsec:Omega-ddim}

For the DDIM sampler applied to a VP-SDE, the transition from $\vx_{t_i}$ to $\vx_{t_{i-1}}$ is defined as:
\begin{equation}
    \vx_{t_{i-1}} = \sqrt{\bar{\alpha}_{t_{i-1}}} \hat{\vx}_{0}(\vx_{t_i}) + \sqrt{1 - \bar{\alpha}_{t_{i-1}} - \sigma_{t_i}^2} \veps_{t_i}^\theta + \sigma_{t_i} \veps_{t_i}
\end{equation}
where $\hat{\vx}_{0}(\vx_{t_i}) = \frac{\vx_{t_i} - \sqrt{1 - \bar{\alpha}_{t_i}} \veps_{t_i}^\theta}{\sqrt{\bar{\alpha}_{t_i}}}$ is the Tweedie estimate of the clean data, and $\veps_{t_i} \sim \mathcal{N}(\mathbf{0}, \mathbf{I})$ is the injected noise.
The conditional transition mean $\pmb\mu_{t_{i-1}}(\vx_{t_i})$ can be rewritten as:
\begin{align}
\label{eq:Omega-ddim}
    \pmb\mu_{t_{i-1}}(\vx_{t_i}) &= \sqrt{\bar{\alpha}_{t_{i-1}}} \left( \frac{\vx_{t_i} - \sqrt{1 - \bar{\alpha}_{t_i}} \veps_{t_i}^\theta}{\sqrt{\bar{\alpha}_{t_i}}} \right) + \sqrt{1 - \bar{\alpha}_{t_{i-1}} - \sigma_{t_i}^2} \veps_{t_i}^\theta \nonumber \\
    &= \sqrt{\frac{\bar{\alpha}_{t_{i-1}}}{\bar{\alpha}_{t_i}}} \vx_{t_i} - \left( \sqrt{\frac{\bar{\alpha}_{t_{i-1}}}{\bar{\alpha}_{t_i}}} \sqrt{1 - \bar{\alpha}_{t_i}} - \sqrt{1 - \bar{\alpha}_{t_{i-1}} - \sigma_{t_i}^2} \right) \veps_{t_i}^\theta \nonumber \\
    &= \sqrt{\frac{\bar{\alpha}_{t_{i-1}}}{\bar{\alpha}_{t_i}}} \vx_{t_i} + \underbrace{\left( \sqrt{\frac{\bar{\alpha}_{t_{i-1}}}{\bar{\alpha}_{t_i}}} \sqrt{1 - \bar{\alpha}_{t_i}} - \sqrt{1 - \bar{\alpha}_{t_{i-1}} - \sigma_{t_i}^2} \right) \sqrt{1 - \bar{\alpha}_{t_i}}}_{\Omega(t_i)}\vs_{t_i}^\theta.
\end{align}

\subsubsection{SDE-DPM-Solver++ (VP-SDE)}

To further demonstrate the generality of our framework, we derive $\Omega(t_i)$ for SDE-DPM-Solver++~\cite{lu2025dpmppsolverfastode}, a higher-order solver supported by Residual $\nabla$-DB.



Let $a_t = \sqrt{\bar\alpha_t}$ and $b_t^2 = 1-a_t^2$ for the brevity. Let $\lambda_t=\log(a_t/b_t)$ be log-SNR and $h = \lambda_{t_{i-1}}-\lambda_{t_i}$.
For SDE-DPM-Solver++ 1 sampler applied to a VP-SDE, the transition from $\vx_{t_i}$ to $\vx_{t_{i-1}}$ is defined as:
\begin{align}
    \vx_{t_{i-1}} = \frac{b_{t_{i-1}}}{b_{t_i}}e^{-h}\vx_{t_i} + a_{t_{i-1}}(1-e^{-2h})\hat\vx_{0}(\vx_{t_i}) + b_{t_{i-1}}\sqrt{1-e^{-2h}}\veps_{t_i},
\end{align}
where $\hat\vx_{0}(\vx_{t_i}) = \frac{\vx_{t_i}-b_{t_i}\veps^\theta_{t_i}}{a_{t_i}}=\frac{\vx_{t_i} + b_{t_i}^2 \vs_{t_i}^\theta}{a_{t_i}}$ denotes the Tweedie estimate of the clean data, and $\veps_{t_i}\sim \gN(0, \rmI)$ is the sampled random noise.
Thus, 
\begin{align}
    \pmb\mu_{t_{i-1}}(\vx_{t_i}) = \left(\frac{b_{t_{i-1}}}{b_{t_i}}e^{-h} + \frac{a_{t_{i-1}}}{a_{t_i}}(1-e^{-2h}) \right) \vx_{t_i} + \frac{a_{t_{i-1}}}{a_{t_i}}(1-e^{-2h})b_{t_i}^2\vs_{t_i}^\theta.
\end{align}
The covariance for this step is $\pmb\Sigma_{t_i}=b^2_{t_{i-1}}(1-e^{-2h}) \rmI:=\sigma_{t_i} \rmI$. By comparing this result with the form $\pmb\mu_{t_{i-1}}=\kappa(t_i)\vx_{t_i} + \Omega(t_i)\vs_{t_i}^\theta(\vx_{t_i})$, we have
\begin{align}
    \Omega(t_i) = \frac{a_{t_{i-1}}}{a_{t_i}} \left(1-e^{-2h} \right) b_{t_i}^2 = \frac{a_{t_{i-1}}}{a_{t_i}} \frac{b_{t_i}^2}{b_{t_{i-1}}^2} \left(1-e^{-2h} \right) b_{t_{i-1}}^2=  \sqrt{\frac{\bar \alpha_{t_{i-1}}}{\bar\alpha_{t_i}}} \frac{1-\bar\alpha_{t_i}}{1-\bar\alpha_{t_{i-1}}} \sigma_{t_i}.
\end{align}

\subsubsection{Euler Discrete Sampler (Flow-SDE)}
\label{subsec:Omega-euler-discrete}
For the Euler Discrete sampler applied to Rectified Flow (e.g., Flow-GRPO~\cite{liu2025flowgrpo}), we consider the reverse SDE that shares the same marginal densities as the probability flow ODE. 
The reverse SDE step from $t_i$ to $t_{i-1}$ (where $t_{i-1} < t_i$, and we define $\Delta t_i = t_i - t_{i-1}$) takes the form:
\begin{equation}
    \vx_{t_{i-1}} = \underbrace{\vx_{t_i} - \Delta t_i \vv_{t_i}^\theta(\vx_{t_i}) + \frac{\sigma_{t_i}^2}{2} \vs_{t_i}^\theta(\vx_{t_i})}_{\pmb\mu_{t_{i-1}}} +\sigma_{t_i} \veps_{t_i},
\end{equation}
where $\vv_{t_i}^\theta(\vx_{t_i})$ is the velocity field prediction, $\vs_{t_i}^\theta(\vx_{t_i})$ is the score prediction, and $\veps_{t_i} \sim \mathcal{N}(\mathbf{0}, \mathbf{I})$ is the injected noise\footnote{Here, $\sigma_t$ follows the definition in Proposition~\ref{cor:gen_soft_value}. Note that $\sigma_t = \tilde \sigma_t (\Delta t)$.}.
In Rectified Flow, the velocity field and the marginal score are related by:
\begin{equation}
\label{eq:v-and-score-relation-rectified-flow}
    \vv_{t_i}(\vx_{t_i}) = -\frac{1}{1 - t_i} \vx_{t_i} - \frac{t_i}{1 - t_i} \vs_{t_i}(\vx_{t_i}),
\end{equation}
which is consistent with the Tweedie estimate $\hat{\vx}_0(\vx_{t_i}) = \frac{\vx_{t_i} + \vs_{t_i}^\theta(\vx_{t_i}) t_i^2}{1 - t_i} = \vx_{t_i} - \vv_{t_i}^\theta(\vx_{t_i}) t_i$.
Substituting Eq.~\eqref{eq:v-and-score-relation-rectified-flow} for $\vv_{t_i}^\theta$ into the conditional transition mean $\pmb\mu_{t_{i-1}}$, we get:
\begin{align}
    \pmb\mu_{t_{i-1}}(\vx_{t_i}) &= \vx_{t_i} - \Delta t_i \left( -\frac{1}{1 - t_i} \vx_{t_i} - \frac{t_i}{1 - t_i} \vs_{t_i}^\theta(\vx_{t_i}) \right) +  \frac{\sigma_{t_i}^2}{2} \vs_{t_i}^\theta(\vx_{t_i}) \nonumber \\
    &= \left( 1 + \frac{\Delta t_i}{1 - t_i} \right) \vx_{t_i} + \left( \frac{t_i}{1 - t_i}\Delta t_i + \frac{\sigma_{t_i}^2}{2} \right) \vs_{t_i}^\theta(\vx_{t_i}).
\end{align}
Next, our aim is to write the guidance term via the transition mean $\pmb\mu_{t_{i-1}} = \kappa(t_i)\vx_{t_i} + \Omega(t_i)\vs_{t_i}^\theta(\vx_{t_i})$. 
Equating the coefficients of $\vs_{t_i}^\theta(\vx_{t_i})$, we obtain:
\begin{equation}
\label{eq:Omega-EulerDiscrete}
    \Omega(t_i) = \frac{t_i}{1 - t_i}\Delta t_i + \frac{\sigma_{t_i}^2}{2}.
\end{equation}
Note that $\sigma_{t_i} = \tilde \sigma_{t_i} \sqrt{\Delta t_i}$, where $\tilde \sigma_{t_i} = a\sqrt{\frac{t_i}{1-t_i}}$ for Flow-GRPO SDE and $\tilde \sigma_{t_i} = a$ for Dance-GRPO SDE.

This derivation clarifies how the intrinsic parameters of the flow model ($t_i$) and the chosen noise scale ($\sigma_{t_i}$) jointly determine the magnitude of value guidance $\Omega(t_i)$ required to navigate the reverse SDE towards the optimal policy.

\subsubsection{CPS Sampler}
\label{subsec:Omega-cps}

CPS~\cite{wang2025coefficientspreservingsamplingreinforcementlearning} uses a modified sampler for flow-matching models with a coefficient-preserving stochastic interpolation.
Its transition mean is defined using
\begin{equation}
    \tilde{\sigma}_{t_i}
    =
    t_{i-1}\sin\left(\frac{\eta\pi}{2}\right),
\end{equation}
and
\begin{equation}
    \pmb\mu_{t_{i-1}}(\vx_{t_i})
    =
    (1-t_{i-1})\hat{\vx}_0
    +
    \sqrt{t_{i-1}^2-\tilde{\sigma}_{t_i}^2}\,
    \hat{\vx}_1,
\end{equation}
where
\begin{equation}
    \hat{\vx}_0
    =
    \vx_{t_i}-t_i\vv_{t_i},
    \qquad
    \hat{\vx}_1
    =
    \vx_{t_i}+(1-t_i)\vv_{t_i}.
\end{equation}

Since
\begin{equation}
    \sqrt{t_{i-1}^2-\tilde{\sigma}_{t_i}^2}
    =
    t_{i-1}\cos\left(\frac{\eta\pi}{2}\right),
\end{equation}
the transition mean becomes
\begin{align}
    \pmb\mu_{t_{i-1}}(\vx_{t_i})
    &=
    \left(
        1-t_{i-1}
        +
        t_{i-1}\cos\left(\frac{\eta\pi}{2}\right)
    \right)\vx_{t_i}
    \nonumber\\
    &\quad+
    \left(
        -t_i(1-t_{i-1})
        +
        (1-t_i)t_{i-1}
        \cos\left(\frac{\eta\pi}{2}\right)
    \right)\vv_{t_i}.
\end{align}

For Rectified Flow,
\begin{equation}
    \vv_{t_i}
    =
    -\frac{1}{1-t_i}\vx_{t_i}
    -
    \frac{t_i}{1-t_i}\vs_{t_i},
\end{equation}
so substituting the velocity--score relation yields
\begin{align}
    \pmb\mu_{t_{i-1}}(\vx_{t_i})
    &=
    \left(
        1-t_{i-1}
        +
        t_{i-1}\cos\left(\frac{\eta\pi}{2}\right)
        +
        \frac{t_i(1-t_{i-1})}{1-t_i}
        -
        t_{i-1}\cos\left(\frac{\eta\pi}{2}\right)
    \right)\vx_{t_i}
    \nonumber\\
    &\quad+
    \underbrace{
    \left(
        \frac{t_i^2(1-t_{i-1})}{1-t_i}
        -
        t_i t_{i-1}
        \cos\left(\frac{\eta\pi}{2}\right)
    \right)
    }_{\Omega(t_i)}
    \vs_{t_i}.
\end{align}

Therefore, CPS admits the unified transition parameterization
\begin{equation}
    \pmb\mu_{t_{i-1}}
    =
    \kappa(t_i)\vx_{t_i}
    +
    \Omega(t_i)\vs_{t_i},
\end{equation}
with
\begin{equation}
    \boxed{
    \Omega(t_i)
    =
    \frac{t_i^2(1-t_{i-1})}{1-t_i}
    -
    t_i t_{i-1}
    \cos\left(\frac{\eta\pi}{2}\right)
    }.
\end{equation}
The corresponding stochastic transition coefficient is
\begin{equation}
    \boxed{
    \sigma_{t_i}
    =
    t_{i-1}
    \sin\left(\frac{\eta\pi}{2}\right)
    \sqrt{t_i-t_{i-1}}
    }.
\end{equation}

\subsection{\texorpdfstring{$\delta$ Derivations for Common Parameterizations}{Delta Derivations for Common Parameterizations}}
\label{appendix:delta-derivations}

We derive $\delta(t_i)$, the signed scalar relating the parameterization difference to the score difference, for cases used in practice in this paper.

\textbf{$\veps$-prediction, VP-SDE.}
Under the forward marginal, the score satisfies
\begin{equation}
\label{eq:score-noise}
\vs_{t_i} = -\frac{1}{\sqrt{1-\bar{\alpha}_{t_i}}}\veps,
\qquad
\veps = -\sqrt{1-\bar{\alpha}_{t_i}}\,\vs_{t_i}.
\end{equation}
Therefore,
\[
\veps_{t_i}^\theta - \veps_{t_i}^\text{ref}
=
-\sqrt{1-\bar{\alpha}_{t_i}}
\bigl(\vs_{t_i}^\theta - \vs_{t_i}^\text{ref}\bigr).
\]
Comparing this with
\[
\veps_{t_i}^\theta - \veps_{t_i}^\text{ref}
=
-\frac{1}{\delta(t_i)}
\bigl(\vs_{t_i}^\theta - \vs_{t_i}^\text{ref}\bigr),
\]
we obtain, for VP-SDE diffusion models,
\begin{equation}
    \delta(t_i) = \frac{1}{\sqrt{1 - \bar \alpha_{t_i}}}.
    \label{eq:delta-eps-ddim}
\end{equation}

\textbf{$\vv$-prediction, Rectified Flow.}
Consider the general forward marginal
$p_t(\vx_{t_i} \mid \vx_0) = \mathcal{N}(\vx_{t_i}; a_{t_i}\vx_0, b_{t_i}^2 \mathbf{I})$,
which induces the trajectory
$\vx_{t_i} = a_{t_i}\vx_0 + b_{t_i}\veps$.
Differentiating with respect to time gives
$\vv_{t_i} \triangleq \frac{d\vx_{t_i}}{dt} = \dot a_{t_i}\vx_0 + \dot b_{t_i}\veps$.
Substituting
$\vx_0 = \frac{1}{a_{t_i}}(\vx_{t_i} - b_{t_i}\veps)$
to eliminate the clean data yields
\[
\vv_{t_i}(\vx_{t_i})
=
\frac{\dot a_{t_i}}{a_{t_i}}\vx_{t_i}
+
\left(
\dot b_{t_i} - \frac{\dot a_{t_i} b_{t_i}}{a_{t_i}}
\right)\veps.
\]
Using the score identity (Eq.~\eqref{eq:score-noise}), this becomes
\[
\vv_{t_i}(\vx_{t_i})
=
\frac{\dot a_{t_i}}{a_{t_i}}\vx_{t_i}
+
\left(
\frac{\dot a_{t_i} b_{t_i}^2 - a_{t_i}\dot b_{t_i} b_{t_i}}{a_{t_i}}
\right)\vs_{t_i}(\vx_{t_i}).
\]

The coefficient of $\vx_{t_i}$ depends only on the fixed forward schedule and therefore contains no learned parameters.
Hence, when taking the difference between the learned policy and the reference policy, the $\vx_{t_i}$ terms cancel exactly, giving
\[
\vv_{t_i}^\theta - \vv_{t_i}^\text{ref}
=
\left(
\frac{\dot a_{t_i} b_{t_i}^2 - a_{t_i}\dot b_{t_i} b_{t_i}}{a_{t_i}}
\right)
\bigl(\vs_{t_i}^\theta - \vs_{t_i}^\text{ref}\bigr).
\]
Combining this with
\[
\vv_{t_i}^\theta - \vv_{t_i}^\text{ref}
=
-\frac{1}{\delta(t_i)}
\bigl(\vs_{t_i}^\theta - \vs_{t_i}^\text{ref}\bigr),
\]
we obtain
\[
\delta(t_i)
=
-
\frac{a_{t_i}}
{\dot a_{t_i} b_{t_i}^2 - a_{t_i}\dot b_{t_i} b_{t_i}}.
\]

For Rectified Flow, where $a_{t_i} = 1-t_i$ and $b_{t_i} = t_i$, this reduces to
\begin{equation}
    \delta(t_i) = \frac{1-t_i}{t_i}.
    \label{eq:delta-v-rectifedflow}
\end{equation}

\subsection{\texorpdfstring{Sampler-dependent weights $w$}{Sampler-dependent weights}}
\label{appendix:weights}

Table~\ref{tab:unified-table} shows $h(t_i)$, which summarizes timestep-wise optimization strength.
For lookahead methods, a convenient bridge to this quantity is the sampler-dependent weight
\begin{equation}
\label{eq:pcpo_w_identity}
w(t_i):=\frac{\Omega(t_i)\delta(t_i)}{\sigma_{t_i}}
\quad\Longleftrightarrow\quad
\Omega(t_i)\delta(t_i)=w(t_i)\sigma_{t_i}.
\end{equation}
Substituting Eq.~\eqref{eq:pcpo_w_identity} into the lookahead form of \(h(t_i)\) yields, for $C_1(t_i) = \frac{\alpha}{2}\frac{\Omega(t_i)^2}{\sigma_{t_i}^2}$,\footnote{Residual $\nabla$-DB is slightly different: since \(C_1(t_i)=\frac{1}{d}\frac{\Omega(t_i)^2}{\sigma_{t_i}^4}\), the same substitution yields \(h(t_i)\propto w(t_i)\sigma_{t_i}^{-1}\) rather than \(w(t_i)\sigma_{t_i}\). The role of \(w(t_i)\) as a useful sampler-level lens remains analogous.}
\begin{equation}
h(t_i)=\delta(t_i)C_1(t_i)\gamma(t_i)\frac{\sigma_{t_i}^2}{\alpha\Omega(t_i)}=\frac{\gamma(t_i)}{2}\delta(t_i)\Omega(t_i)=\frac{\gamma(t_i)}{2}w(t_i)\sigma_{t_i}\propto w(t_i)\sigma_{t_i}.
\end{equation}
Thus, \(w(t_i)\) provides a convenient sampler-level lens for understanding how different design choices reschedule effective optimization strength across timesteps.
See Figure~2 of \citet{lee2025pcpo} for the shapes of \(w(t_i)\) under different samplers.

We now use PCPO-reweight as a case study.
Its key modification is to replace the native schedule induced by \(w(t_i)\) with a flatter alternative \(w'(t_i)\), thereby reshaping \(h(t_i)\).

\textbf{DDIM sampler, \(\veps\)-prediction (SD1.5).}
PCPO defines
\begin{equation}
w(t_i)=\frac{\frac{\sqrt{1-\bar\alpha_{t_i}}}{\sqrt{\alpha_{t_i}}}-\sqrt{1-\bar\alpha_{t_{i-1}}-\sigma_{t_i}^2}}{\sigma_{t_i}},
\end{equation}
which satisfies Eq.~\eqref{eq:pcpo_w_identity} together with \(\Omega\) from Eq.~\eqref{eq:Omega-ddim} and \(\delta\) from Eq.~\eqref{eq:delta-eps-ddim}.
PCPO then modifies the noise schedule from \(\sigma_{t_i}\) to \(\sigma'_{t_i}\) so that the resulting weights become approximately constant, \(w'(t_i)\approx w_{\mathrm{const}}\).
Accordingly, the induced schedule changes from \(\frac{1}{2}\sigma_{t_i}w(t_i)\) to \(\frac{1}{2}\sigma'_{t_i}w'(t_i)\).
To avoid confounding this reweighting with a global change in loss scale, PCPO normalizes the modified schedule so that \(\mathrm{avg}(w')=\mathrm{avg}(w)\).

\textbf{Euler discrete sampler, \(\vv\)-prediction (SD3).}
Here PCPO keeps the sampler noise schedule fixed and instead reweights the native timestep weights directly.
Writing \(\sigma_{t_i}=\tilde\sigma_{t_i}\sqrt{\Delta t_i}\), where \(\tilde\sigma_{t_i}\) is the instantaneous diffusion coefficient in the It\^o SDE, the standard weight is
\begin{equation}
w(t_i)=\frac{\sqrt{\Delta t_i}}{\tilde\sigma_{t_i}}\left(1+\frac{(1-t_i)\tilde\sigma_{t_i}^2}{2t_i}\right)^2,
\end{equation}
which again satisfies Eq.~\eqref{eq:pcpo_w_identity} with \(\Omega\) from Eq.~\eqref{eq:Omega-EulerDiscrete} and \(\delta\) from Eq.~\eqref{eq:delta-v-rectifedflow}.
PCPO replaces this native schedule by
\(
w'(t_i)=w_{\mathrm{const}}\Delta t_i,
\)
so the induced weighting changes from \(\frac{1}{2}\sigma_{t_i}w(t_i)\) to \(\frac{1}{2}\sigma_{t_i}w'(t_i)\).
As in the DDIM case, PCPO matches the average scale by enforcing \(\mathrm{avg}(w')=\mathrm{avg}(w)\).

\section{Experiment Details}
\label{appendix:experiment-details}

\begin{table*}[t]
\centering
\small
\caption{\textbf{Summary of experiment settings for Section~\ref{subsec:end-to-end-design}.}}
\label{tab:experiment-master}
\resizebox{\linewidth}{!}{
\begin{tabular}{llllll}
\toprule
\textbf{Experiment setting} & \textbf{Baseline} & \textbf{Base model} & \textbf{Reward model} & \textbf{Prompts} & \textbf{Max epoch} \\
\midrule
Mechanistic Analysis
& TempFlow-GRPO
& SD3.5-M
& Aesthetic Score
& Pick-a-Pic
& 150 \\

Validation: ZO, Flow
& TempFlow-GRPO
& SD3.5-M
& GenEval
& GenEval
& 420 \\

Validation: ZO, Diffusion
& PCPO
& SD1.5
& HPSv2.1
& HPDv2
& 300 \\

Validation: FO, Flow
& VGG-Flow
& SD3.5-M
& HPSv2.1
& GenEval
& 200 \\

Validation: FO, Diffusion
& Residual $\nabla$-DB
& SD1.5
& HPSv2.1
& GenEval
& 100 \\
\bottomrule
\end{tabular}
}
\end{table*}

\subsection{Toy experiments.}
\label{appendix:toy-experiment-details}
To conduct the experiments in Section~\ref{subsec:isolated-estimator}, we designed a base data distribution on 2 dimensional space $p_0^{\text{ref}}(\vx_0)$ as mixture of three Gaussian nodes with unit covariance:

\begin{equation}
p_0^{\text{ref}}(\vx_0):=\frac{1}{3}\biggl\{\mathcal{N}\left(\begin{bmatrix}3\\-\sqrt{3}\end{bmatrix},\mathbf{I}\right) + \mathcal{N}\left(\begin{bmatrix}-3\\-\sqrt{3}\end{bmatrix},\mathbf{I}\right) + \mathcal{N}\left(\begin{bmatrix}0\\2\sqrt{3}\end{bmatrix},\mathbf{I}\right)\biggr\}
\end{equation}

We defined our reward $r(\vx_0):=\vx_0[0]/2+3$ as a linear function, which allows the reward-tilted distribution $p_0^\star(\vx_0)\propto p_0^{\text{ref}}(\vx_0)e^{r(\vx_0)}$ to be another mixture of Gaussians. The noised marginal distributions $p_t^{\text{ref}}(\vx_t)$ and $p_t^\star(\vx_t)$ can be obtained in a closed form as an interpolation between data Gaussians and noise prior $\mathcal{N}(0, \mathbf{I})$. Their score function difference is used as a ground truth for the value gradient according to Eq.~\eqref{eq:optimal-decomposition}.

We trained diffusion models for $p_0^{\text{ref}}(\vx_0)$ and $p_0^\star(\vx_0)$ with $\epsilon$-prediction on 500 timesteps. The training was done with batch size 4096 for 2000 iterations, using Adam optimizer with learning rate $10^{-3}$. To simulate the value gradient estimator for different reward alignment training methods, DDIM sampling was done with timestep interval 10 with Markovian stochasticity. 


\subsection{End-to-end Design.}
\label{appendix:end-to-end-design-details}
Table~\ref{tab:experiment-master} summarizes experiment settings used in Section~\ref{subsec:end-to-end-design}.
Below we record the method-specific choices omitted from the main text.
Unless otherwise specified, we use the default configurations of the corresponding baseline.
Although we used H200 GPUs for some experiments to reduce wall-clock time, all experiments are reproducible on GPUs with 24GB VRAM except for the non-pruned VGG-Flow baseline.

\textbf{Mechanistic analysis (ZO, flow).}
We follow TempFlow-GRPO on SD3.5-M~\cite{esser2024scaling} with Aesthetic Score~\cite{schuhmann2022laion} reward and Pick-a-Pic~\cite{Pick-a-Pic} prompts.
Each epoch uses 16 prompts with 4 seeds per prompt.
For entropy regularization, we use $\alpha=0.01$ and follow the heuristic in the Flow-GRPO repository, which temporally weights the entropy term by $\Delta t_i$.
The baseline branching profile is
$
K=[6,6,6,6,6,6,6,6,6].
$
For the redesigned variants, the Uniform Reallocation profile is
$
K=[7,7,7,7,7,7,6,6,0],
$
and the Equal Variance Reallocation profile is
$
K=[4,5,5,6,7,8,9,10,0].
$
Following the policy-gradient setting, we use the group-normalized advantage $\hat A$ in place of raw reward $r$; see Appendix~\ref{subsec:additional_design}.
Evaluation was performed every 30th epoch on a fixed set of prompts.
Wall-clock time is reported in H200 hours; 2 H200 GPUs were used for experiments.

\textbf{Validation: ZO, flow.}
We follow the batch setting of TempFlow-GRPO, using 48 prompts per epoch with 4 seeds per prompt.
For entropy regularization, we use $\alpha=0.004$ and do \emph{not} apply the Flow-GRPO $\Delta t_i$ heuristic.
Because GenEval is primarily semantic, our redesign does not invest branching budget at high-SNR timesteps; instead, it concentrates compute on the first few low-SNR timesteps, where semantic guidance is most useful.
Concretely, we use the branching profile
$
K=[6,6,8,10,0,0,0,0,0],
$
together with GRPO-Guard temporal reweighting and the fair clipping rule in Eq.~\eqref{eq:fair-clip}.
For non-branching timesteps, we add a KL penalty anchoring to $\vs^\text{ref}_{t_i}$ using ODE trajectories, which avoids additional sampling cost for anchor trajectories.
We use the same $\alpha$ for these ODE anchors.
As above, we use the group-normalized advantage $\hat A$ in place of reward $r$.
Wall-clock time is measured excluding GenEval reward computation.
For the baseline, wall-clock time is estimated from the per-step time required under the TempFlow-GRPO setting.
Our method was evaluated every 60th epoch, while the TempFlow-GRPO baseline reported evaluations every 20th epoch.
Evaluation was performed on the GenEval test prompt dataset, separate from the training set.
Wall-clock time is reported in H200 hours; 4 H200 GPUs were used for experiments.

\textbf{Validation: ZO, diffusion.}
The baseline PCPO can be viewed as using the uniform profile $K_i \equiv 1$, i.e., a single sampled continuation at every reverse step.
Ideally, one would branch at every reverse step, but doing so over the full trajectory ($N=50$) is prohibitively expensive.
We therefore branch at only \(10\%\) of reverse steps, leaving the remaining steps unbranched.
Specifically, for a roughly matched per-epoch compute budget, we use 8 prompts per epoch with 5 seeds per prompt, and set $K_i=4$ at every 10th timestep.
For non-branching timesteps, we add a small $\|\vs^\theta-\vs^{\mathrm{ref}}\|^2$ regularizer on \(10\%\) of ODE rows to prevent uncontrolled drift from the reference model, with $\alpha=8\times 10^{-5}$.
We fine-tune SD1.5~\cite{rombach2022high} with LoRA~\cite{LoRA} rank 16.
Evaluation was performed every 10th epoch on a fixed set of prompts.
As above, we use the group-normalized advantage $\hat A$ in place of reward $r$.
Wall-clock time is reported in RTX4090 hours; 4 RTX4090 GPUs were used for experiments.

\textbf{Validation: FO, flow.}
We follow the batch setting of VGG-Flow, using 32 generation trajectories and one update per epoch.
The baseline relies on local Tweedie-based reward gradients, whereas our redesign uses terminal-image reward gradients $\nabla_{\vx_0} r(\vx_0)$, which effectively correspond to using $j=0$ reward information.
The exact unbiased quantity would be $\nabla_{\vx_{t_i}} r(\vx_0)$, but computing it requires an expensive high-order Jacobian chain.
Following score distillation approaches, we therefore use $\nabla_{\vx_0} r(\vx_0)$ as a practical surrogate.
To counteract inherited downweighting of low-SNR timesteps, we replace the baseline schedule $\tilde{\gamma}_{t_i}=(1-t_i)^2$ with $\tilde{\gamma}_{t_i}=\frac{1-t_i}{2}$.
We plot \emph{transition mean drift}, i.e. $\mathbb{E}[\|\vmu^\theta - \vmu^\text{ref}\|^2]$, which is analogous to the KL divergence defined for lookahead methods, i.e. $\mathbb{E}[\|\vmu^\theta - \vmu^\text{ref}\|^2/2\sigma^2]$.
Evaluation was performed every 50th epoch on the GenEval test dataset.
As we observed more variability across random seeds in this setting, we report results averaged over three runs with different seeds.
Wall-clock time is reported in H200 hours; 4 H200 GPUs were used for experiments.

\textbf{Validation: FO, diffusion.}
Both the baseline and our method collect 256 generation trajectories.
For training, we subsample \(20\%\) of transitions from each collected trajectory by uniformly splitting the trajectory into timestep intervals and sampling one transition from each interval, while always including the final transition, following \citet{liu2025resnabladb}.
The baseline again uses local Tweedie-based reward gradients, while our redesign uses terminal-image reward gradients $\nabla_{\vx_0} r(\vx_0)$ as a practical surrogate for the exact quantity $\nabla_{\vx_{t_i}} r(\vx_0)$.
This effectively corresponds to using \(j=0\) reward information without differentiating through the full Jacobian chain.
To reverse the inherited low-SNR downweighting, we multiply $\ell_{t_i}^\theta$ by $3\sigma_{t_i}$ and set $\tilde{\gamma}_{t_i}=\frac{3}{2}\sigma_{t_i}$.
Evaluation was perfomed every 50th epoch on the GenEval test dataset.
Wall-clock time is reported in RTX4090 hours; 4 RTX4090 GPUs were used for experiments.

\paragraph{Remark on Reward Score Distillation.}
The first-order validation experiments use terminal-image reward gradients $\nabla_{\vx_0} r(\vx_0)$ as a practical surrogate for the exact full-lookahead signal $\nabla_{\vx_{t_i}} r(\vx_0)$. This choice should be understood as a \emph{reward score distillation} approximation, analogous in spirit to Score Distillation Sampling (SDS)~\cite{poole2023dreamfusion}: rather than backpropagating through the full denoising trajectory, we use the terminal reward gradient as a tractable direction for improving the intermediate score update. The exact quantity $\nabla_{\vx_{t_i}} r(\vx_0)$ would be unbiased, but requires differentiating through the full Jacobian chain, which is prohibitively memory-intensive in practical settings and prevents the method from retaining the memory footprint of the original first-order baselines. The recently proposed half-order fine-tuning~\cite{ren2026halforder} provides a more principled alternative by constructing unbiased estimators with lower variance than zeroth-order policy-gradient estimators, at the cost of maintaining a local backward computation graph. Such estimators may offer the best overall bias--variance tradeoff when the additional memory overhead is acceptable. Our goal in this validation experiment is more conservative: to show that a Pareto-improving redesign is possible under essentially the same memory budget as the baseline. Compared with local Tweedie-based guidance, which is severely biased at low-SNR timesteps, replacing the local gradient $\nabla_{\vx_{t_i}} r(\hat{\vx}_0 \mid t_i)$ with terminal-image guidance $\nabla_{\vx_0} r(\vx_0)$ reduces the dominant low-SNR bias while preserving the low variance and low memory cost of the original first-order update. Thus, the experiment should not be interpreted as claiming that reward score distillation is the statistically optimal estimator, but rather as demonstrating that a lightweight SDS-style surrogate can substantially improve the practical first-order design without introducing additional rollout or memory overhead.

\subsection{Additional Experiments}
Figures~\ref{fig:auxiliary_network_useless}--\ref{fig:online-better-resnabdb} were obtained using the original codebases of Residual $\nabla$-DB~\cite{liu2025resnabladb} and VGG-Flow~\cite{liu2025vggflow}, with Aesthetic Score as the reward.

\section{Additional Results}
\label{appendix:additional-results}
\subsection{Ablations for First-Order Experiments}
\label{appendix:extended-ablations}

To isolate the contribution of the modified value-guidance estimator $\textcolor{blue}{\mathbf{\Psi}_{t_i}}$, we compare our full method against ablated variants that retain the baseline first-order estimator based on $\nabla_{\vx_t} r(\hat \vx_0)$ rather than $\nabla_{\vx_0} r(\vx_0)$.
In flow matching, our redesign changes both the estimator and the temporal weighting: it replaces the local Tweedie-based gradient with terminal-image reward gradients, and replaces the baseline weighting $\tilde\gamma(t_i)=(1-t_i)^2$ with a milder low-SNR schedule.
Since the original VGG-Flow paper already observed that a linear schedule can outperform $(1-t_i)^2$ in some settings, Fig.~\ref{fig:fo-ablate-estimator}(a, b) further compares against two linearized baselines to test whether our gains come solely from the weighting change.
\emph{Linear Baseline A} keeps the baseline estimator and uses $\tilde\gamma(t_i)=(1-t_i)$, while \emph{Linear Baseline B} keeps the baseline estimator and uses our scaled linear schedule.
Our full method still performs substantially better, showing that the improvement is not explained by temporal weighting alone, but critically depends on the estimator change itself.
In diffusion, Fig.~\ref{fig:fo-ablate-estimator}(c, d) shows the analogous comparison against a baseline that keeps the original local Tweedie-based estimator under the same modified weighting.
Overall, replacing $\nabla_{\vx_t} r(\hat \vx_0)$ with $\nabla_{\vx_0} r(\vx_0)$ consistently improves both reward efficiency and the reward--KL tradeoff in flow-matching and diffusion settings.

\begin{figure}[!tb]
\centering
\vspace{-1em}
\includegraphics[width=0.9\linewidth]{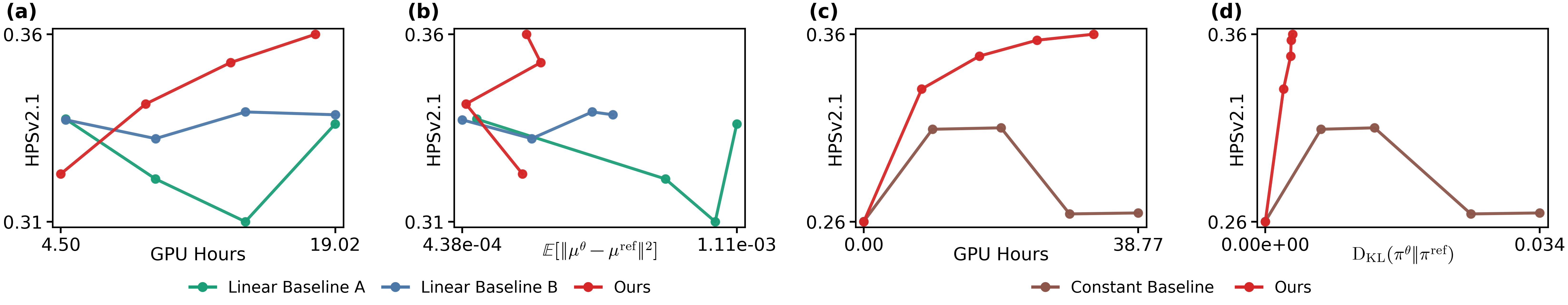}
\caption{\textbf{Ablating the first-order estimator.}
Replacing $\nabla_{\vx_t} r(\hat\vx_0)$ with $\nabla_{\vx_0} r(\vx_0)$ improves both reward efficiency and the reward--KL tradeoff.
In flow matching, we compare against two linearized baselines that keep the original local Tweedie-based estimator but adopt milder temporal weighting:
(a) reward vs.\ GPU hours;
(b) reward vs.\ KL.
In diffusion, we compare against the corresponding baseline with the original estimator under the modified weighting:
(c) reward vs.\ GPU hours;
(d) reward vs.\ KL.}
\label{fig:fo-ablate-estimator}
\vspace{-1em}
\end{figure}

\begin{figure}[!t]
\centering
\includegraphics[width=0.9\linewidth]{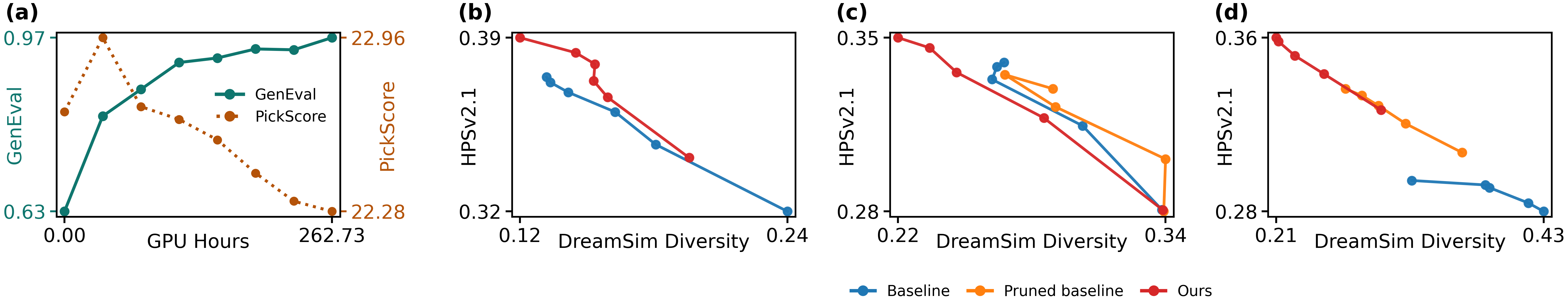}
\caption{\textbf{Auxiliary metrics suggest no obvious reward hacking.}
(a) PickScore during GenEval zeroth-order flow-matching fine-tuning. It improves initially and declines only modestly at later stages, rather than collapsing catastrophically.
(b--d) DreamSim diversity on HPSv2.1 for zeroth-order diffusion, first-order flow matching, and first-order diffusion, respectively.}
\label{fig:not-reward-hacking}
\end{figure}

\begin{table}[!t]
\centering
\small
\caption{\textbf{GenEval subtask breakdown.}
Our method improves performance broadly across subtasks, rather than only within a narrow subset of categories.
$^\dagger$: Results for SD3.5-M and TempFlow-GRPO are taken from Table~1 of \citet{he2025tempflowgrpotimingmattersgrpo}.}
\label{tab:geneval-subtasks}
\resizebox{\linewidth}{!}{
\begin{tabular}{lccccccc}
\toprule
\textbf{Model}
& \textbf{Overall}
& \textbf{Single Obj.}
& \textbf{Two Obj.}
& \textbf{Counting}
& \textbf{Colors}
& \textbf{Position}
& \textbf{Attr. Binding} \\
\midrule
SD3.5-M$^\dagger$
& 0.63
& 0.98
& 0.78
& 0.50
& 0.81
& 0.24
& 0.52 \\
TempFlow-GRPO$^\dagger$
& 0.97
& 1.00
& 1.00
& 0.96
& 0.95
& 0.99
& 0.91 \\
Ours
& 0.97
& 1.00
& 0.99
& 0.98
& 0.93
& 0.98
& 0.87 \\
\bottomrule
\end{tabular}
}
\end{table}

\begin{table}[!t]
\centering
\small
\caption{\textbf{Cross-reward evaluation of GenEval fine-tuning.}
Performance of the SD3.5-M base model and our final GenEval-fine-tuned checkpoint on the GenEval test prompts, measured using GenEval, CLIPScore, and PickScore.}
\label{tab:geneval-cross-reward}
\begin{tabular}{lccc}
\toprule
\textbf{Model}
& \textbf{GenEval}
& \textbf{CLIPScore}
& \textbf{PickScore} \\
\midrule
SD3.5-M
& 0.63
& 0.289
& 22.67 \\
Ours
& 0.97
& 0.316
& 22.28 \\
\bottomrule
\end{tabular}
\end{table}

\subsection{Reward Hacking}
\label{appendix:reward-hacking}

We next examine whether the redesigns introduced in Section~\ref{subsec:end-to-end-design} improve the target reward at the cost of substantially degraded sample quality or diversity.
Overall, we do not observe evidence that our improvements are driven by excessive reward hacking relative to the corresponding baselines.

For the GenEval zeroth-order flow-matching experiments, direct qualitative comparison to the baseline is unavailable because the original baseline checkpoints were not retained.
We report GenEval subtask results in Table~\ref{tab:geneval-subtasks}, which show that our method does not substantially underperform on any individual subtask.
We also report validation on unseen rewards (Table~\ref{tab:geneval-cross-reward}, Figure~\ref{fig:not-reward-hacking}(a)) as evidence against catastrophic model collapse.
The increase in CLIPScore indicates that GenEval improvement is not attributable to reward hacking.
PickScore exhibits a modest degradation at late stages of fine-tuning, consistent with the fact that PickScore captures aesthetics and general human preference in addition to prompt alignment, whereas GenEval favors simple and unambiguous images that are readily recognized by its external detector.

For other settings, we evaluate the tradeoff between reward and diversity by plotting reward--DreamSim Diversity~\cite{fu2023dreamsim} Pareto frontiers. Fig.~\ref{fig:not-reward-hacking}(b)--(d) show that our method is not worse than the baseline on this frontier, indicating that the reward gains do not come at the cost of a systematically worse diversity profile.
Taken together, these results suggest that the improvements from our redesigns reflect better optimization of the intended objective rather than merely exploiting weaknesses of the reward model.

\textbf{Remark on Fig.~\ref{fig:first_order_validation}(b).}
Although the non-pruned VGG-Flow baseline attains a slightly better reward--transition-drift Pareto frontier in Fig.~\ref{fig:first_order_validation}(b), we do not view this as evidence that the auxiliary network $g_\phi$ is essential.
First, $g_\phi$ may induce nontrivial interactions in optimization dynamics that make the unpruned model slightly easier to stabilize, but this modest benefit comes at a substantial systems cost: the non-pruned baseline requires roughly twice the VRAM and significantly longer wall-clock time, making it a poor tradeoff in practice.
In our view, comparable gains would be better pursued through simpler knobs such as a smaller learning rate or a larger effective batch size in the pruned baseline, rather than through an additional auxiliary network.
Second, among all baselines, VGG-Flow exhibited the largest seed-to-seed variability, so advantages at earlier stages of training should be interpreted cautiously.
Third, auxiliary evaluations based on DreamSim Diversity in Fig.~\ref{fig:not-reward-hacking}(d) show that the pruned and unpruned baselines are comparable, with the pruned baseline slightly better if anything.
Taken together, these observations support our main claim: $g_\phi$ is not a core ingredient of effective optimization, but an auxiliary component that adds substantial complexity for at most modest and inconsistent gains.

\section{Auxiliary Designs of Prior Work}
\subsection{Redundancy of Auxiliary Components}
\label{subsec:experiment-invalidate-auxiliary}
Our unified framework posits that the core optimization signal is fully captured by Eq.~\eqref{eq:master_equation}.
This implies that auxiliary components common in prior work—specifically refinement networks and backward losses—are theoretically redundant or even harmful.
We empirically validate that stripping these components simplifies the algorithm and reduces computational overhead without compromising performance; note that our findings are consistent with \citet{liu2025nablar2d3}.

\begin{figure}[!t]
\centering
\includegraphics[width=0.55\linewidth]{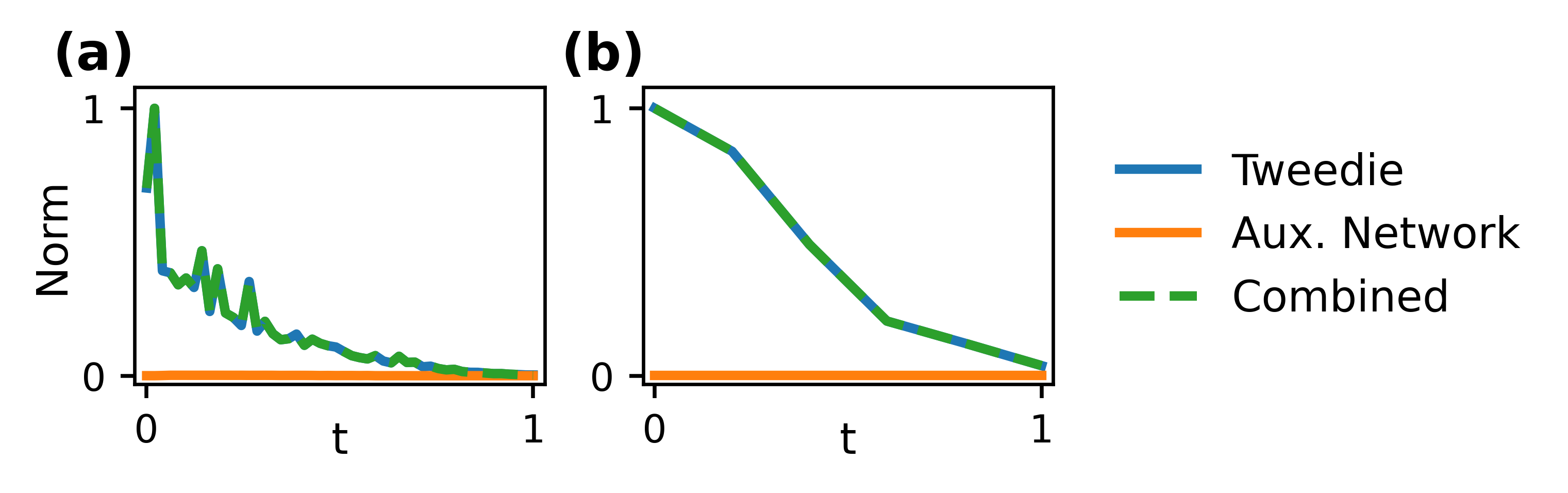}
\caption{\textbf{$\|g_\phi\|$ is negligible.} The learned refinement term $g_\phi$ is negligible compared to the analytic reward gradient throughout the entire generation process for both (a) Residual $\nabla$-DB, (b) VGG-Flow.}
\label{fig:auxiliary_network_useless}
\vspace{-1em}
\end{figure}

\begin{figure}[!t]
\centering
\includegraphics[width=\linewidth]{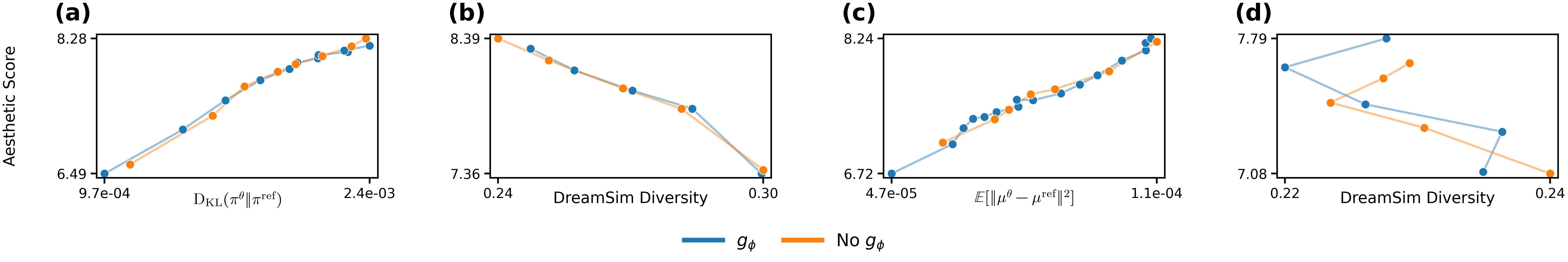}
\caption{\textbf{$g_\phi$ is redundant for training.} Removing $g_\phi$ reduces wall-clock time, while maintaining optimality on the tradeoff between reward and prior/diversity preservation. Averaged across three consecutive random seeds. (a, b) Residual $\nabla$-DB, (c, d) VGG-Flow\protect\footnotemark .}
\label{fig:combined_pareto_analysis}
\end{figure}
\footnotetext{Due to missing checkpoints, panels (c) and (d) are reported from different runs.}

\textbf{\texorpdfstring{Redundancy of Refinement Networks ($g_\phi$).}{Redundancy of Refinement Networks.}}
\label{subsec:refinement_redundant}
Residual $\nabla$-DB and VGG-Flow employ a learnable network $g_\phi$ to compensate for Tweedie approximation errors (i.e., by learning the residual $\nabla V - \nabla r(\hat\vx_0)$).
However, since $g_\phi$ is trained primarily via a terminal boundary loss ($g_\phi(\vx_0) \to 0$) without intermediate supervision, it collapses toward zero due to the spectral bias of neural networks~\citep{rahaman2019spectral}.
Empirical measurements in Fig.~\ref{fig:auxiliary_network_useless} confirm that the learned signal is negligible relative to the Tweedie gradient: $\|g_\phi\| \ll \frac{\tilde\gamma_t}{\alpha} \|\nabla r(\hat \vx_0)\|$.
Consequently, ablations in Fig.~\ref{fig:combined_pareto_analysis} demonstrate that removing $g_\phi$ maintains the Pareto frontier between reward maximization and prior preservation while strictly reducing training time.
This confirms that $g_\phi$ contributes computational overhead without algorithmic benefit.

\textbf{\texorpdfstring{Instability of Backward Loss ($\mathcal{L}_\text{backward}$).}{Instability of Backward Loss.}}
\label{subsec:backward_redundant}
Residual $\nabla$-DB incorporates a backward loss $\mathcal{L}_\text{backward}$ derived from detailed balance conditions.
As detailed in Appendix~\ref{appendix:residual_DB}, this term introduces high-order Jacobian dependencies that are analytically prone to variance explosion.
Results in Fig.~\ref{fig:forward-backward-unetreg} confirm that $\mathcal{L}_\text{backward}$ induces training instability when its weight is non-negligible, whereas the forward gradient alone is sufficient for stable convergence.
This suggests that the backward consistency loss is practically unnecessary, adding volatility rather than value.

\begin{figure}[!t]
\centering
\vspace{-0.5em}
\includegraphics[width=0.65\linewidth]{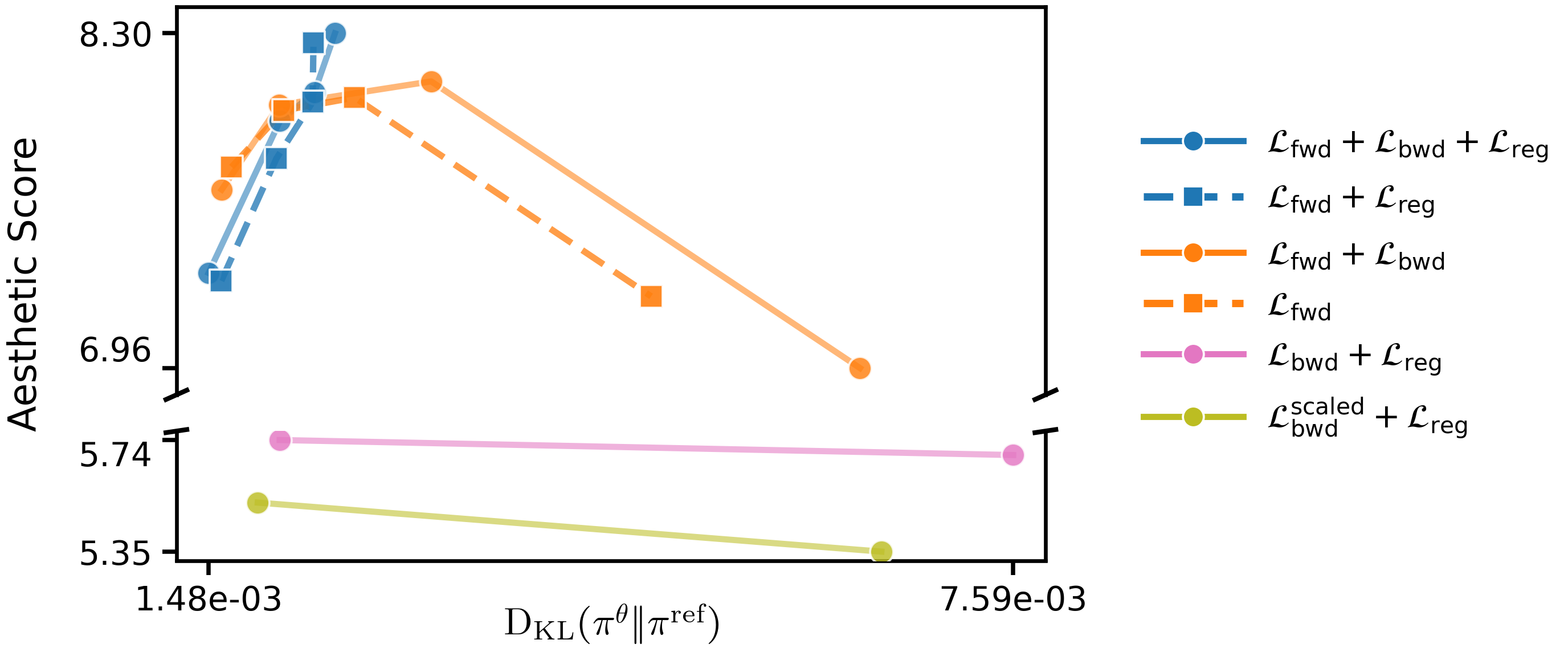}
\caption{\textbf{$\mathcal{L}_\text{backward}$ of Residual $\nabla$-DB does not contribute to effective training.}}
\label{fig:forward-backward-unetreg}
\vspace{-1em}
\end{figure}

\subsection{\texorpdfstring{Revisiting Existing Justifications for Attenuation Coefficients $\tilde\gamma_t$}{Revisiting Existing Justifications for Attenuation Coeffecients}}
Beyond the limitations of value function approximation—which can collapse when the terminal loss drives $g_\phi \approx \mathbf{0}$ (Section~\ref{subsec:refinement_redundant})—the specific attenuation schedules $\tilde\gamma_t$ used in prior work remain only partially justified.

\textbf{Residual $\nabla$-DB.} Motivated by GFlowNet theory, this method uses the forward-looking (FL) trick \cite{pan2023better} to decompose $\nabla_{x_t} \log \tilde F(x_t)$ into a single-step Tweedie estimate and a residual term $g_\phi$. However, this derivation relies on assumptions that are difficult to satisfy for complex image-generation rewards, such as computable intermediate energy or additive energy structure. Even if one interprets the “energy” more broadly as expected future reward, the resulting $\tilde\gamma_t$ schedule still appears heuristic. A simpler interpretation is our RSM perspective; the method attenuates the score-matching objective where the Tweedie estimate is less reliable.

\textbf{VGG-Flow.} VGG-Flow likewise introduces $\tilde\gamma_t$ and decomposes the gradient into a Tweedie term and a learnable correction $g_\phi$, but lacks theoretical justification for the weighting itself. At the same time, the paper explicitly draws inspiration from score distillation \cite{poole2023dreamfusion}, which is compatible with our score-matching interpretation.

\textbf{SQDF.} SQDF, motivated by Soft RL, interprets $\tilde\gamma_t$ as a credit-assignment coefficient, assigning little weight to high-noise regimes under the intuition that these states have lower signal-to-noise ratio and therefore weaker influence on the final sample. However, this intuition is not well aligned with the standard understanding of diffusion models, where low-SNR stages are often responsible for global semantic structure, and early denoising decisions can strongly affect the final reward \cite{choi2022perception}.
As such, monotonically decaying credit assignment may be suboptimal for semantics-aware rewards such as GenEval~\cite{ghosh2023geneval}.
Empirically, their intuition is also inconsistent with the strong performance of Flow-GRPO-Fast~\cite{liu2025flowgrpo}, which trains only on low-to-mid-SNR timesteps, yet outperforms its full-trajectory counterpart.

Overall, these observations suggest that interpreting $\tilde\gamma_t$ primarily as a damping factor for approximation error is more consistent with both prior empirical results and the known behavior of diffusion models.

\subsection{Additional Design Considerations}
\label{subsec:additional_design}
We adopt specific simplifications to isolate the effects of our proposed components.

\textbf{Reward Normalization.}
$\tilde\gamma(t)$ effectively controls the scale of the estimated reward gradient.
Residual $\nabla$-DB and VGG-Flow tune this quantity indirectly, through reward-specific choices of $\alpha$ or $C_2$, to improve stability.
Motivated by the canonical gradient form (Eq.~\eqref{eq:canonical-gradient-form}), we suggest that first-order methods may reduce this hyperparameter sensitivity by explicitly scaling $\tilde\gamma(t)$ with \(\frac{1}{\|\nabla r\|}\).

For zeroth-order methods, the reward-gradient estimator takes the form
$\frac{1}{\sigma_{t_i}} \, \mathbb{E}\!\left[r(\vx_0)\veps_{t_i}\right]$.
This perspective helps clarify why reward normalization can substantially improve optimization.
First, subtracting the group mean acts as a control variate.
Replacing \(r\) with \(r - \hat{\mu}_G\) reduces estimator variance, while also changing the optimization geometry: instead of merely inducing stronger or weaker attraction toward high- and low-reward regions, it produces push--pull dynamics relative to the group average, which is often easier to optimize.
Second, dividing by a scale term helps make the magnitude of the estimator more comparable across reward functions.
In particular, replacing \(r\) with
$
\hat A = \frac{r - \hat{\mu}_G}{\hat{\sigma}_G}
$
makes the scale of \(\mathbb{E}[\hat A \veps_{t_i}]\) roughly comparable across different rewards, and therefore eases hyperparameter tuning.
This same scale-normalization principle also helps explain the empirical success of the Flow-GRPO implementation, which normalizes rewards to be approximately in \([0,1]\).
More broadly, the results of \citet{choi2026rethinking} suggest that dividing by the average magnitude $\|r\|$, or by the sample mean $\|\hat \mu_G\|$ as a practical surrogate, can further stabilize training.

\textbf{Trust Regions.}
Trust-region mechanisms, including the quadratic penalty \(C_2\) and clipping, should be viewed as heuristics rather than essential components.

For example, GRPO-Guard effectively sets \(C_2(t_i)=0\) while remaining stable. This suggests that the RSM objective can sometimes be simplified further without sacrificing performance, although doing so appears to require nontrivial engineering.

A similar picture emerges for clipping. EPG reports that, in some settings, removing clipping does not hurt performance. In contrast, we find that in certain Flow-GRPO settings, naively increasing the clip range \(\xi\) destabilizes training.
Moreover, PCPO observes that standard PPO-style clipping can reduce the effective batch size and thereby harm performance \cite{lee2025pcpo}.
This issue is particularly severe in Flow-SDEs, where the singular behavior near \(t \approx 0\) causes nearly all samples to be clipped, as noted by \citet{wang2025grpoguard}.

Overall, these observations suggest that trust-region design remains an open heuristic space. Further simplification is possible in some regimes, but robust performance across settings likely depends on better-engineered heuristics.

\textbf{Dataset $\mathcal{D}$: Focus on Online Samples.}
While prior work augments training with offline replay buffers~\cite{kang2026sqdf,liu2025resnabladb}, empirical evidence suggests this offers minimal benefit.
SQDF reports negligible difference in performance, and our ablations with Residual $\nabla$-DB (Fig.~\ref{fig:online-better-resnabdb}) show that including past rollouts can actually degrade the Pareto frontier.
Thus, we train exclusively on online samples.

\begin{figure}[!t]
\centering
\vspace{-1em}
\includegraphics[width=0.35\linewidth]{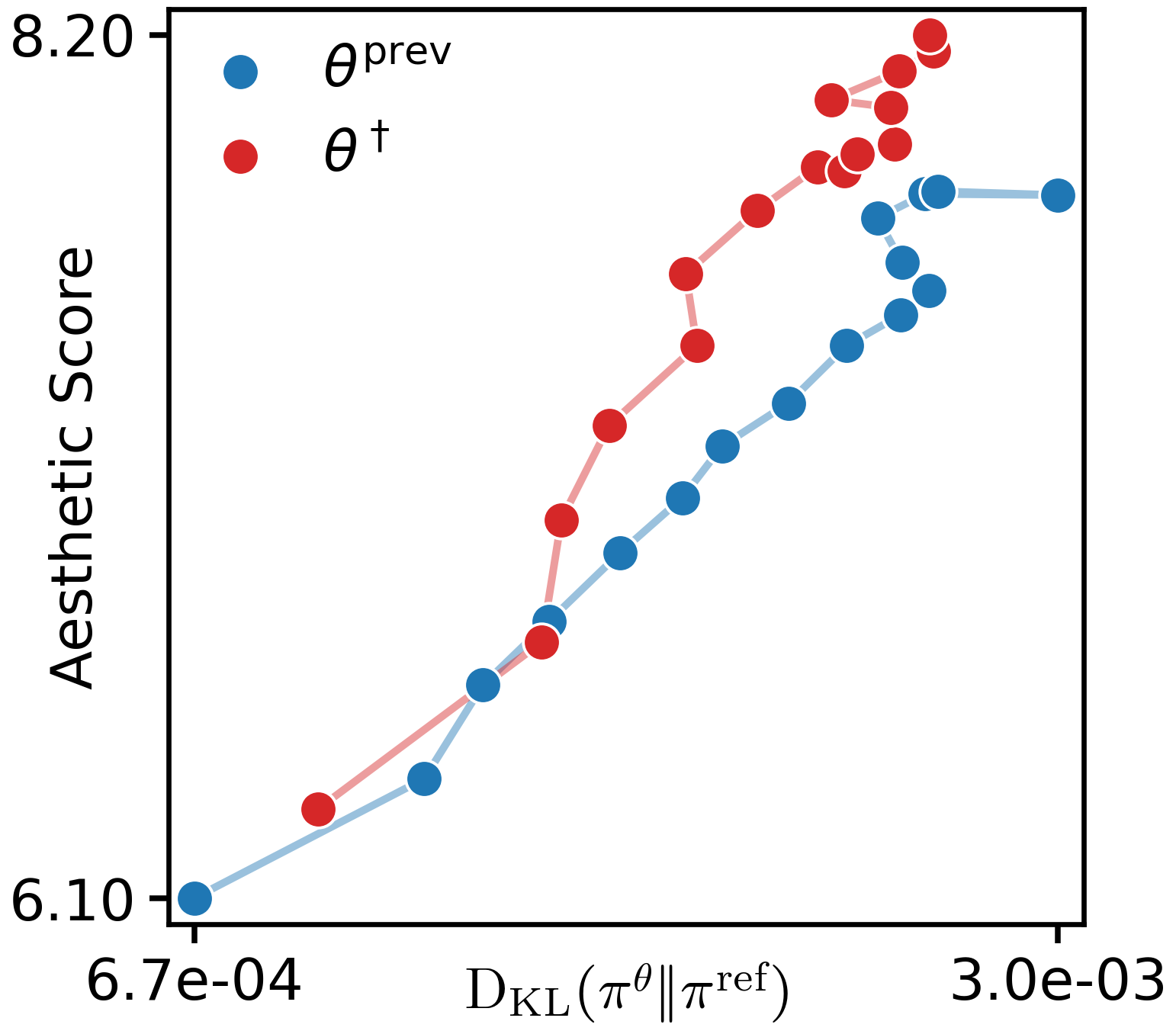}
\caption{\textbf{Online samples suffice.} Including past rollouts (offline buffer) does not improve the Pareto frontier for Residual $\nabla$-DB.}
\vspace{-1.5em}
\label{fig:online-better-resnabdb}
\end{figure}

\textbf{Incorporation of CFG in Training Loss.}
Although we use CFG in all experiments, it is not a necessary component.
Since CFG~\cite{ho2021classifierfree} amplifies the variance of score estimates, it can destabilize training~\cite{choi2026rethinking, xue2025dancegrpo, liu2025flowgrpo}.
In some settings, it may therefore be preferable to exclude CFG from the training loss.

\section{Extension of RSM Framework}
\subsection{Connection with Entropy-regularized Optimal Control Perspective}
\label{appendix:optimal_control}

We provide a summary of the main results from prior work~\cite{quer2024connecting, uehara2024fine}.
%
Consider the soft RL objective defined over path measures:
\begin{align}
 V(\tau) = \max_{u, \nu}\left[ \mathbb{E}_{\sP_{u,\nu}}[r(\vx_0)]- \alpha \mathcal{D}_\text{KL}(\sP_{u,\nu} (\tau) \| \sP_{\text{ref}} (\tau))\right].
\end{align}
Here, $\sP_{\text{ref}}$ denotes the path measure induced by the pretrained diffusion model, while $\sP_{u, \nu}$ denotes the path measure induced by a fine-tuned diffusion model and initial distribution $\nu$.
Note that fine-tuning a diffusion model can be interpreted as modifying the reference dynamics through a drift perturbation.
Such a perturbation admits a natural control interpretation, which enables reformulation of the soft RL objective as an entropy-regularized optimal control problem in continuous time.

To formalize this view, consider a reverse-time SDE with control $u:\sR^d\times\sR\rightarrow\sR^d$,
\begin{align}
    d\vx_t = [f(\vx_t, t)+ u(\vx_t, t)]dt + g(t)dw_t,
    \label{eq:sde_control}
\end{align}
where $f(\vx_t, t)$ denotes the drift coefficient including score function and $g(t)$ is an invertible diffusion coefficient.
Our goal is to characterize the optimal control $u$ that solves the aforementioned entropy-regularized problem.
To obtain an explicit expression of this path-space KL divergence, we invoke Girsanov's theorem.

\textbf{Girsanov's theorem.} Consider the diffusion process with path measure $\sP_{\text{ref}}$,
\begin{align}
    dX_t = f(X_t, t) dt + g(t)dW_t,  \label{eq:sde_base}
\end{align}
where $f(X_t,t)$ denotes the drift coefficient, $g(t)$ is an invertible diffusion coefficient, and $W_t$ denotes the Brownian motion under $\sP_{\text{ref}}$. Let $u(X_t,t)$ be an adapted process satisfying Novikov's condition.
Define the stochastic exponential
\begin{align}
    Y_T=\exp{\left(\int_0^T (g(t)^{-1}u(X_t,t))^\top dW_t - \frac{1}{2}\int_0^T \|g(t)^{-1} u(X_t,t)\|^2 dt \right)}.
\end{align}
As $Y_T$ is a positive martingale with unit expectation, it defines a new probability measure $\sP_u$ via
\begin{equation}
    \frac{d\sP_u}{d\sP_{\text{ref}}}=Y_T.
\end{equation}
Then, the process
\begin{align}
    W_t^u = W_t - \int_0^t g(s)^{-1} u(X_s, s) ds
    \label{eq:new_brownian}
\end{align}
is a Brownian motion under $\sP_u$, and the dynamics of $X_t$ become
\begin{align}
    dX_t = [f(X_t,t) + u(X_t,t)]dt + g(t)dW_t^u
\end{align}
Now, from the definition of path-space KL divergence, we get
\begin{align}
    \gD_{\text{KL}}(\sP_{u,\nu} || \sP_{\text{ref}} ) = \mathbb{E}_{\sP_{u, \nu}} \left[\log \frac{\sP_{u, \nu}}{\sP_{\text{ref}}}\right] = \mathbb{E}_{\sP_{u, \nu}} \left[ \log \frac{\nu}{\nu_{\text{ref}}} + \log \frac{\sP_{u, \nu}(\cdot|\vx_T)}{\sP_{\text{ref}}(\cdot|\vx_T)}  \right]
    \label{eq:decomposed_path_kl}
\end{align}
where the second equality follows from the factorization of the path measure into the initial distribution $\nu_{\text{ref}}=p_T^{\text{ref}}$ and the conditional dynamics. The first term corresponds to the discrepancy in the initial distributions, and the second term measures the deviation of the controlled dynamics from the reference dynamics conditioned on the initial state.
With fixed initial sample $\vx_T$, by the Girsanov theorem, we obtain
\begin{align}
    \mathbb{E}_{u, \nu} \left[\log \frac{\sP_{u, \nu}(\cdot|\vx_T)}{\sP_{\text{ref}}(\cdot|\vx_T)}\right] &=  \mathbb{E}_{u, \nu} \left[\int_0^T \frac{u(\vx_t, t)^\top}{g(t)}d\vw_t - \frac{1}{2} \int_0^T \|\frac{u(\vx_t, t)}{g(t)}\|^2 dt\right]\nonumber\\
    &= \mathbb{E}_{u, \nu} \left[\int_0^T \frac{u(\vx_t, t)^\top}{g(t)}\left(d\vw^u+\frac{u(\vx_t,t)}{g(t)} dt\right) - \frac{1}{2} \int_0^T \|\frac{u(\vx_t, t)}{g(t)}\|^2 dt\right]\nonumber\\
    &=\mathbb{E}_{u, \nu} \left[\frac{1}{2} \int_0^T \|\frac{u(\vx_t, t)}{g(t)}\|^2 dt\right],
\end{align}
where the second equality holds due to Eq.~\eqref{eq:new_brownian} and the third equality holds as $\vw_t^u$ is $\sP_{u, \nu}$-martingale.
In consequence, the soft RL objective function is rewritten as
\begin{align}
    \max_{u, \nu} \mathbb{E}_{\sP_{u, \nu}} [r(\vx_0)] -\alpha \mathbb{E}_{\sP_{u, \nu}}\left[\frac{1}{2}\int_0^T \|\frac{u(\vx_t, t)}{g(t)}\|^2 dt + \log \frac{\nu}{\nu_{\text{ref}}}\right].
\end{align}
Under the perspective of optimal control problem, this is equivalent to
\begin{align}
    \min_{u, \nu} \frac{\alpha}{2} \mathbb{E}_{\sP_{u, \nu}} \underbrace{\left[ \int_0^T \|\frac{u(\vx_t, t)}{g(t)}\|^2 dt \right]}_{\text{running cost}} + \mathbb{E}_{\sP_{u, \nu}} \underbrace{[-r(\vx_0)]}_{\text{terminal cost}} + \mathbb{E}_{\sP_{u, \nu}} \underbrace{\left[\log \frac{\nu}{\nu_{\text{ref}}} \right]}_{\text{initial cost}}.
    \label{eq:oc_softrl}
\end{align}
The initial cost appears because the initial distribution is treated as an optimization variable, which is not standard in classical stochastic optimal control formulation.

\textbf{Value function and Optimal control.} \citet{uehara2024fine} has shown that the marginal density induced by Eq.~\eqref{eq:sde_control} with optimal control and optimal initial distribution obtained from Eq.~\eqref{eq:oc_softrl} is 
\begin{align}
    p_t^\star(\vx_t) = \frac{1}{Z_{\text{tar}}} p_t^{\text{ref}}(\vx_t) \exp{\left(\frac{V^\star_t(\vx_t)}{\alpha}\right)}
\end{align}
where the optimal value function satisfies 
\begin{align}
    \exp{\left(\frac{V_t^\star(\vx_t)}{\alpha} \right)} = \mathbb{E}_{\vx_0 \sim p^{\text{ref}}(\cdot \mid \vx_t)} \left[ \exp{\left(\frac{r(\vx_0)}{\alpha}\right)} \right]
\end{align}
and the optimal control is defined by
\begin{align}
    u^\star(\vx_t, t) =  \frac{g^2(t)}{\alpha}\nabla_{\vx}V^\star_t(\vx) \big|_{\vx=\vx_t} = g^2(t) \nabla_{\vx} \log \mathbb{E}_{\vx_0 \sim p^{\text{ref}}(\cdot \mid \vx)} \left[ \exp{\left(\frac{r(\vx_0)}{\alpha}\right)}\right] \Big|_{\vx=\vx_t} \propto
    \textcolor{red}{\mathbf{\Psi}^\star_{t}}(\vx_{t}).
\end{align}
We emphasize that fine-tuning a diffusion model can be interpreted as perturbing the drift term via a control input. In this view, the fine-tuning model effectively learns to predict $\vs_t^{\text{ref}} + u^\star$, which corresponds to the target score function in the RSM framework.
The optimal control perspective further provides an alternative route to estimating the value function using classical tools from optimal control theory, such as adjoint sampling.

\subsection{Reward-weighted Regression} \label{appendix:reward-weighted-regression}
Reward-weighted regression (RWR) provides a closely related regression-based perspective on online reward fine-tuning. Rather than regressing toward the value-guided optimal score in Eq.~\eqref{eq:optimal-decomposition}, RWR reweights the regression target of the pretraining loss according to the reward of generated samples. This structure closely resembles reward-weighted maximum likelihood estimation, a classical formulation of Soft RL, but is not exactly equivalent; hence, it is neither an exact Soft RL objective nor an exact instance of RSM.

This perspective encompasses seemingly different online fine-tuning constructions. DiffusionNFT~\cite{zheng2025diffusionnft} can be decomposed into an advantage-weighted regression term toward the forward velocity target together with an old-policy anchor. Meanwhile, an extension of DRaFT-LV~\cite{clark2024draft} yields a first-order form of online RWR by directly differentiating the reward-weighted regression objective.

In particular, the first-order formulation (DRaFT-LV) makes explicit the Jacobian that propagates the reward-weighted regression target through the generative model. As shown in the following subsections, DiffusionNFT can be viewed as a simplified realization of this structure that locally approximates the Jacobian by a scaled identity matrix, clarifying both its connection to first-order reward fine-tuning and its distinction from exact RSM reductions.

\subsubsection{DiffusionNFT}
\label{appendix:temp-diffusionnft}

We next examine DiffusionNFT~\cite{zheng2025diffusionnft} through the lens of our unified formulation.
Although its objective is written as a preference-weighted combination of two squared velocity-matching losses, its effective structure is easier to interpret after rewriting it as a regression objective with an explicit old-policy anchor.
This reveals that DiffusionNFT simultaneously pulls the current velocity field toward the forward target $\vv_t^{\mathrm{fwd}}$ according to the clipped advantage, while regularizing it toward the reference velocity field $\vv_t^{\theta^\dagger}$.

\begin{equation}
\begin{aligned}
\label{eq:diffusionnft:orig}
\mathcal{L}_{\mathrm{NFT}}(\theta)
=
\mathbb{E}_{\vx_0 \sim p^{\theta^\dagger},\;\veps \sim \gN(\mathbf{0},\mathbf{I})}
&[\frac{1+c}{2}\,
\Big\|(1-\beta)\vv_t^{\theta^\dagger} + \beta \vv_t^\theta - \vv_t^{\mathrm{fwd}}\Big\|^2 \\
&+\frac{1-c}{2}\,
\Big\|(1+\beta)\vv_t^{\theta^\dagger} - \beta \vv_t^\theta - \vv_t^{\mathrm{fwd}}\Big\|^2 ],
\end{aligned}
\end{equation}
where $c=\mathrm{clip}_1(\hat A(\vx_0)) \in [-1,1]$ and $\vv_t^{\mathrm{fwd}}=\veps-\vx_0$.
Here, $\theta^\dagger$ denotes the old policy used to sample $\vx_0$, while $\theta$ denotes the current policy being optimized.

Differentiating Eq.~\eqref{eq:diffusionnft:orig} w.r.t.\ $\theta$ and collecting terms yields
\begin{equation}
\label{eq:diffusionnft:grad}
\nabla_\theta \mathcal{L}_{\mathrm{NFT}}(\theta)
=
\mathbb{E}\!\left[
2\beta\Big(\beta \vv_t^\theta + (c-\beta)\vv_t^{\theta^\dagger} - c\,\vv_t^{\mathrm{fwd}}\Big)
\cdot \nabla_\theta \vv_t^\theta
\right].
\end{equation}

Therefore, up to $\theta$-independent constants, Eq.~\eqref{eq:diffusionnft:orig} is equivalent to
\begin{equation}
\label{eq:diffusionnft:target}
\begin{aligned}
    \mathcal{L}(\theta)
&= \mathbb{E} \left[ \beta^2 \left\| \vv_t^\theta - \left( \vv_t^{\theta^\dagger} + \frac{c}{\beta} (\vv_t^\text{fwd} - \vv_t^{\theta^\dagger}) \right) \right\|^2 \right].
\end{aligned}
\end{equation}
This form shows that DiffusionNFT trains $\vv_t^\theta$ toward an advantage-dependent interpolation between the old-policy velocity $\vv_t^{\theta^\dagger}$ and the forward velocity target $\vv_t^{\mathrm{fwd}}$.
To make the two components explicit, we use the identity
\begin{equation}
\|y-((1-\lambda)a+\lambda b)\|^2
=\lambda\|y-b\|^2 +(1-\lambda)\|y-a\|^2+C
\end{equation}
where C is independent of $y$.
Applying the identity to Eq.~\eqref{eq:diffusionnft:target} with $\lambda=\tfrac{c}{\beta}$, we see that up to $\theta$-independent constants,
\begin{equation}
\label{eq:diffusionnft:decomposed}
\mathcal{L}_{\mathrm{NFT}}(\theta)
\;\equiv\;
\mathbb{E}\!\left[
\beta c \big\|\vv_t^\theta-\vv_t^{\mathrm{fwd}}\big\|^2
+
\beta(\beta-c)\big\|\vv_t^\theta-\vv_t^{\theta^\dagger}\big\|^2
\right].
\end{equation}
Thus, DiffusionNFT decomposes into a reward-weighted regression term toward the forward target $\vv_t^{\mathrm{fwd}}=\veps-\vx_0$ and an old-policy anchor toward $\vv_t^{\theta^\dagger}$.
The relative strength of these two terms is controlled by the clipped advantage $c$ and the coefficient $\beta$.

\textbf{Connection to Reward-weighted MLE.}
Eq.~\eqref{eq:diffusionnft:decomposed} shows that DiffusionNFT consists of a reward-weighted regression term toward the forward target and an old-policy anchor.
In parallel, Soft RL admits a reward-weighted MLE formulation, whose objective reduces to a reward-weighted denoising regression loss.
This suggests viewing RWR as a practical approximation to reward-weighted MLE~\cite{uehara2024understandingreinforcementlearningbasedfinetuning}.
From this perspective, DiffusionNFT can be viewed as a practical approximation to Soft RL, and is therefore closely related to RSM. We hypothesize that this approximation may also be the source of its weaker optimization stability. This may explain why DiffusionNFT relies on EMA policy updates in practice, since EMA updates serve as a well-established remedies with favorable convergence guarantees in RL literature~\cite{kakadelangford2002,schulmantrpo2015}.

\subsubsection{DRaFT-LV}
\label{appendix:draft-lv-rwr}

We next show that DRaFT-LV~\cite{clark2024draft} suggests a first-order analogue of online reward-weighted regression. The contrast between DiffusionNFT and DRaFT-LV parallels the distinction between policy gradients and DRaFT.

DRaFT-LV is introduced as a low-variance extension of DRaFT-1.
After generating an image $\vx_0$, it forward-noises $\vx_0$ multiple times and differentiates the reward through a one-step denoising prediction.
In the notation of \citet{clark2024draft}, DRaFT-LV fixes $K=1$, so the reward gradient takes the form
\begin{equation}
\nabla_\theta r(\hat\vx_0)
=
(\nabla_\theta \hat\vx_0)^\top
\nabla_{\hat\vx_0} r(\hat\vx_0)
=
-\frac{\sigma_1}{\alpha_1}
(\nabla_\theta \veps_1^\theta)^\top
\nabla_{\hat\vx_0} r(\hat\vx_0).
\label{eq:draft-lv-original-gradient}
\end{equation}
Multiple independently noised copies of the same generated $\vx_0$ are used to average this gradient and reduce its variance.

However, this original construction is specialized to the final denoising step.
In particular, it evaluates the reward through a one-step Tweedie prediction and contains no explicit reference-model regularization.
We consider a natural extension that retains DRaFT-LV's forward-noising construction while allowing arbitrary timesteps:
\begin{equation}
    \vx_{t_i}
    =
    a_{t_i}\vx_0+b_{t_i}\veps,
    \qquad
    \veps\sim\mathcal N(\mathbf 0,\mathbf I),
    \qquad
    t_i\sim p(t),
\label{eq:draft-lv-generalized-noising}
\end{equation}
where, importantly, $\vx_0$ is \emph{not} detached from the computation graph.
Consequently, the terminal reward gradient can be propagated to the forward-noised state,
\begin{equation}
    \nabla_{\vx_{t_i}} r(\vx_0),
\end{equation}
rather than evaluating the reward gradient at a local Tweedie prediction.
This avoids the target-point approximation underlying local first-order estimators.

Combining this first-order reward direction with reference-model regularization yields the generalized update
\begin{equation}
\begin{aligned}
    g
    =
    2(\nabla_\theta \vv_{t_i}^\theta)^\top
    \left[
        (\vv_{t_i}^\theta-\vv_{t_i}^{\mathrm{ref}})
        +
        \frac{1}{\alpha_{\mathrm{KL}}(t_i)}
        \nabla_{\vx_{t_i}} r(\vx_0)
    \right],
\end{aligned}
\label{eq:draft-lv-generalized-gradient}
\end{equation}
where $\alpha_{\mathrm{KL}}(t_i)$ is a timestep-dependent coefficient controlling the relative strength of reward guidance and reference regularization.
We treat this coefficient as a design choice rather than directly inheriting the potentially ill-conditioned scaling induced by the forward-noising schedule.

Eq.~\eqref{eq:draft-lv-generalized-gradient} is the first-order counterpart of the generalized online-RWR gradient.
Online RWR uses the reward-weighted forward-regression direction
\begin{equation}
    r(\vx_0)
    \big(
        \vv_{t_i}^{\theta}-\vv_{t_i}^{\mathrm{fwd}}
    \big),
\end{equation}
whereas the first-order construction replaces this zeroth-order direction with
\begin{equation}
    -\frac{1}{\alpha_{\mathrm{KL}}(t_i)}
    \nabla_{\vx_{t_i}}r(\vx_0).
\end{equation}
Thus, the zeroth-order comparison Flow-GRPO versus online RWR has a direct first-order analogue: DRaFT estimates reward guidance by differentiating through the reverse sampling trajectory, while the generalized DRaFT-LV construction obtains the reward direction through forward noising of on-policy samples.

This perspective also clarifies the relationship to the common Jacobian approximation used in score-distillation-style methods.
Replacing the exact pullback Jacobian relating $\vx_{t_i}$ and $\vx_0$ by an identity-like approximation,
$\mathbf J\approx\mathbf I$, converts the exact first-order reward direction into a local terminal-image reward-gradient surrogate.
This Jacobian approximation is therefore the first-order analogue of the forward-noising simplification underlying online RWR methods.

\subsection{Inference-Time Alignment}
\label{appendix:inference_time_scaling}

While the Reward Score Matching framework primarily addresses weight updates during training, its core principles are isomorphic to inference-time alignment strategies.
DNO~\cite{tang2025inferencetime} steers diffusion models without parameter fine-tuning by directly optimizing the intermediate noise variables $\veps_t$ to maximize a reward.
To handle non-differentiable rewards, DNO employs the \emph{Hybrid2} method, estimating the gradient of the reward with respect to the noise via zeroth-order perturbations.

We demonstrate that this approach is functionally identical to applying the zeroth-order value guidance estimator to the noise space.
Recall that zeroth-order methods update model parameters $\theta$ using the Monte Carlo estimator:
\begin{equation}
    \frac{\sigma_{t_i}}{\alpha \Omega(t_i)} 
    \mathbb{E}_{\vx_{t_{i-1:0}}\sim p_{t_{i:1}}^\star,\veps_{t_{i:1}} \overset{\mathrm{iid}}{\sim} \mathcal{N}(\mathbf{0}, \mathbf{I})}[r(\vx_0)\veps_{t_i}].
\end{equation}
In parallel, DNO updates the latent variable $\vz$ using the estimator (Eqs.~(20, 21) in \citet{tang2025inferencetime}):
\begin{equation}
\mathbb{E}_{\veps}\Big[{\frac{1}{\sigma}}\big(r(D_\theta(\vz+\sigma\veps)) - r(D_\theta(\vz))\big)\big(D_\theta(\vz+\sigma\veps) - D_\theta(\vz)\big)\Big] \cdot \nabla_\vz D_\theta(\vz),
\end{equation}
where $D_\theta$ denotes the denoising network and $\sigma$ denotes the degree of perturbation (not to be confused with variance schedule $\sigma_{t_i}$).
Note that the update above is approximately equal to (Eqs.~(18, 19) in \citet{tang2025inferencetime}):
\begin{equation}
\mathbb{E}_{\veps}\Big[{\frac{1}{\sigma}}\big(r(\vx + \sigma\veps)-r(\vx) \big) \veps\Big] \cdot \nabla_\vz D_\theta(\vz)
\end{equation}
up to a scaling factor.
This simplified form reveals a striking structural equivalence: both methods rely on a reward-weighted noise term ($\approx r \cdot \veps$) to estimate the descent direction.
This implies that training techniques, such as branching and signal-aware scheduling, can be repurposed to improve inference-time alignment.

\subsection{AWM}
Advantage Weighted Matching (AWM)~\cite{xue2025advantageweightedmatchingaligning} also connects diffusion alignment to score matching. However, its objective is fundamentally different from ours. AWM proposes a new advantage-weighted matching objective that brings RL post-training closer to the pretraining loss. By contrast, our goal is not to reinterpret RL fine-tuning as pretraining, but to show that a broad class of \emph{existing} reward-based fine-tuning methods already admit a common Reward Score Matching (RSM) form.

This distinction is central. AWM introduces a new training direction, whereas our contribution is a unifying analysis of prior methods. In particular, we show that methods derived from seemingly different perspectives reduce to a shared RSM structure, and that their primary differences are algorithmic---most notably, how value guidance is estimated and incorporated. To the best of our knowledge, this is the first work to provide such a unification of existing reward-based fine-tuning methods. Accordingly, AWM is best understood as an orthogonal and complementary direction, not a precursor to the perspective developed in this paper.

\section{Qualitative Results}
\label{appendix:qualitative-results}

This section presents qualitative comparisons between baseline methods and our improved variants from the \textbf{Validation Across Settings} experiments (Section~\ref{subsec:end-to-end-design}). The baselines—Residual $\nabla$-DB, VGG-Flow, PCPO, and TempFlow-GRPO—correspond to Figs.~\ref{fig:qualitative-resnabdb}–\ref{fig:qualitative-tempflowgrpo}, respectively. Additional experimental details are provided in Appendix~\ref{appendix:end-to-end-design-details} and summarized in Table~\ref{tab:experiment-master}.

Examples are not cherry-picked to highlight severe reward hacking in the baselines, nor do we claim that our methods exhibit emergent safeguards against reward hacking. Instead, these results serve to illustrate that the observed reward improvements are not achieved through more aggressive exploitation of the reward function. For Fig.~\ref{fig:qualitative-tempflowgrpo}, we report only our method’s outputs, as the corresponding baseline checkpoints trained on GenEval are not publicly available and were not rerun.

\begin{figure}[!ht]
\centering
\includegraphics[width=\linewidth]{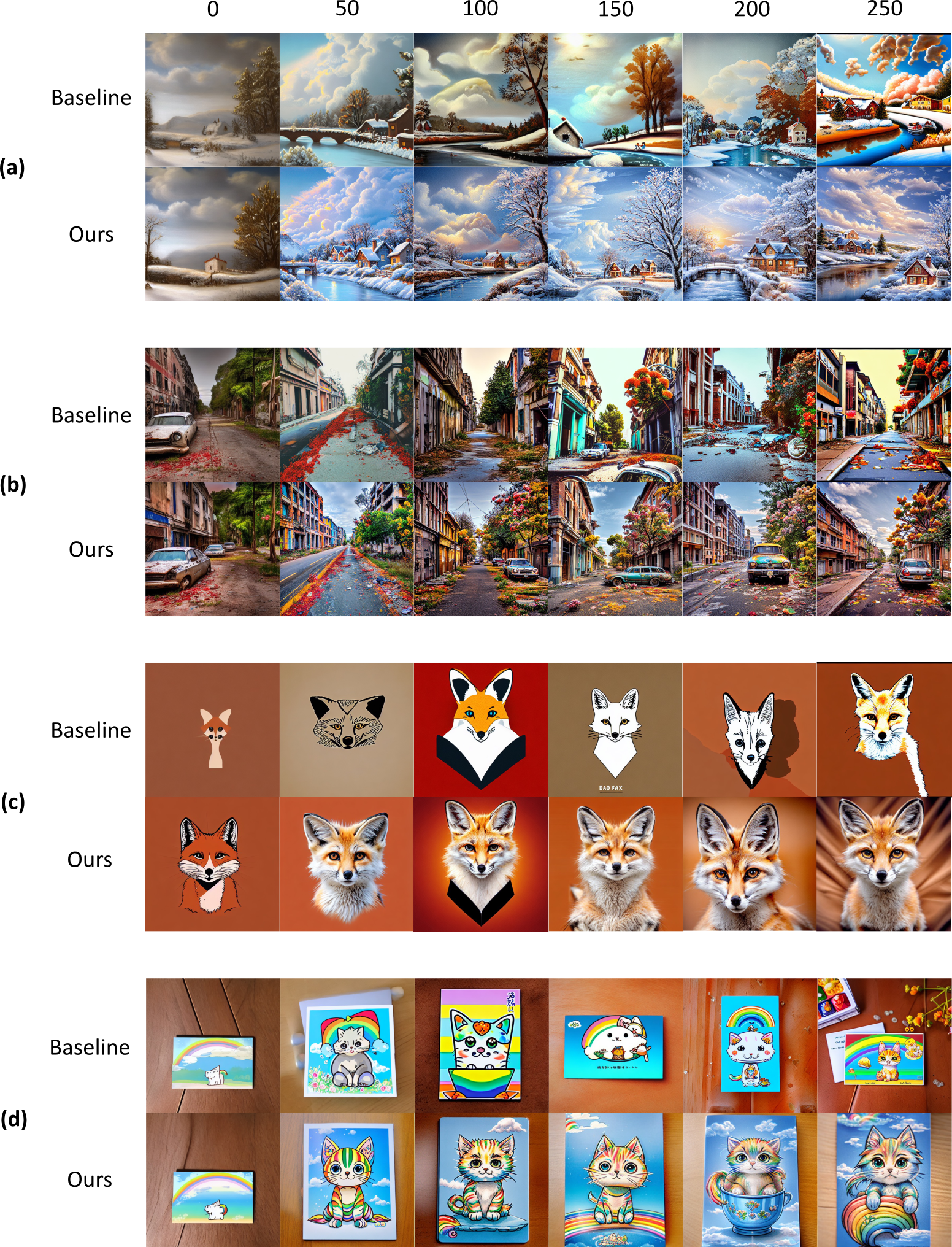}
\caption{\textbf{Qualitative comparisons on the First-order, SD1.5 Validation Setting.} Images are shown at checkpoints after 0, 50, 100, 150, 200, 250 training epochs. Prompts: {\em (a) A painting depicting a snowy winter scene featuring a river, a small house on a hill, and a dreamy cloudy sky; (b) abandoned city with ruined buildings, long deserted streets, cars aged by time, trees, flowers, scattered leaves, empty street, vibrant colors;
(c) portrait of a desert fox, professional, sleek, modern, minimalist, graphic, line art, simple background;
(d) A postcard with cute rainbow kitten in front of blue sky in the chibi-style of Studio Ghibli.}
}
\label{fig:qualitative-resnabdb}
\end{figure}

\begin{figure}[!ht]
\centering
\includegraphics[width=0.95\linewidth]{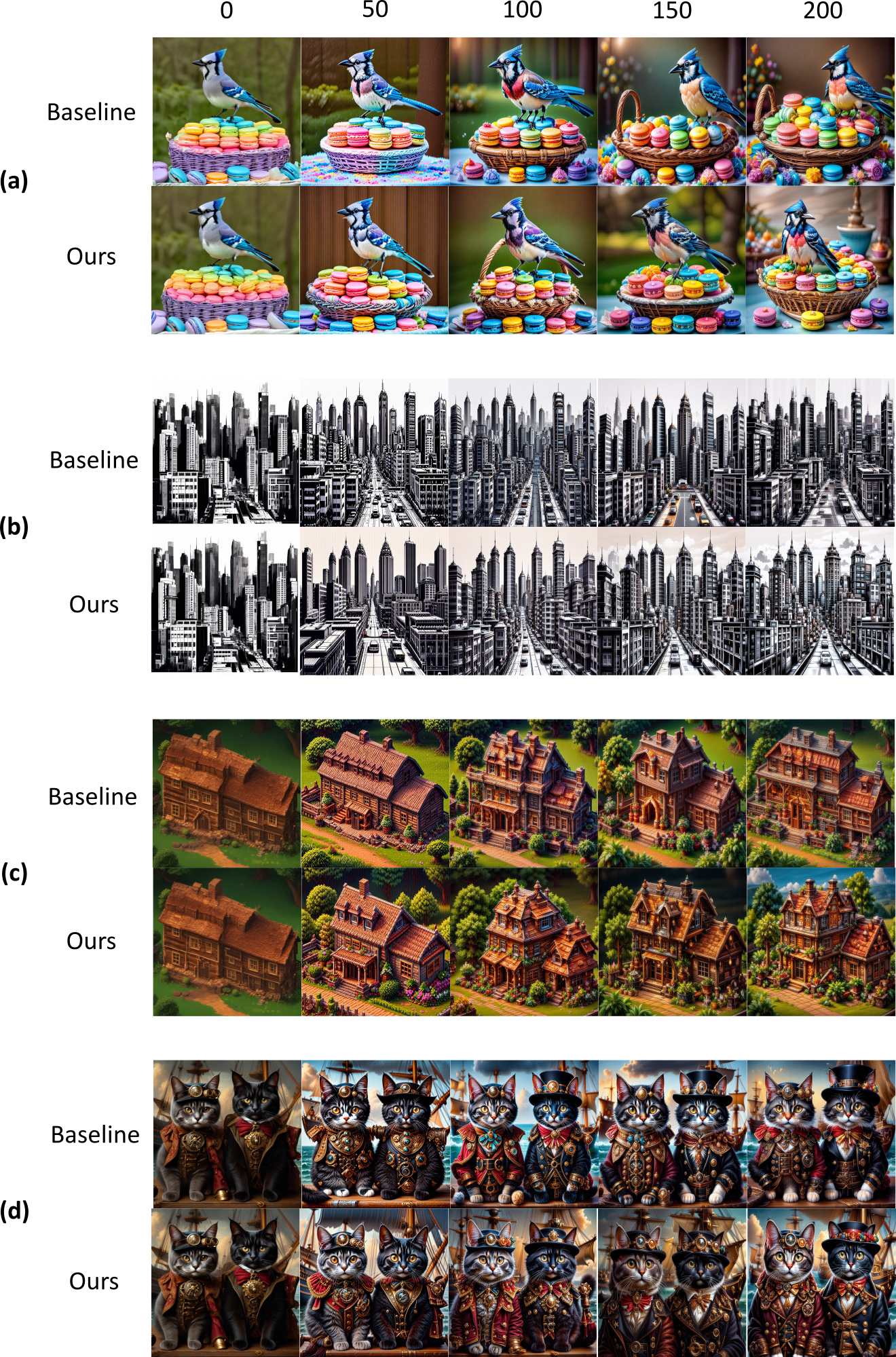}
\caption{\textbf{Qualitative comparisons on the First-order, SD3.5-M Validation Setting.} Images are shown at checkpoints after 0, 50, 100, 150, 200 training epochs. Prompts: {\em (a) A blue jay standing on a large basket of rainbow macarons; (b) an illustration of monochrome cityscape vector graphic;(c) isometric style farmhouse from RPG game, unreal engine, vibrant, beautiful, crisp detailed, ultradetailed, intricate; (d) Two cats, one grey and one black are wearing steampunk attire and standing in front of a ship in a heavily detailed painting.}
}
\label{fig:qualitative-vggflow}
\end{figure}

\begin{figure}[!ht]
\centering
\includegraphics[width=\linewidth]{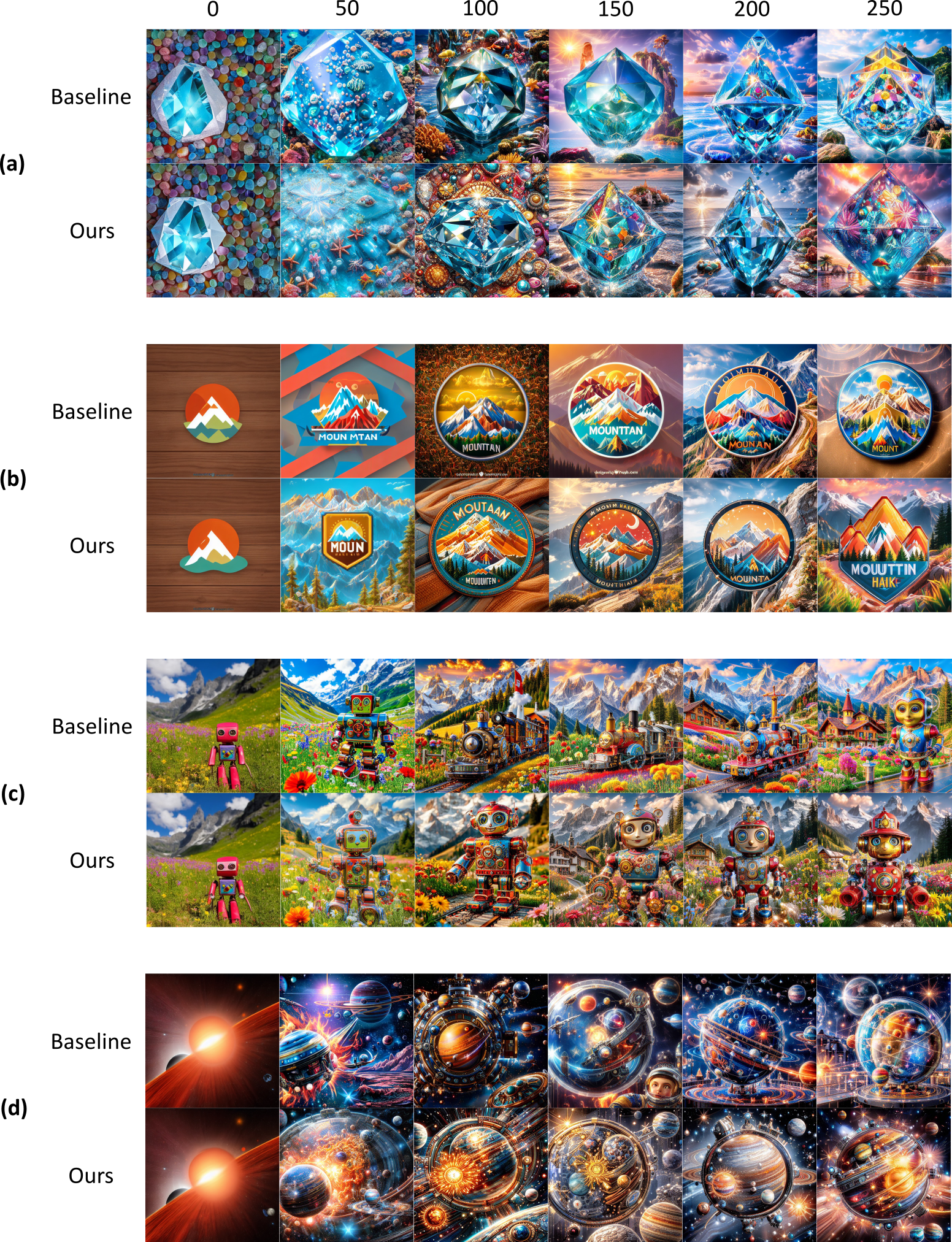}
\caption{\textbf{Qualitative comparisons on the Zeroth-order, SD1.5 Validation Setting.} Images are shown at checkpoints after 0, 50, 100, 150, 200, 250 training epochs. Prompts: {\em (a) A photograph of a giant diamond gem in the ocean, featuring vibrant colors and detailed textures;
(b) logo of mountain, hike, modern, colorful, rounded, 2d concept;
(c) A colorful tin toy robot runs a steam engine on a path near a beautiful flower meadow in the Swiss Alps with a mountain panorama in the background;
(d) A mechanical planet amidst a space with exploding stars.}
}
\label{fig:qualitative-pcpo}
\end{figure}

\begin{figure}[!ht]
\centering
\includegraphics[width=\linewidth]{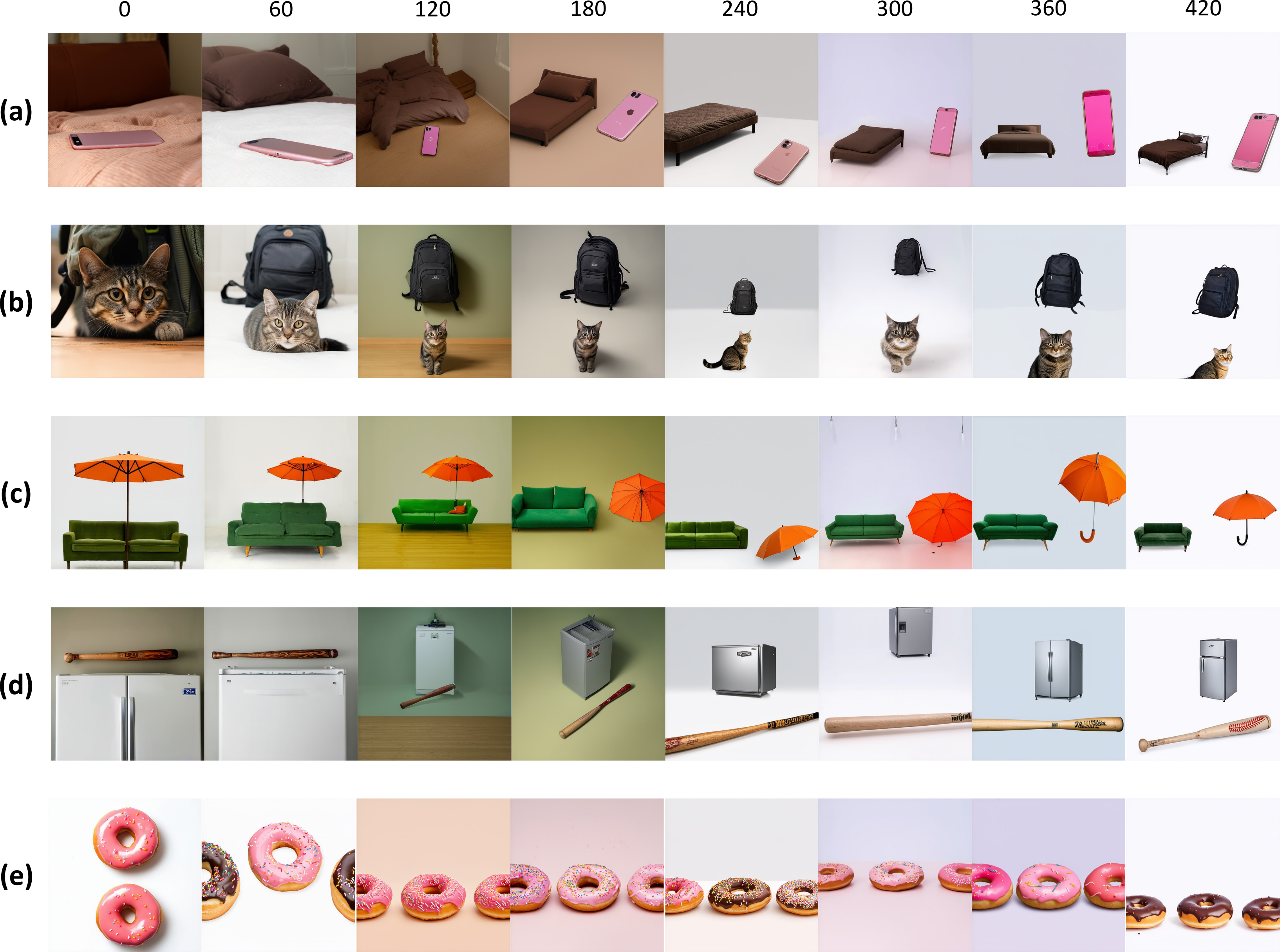}
\caption{\textbf{Qualitative results on the Zeroth-order, SD3.5-M Validation Setting.} Images are shown at checkpoints after 0, 60, 120, 180, 240, 300, 360, 420 training epochs. Prompts:  {\em (a) a photo of a brown bed and a pink cell phone;
(b) a photo of a cat below a backpack;
(c) a photo of a green couch and an orange umbrella;
(d) a photo of a refrigerator above a baseball bat;
(e) a photo of three donuts.}
}
\label{fig:qualitative-tempflowgrpo}
\end{figure}

\end{document}